\documentclass[]{bytedance_seed}

\usepackage{amsmath,amsfonts,bm}

\def\1{\bm{1}}

\DeclareMathAlphabet{\mathsfit}{\encodingdefault}{\sfdefault}{m}{sl}
\SetMathAlphabet{\mathsfit}{bold}{\encodingdefault}{\sfdefault}{bx}{n}

\newcommand{\trinorm}[1]{{\left\vert\kern-0.25ex\left\vert\kern-0.25ex\left\vert #1 
   \right\vert\kern-0.25ex\right\vert\kern-0.25ex\right\vert}}

\newcommand{\cE}{\mathcal{E}}

\newcommand{\pll}{\kern 0.56em/\kern -0.8em /\kern 0.56em}

\newcommand{\R}{\mathbb R}
\newcommand{\E}{\mathbb E}
\newcommand{\dd}{\,\mathrm d}
\newcommand{\tr}{\operatorname{tr}}
\newcommand{\diag}{\operatorname{diag}}
\newcommand{\af}{\alpha_{\mathrm{fast}}}
\newcommand{\as}{\alpha_{\mathrm{slow}}}
\newcommand{\rf}{\rho_{\mathrm{fast}}}
\newcommand{\rs}{\rho_{\mathrm{slow}}}
\newcommand{\bd}{\gamma}
\newcommand{\atpg}{a_{\mathrm{top}}}
\newcommand{\aperp}{a_{\mathrm{full}}}
\newcommand{\SMS}{\mathsf S}
\newcommand{\KMS}{\mathsf K}
\newcommand{\FMS}{\mathsf F}

\usepackage{graphicx,tikz}
\usetikzlibrary{arrows.meta,positioning}

\usepackage[toc,page,header]{appendix}

\usepackage[utf8]{inputenc} 
\usepackage[T1]{fontenc}    
\usepackage{titletoc}       
\usepackage{booktabs}       
\usepackage{amsfonts}       
\usepackage{nicefrac}       
\usepackage{microtype}      
\usepackage{xcolor}         

\usepackage{framed}
\usepackage{footnote}
\usepackage{algorithm}
\usepackage{algorithmic}
\usepackage{enumerate}

\usepackage{graphicx}
\usepackage{etoc}

\usepackage{minitoc}

\usepackage{amsmath}
\usepackage{amssymb}
\usepackage{mathtools}
\usepackage{amsthm}
\usepackage{multirow}
\usepackage{subcaption}
\theoremstyle{plain}
\newtheorem{theorem}{Theorem}[section]
\newtheorem{proposition}[theorem]{Proposition}
\newtheorem{lemma}[theorem]{Lemma}
\newtheorem{corollary}[theorem]{Corollary}
\theoremstyle{definition}

\newtheorem{assumption}[theorem]{Assumption}

\newtheorem{remark}[theorem]{Remark}

\usepackage{wrapfig}
\usepackage{xspace}
\usepackage{cancel}

\newcommand{\muonours}{{MuonM}\xspace}

\colorlet{shadecolor}{orange!15}

\title{Curvature-Conditioned Multiscale Momentum with Sphere Constraints for LLM Pretraining}
\author[1,2]{Shuchen Zhu}
\author[1]{Yuxin Fang}
\author[2, \dagger]{Mingze Wang}
\author[2, \dagger]{Kun Yuan}

\affiliation[1]{ByteDance Seed}
\affiliation[2]{Peking University}

\contribution[\dagger]{Corresponding authors}
\abstract{
Pretraining accounts for a large fraction of the total computational cost in LLM training. 
However, noise-dominant gradients and the highly ill-conditioned loss landscape bring severe challenges. 
Although modern adaptive optimizers such as AdamW and Muon have achieved great success in large-scale pretraining, their reliance on gradient normalization  offers limited mitigation of the ill-conditioned curvature. The progress along flat directions (eigen-directions of small eigenvalues), which dominates the final loss reduction, remains relatively slow. 
To enhance training dynamics along flat directions, we propose a curvature-conditioned multiscale momentum method with sphere constraints,  delivering steady acceleration in LLM pretraining. 
This multiscale momentum, applied only along flat directions, pairs a slow-decay component for noise reduction with a fast-decay component for rapid curvature adaptation, harnessing their complementary strengths.  
Crucially, we employ a sphere constraint technique to prevent  parameter inflation and excessively rapid effective learning rate decay that would otherwise arise from a naive combination. 
Extensive experiments show that the proposed method significantly accelerates Muon across diverse architectures (dense, MoE) and model sizes (0.12B--2.3B parameters). 
Theoretically, we verify the acceleration effect and provide insight into the design principles underlying the flat-direction multiscale momentum.
}

\date{\today}
\correspondence{\text{Mingze Wang} at \email{mingzewang.math@gmail.com}, Kun Yuan at \email{kunyuan@pku.edu.cn}.}

\begin{document}
\maketitle


\section{Introduction}
\label{sec:intro}

The optimizer is one of the most critical factors for training efficiency and continued scaling in the pretraining stage of large language models (LLMs). Several inherent challenges complicate LLM pretraining. Because the total number of training tokens vastly exceeds the per-step batch size, \textit{noise dominates the stochastic gradient}: its magnitude far exceeds the true gradient signal. Moreover, the loss landscape is \textit{highly nonconvex and ill-conditioned}, characterized by numerous flat directions alongside a few sharp ones. Notably, \textit{the training dynamics along the flat directions primarily drive the final loss reduction} \cite{wen2025understanding,song2025does}. Since the learning rate is constrained by the largest eigenvalue along the sharp directions and the noise level \cite{cohen2021gradient,andreyev2025edgestochasticstabilityrevisiting}, progress along these crucial flat directions remains slow, hampering overall training and potentially compromising stability.

Adaptive optimizers such as AdamW \cite{loshchilov2017decoupled,kingma2014adam} and Muon \cite{jordan2024muon,liu2025muon} partially alleviate ill-conditioning through coordinate- or block-wise normalization of gradient statistics. They have demonstrated stability and favorable scaling behavior in industrial-scale LLM training. However, this normalization (whitening) only partially compensates for the ill-conditioned curvature, and progress along the flat directions leaves room for improvement. Recent works~\cite{wang2024improving,wang2025the,zhou2025bsfaleveragingsubspacedichotomy,zhu2026acceleratingllmpretrainingflatdirection} attempt to address this by estimating flat directions and manually amplifying the learning rate along them. Yet this approach also \textbf{amplifies the stochastic noise along flat directions}, yielding limited improvement and potentially destabilizing training. This motivates the question:

\begin{center}
   \textbf{How to accelerate training dynamics along flat directions by reducing gradient noise?} 
\end{center}

We address this with a momentum mechanism tailored to flat directions. Our contributions are summarized as follows:

\begin{itemize}
    \item \textbf{Algorithm design.} We propose a sphere-constrained multiscale momentum method that combines fast- and slow-decay momentum components to enhance training dynamics along the flat directions. The multiscale momentum is \textit{curvature-conditioned}: it activates in flat directions for robust noise reduction and deactivates in sharp directions to preserve stability. On its own, however, it induces norm inflation and effective learning rate collapse. To address this, we introduce   \textit{sphere constraints to weights with parallel transport  on each momentum component}, which prevents norm inflation and ensures proper inheritance of momentum across varying tangent spaces. Together, they \textit{unlock the potential of multiscale momentum} in LLM pretraining.

    \item \textbf{Empirical evaluation.} We evaluate our proposed method (denoted by appending the suffix ``M'' to the base optimizer, e.g., \muonours)  across a wide range of LLM pretraining settings, covering both dense and MoE architectures, model sizes from 0.12B to 2.3B, and diverse learning rate schedules (cosine decay and WSD). The results show that \muonours consistently outperforms Muon with lower terminal loss. Notably, these loss gains persist under extended token budgets, suggesting promising scalability to  longer training horizons.

    \item \textbf{Theoretical analysis.} Recent studies~\cite{lin2024scaling,li2025functional,meterez2026defensequadraticmodel} show that appropriately designed linear regression models can serve as a useful proxy for LLM pretraining dynamics, capturing key trends in loss evolution. Within this framework, we analyze how slow-decay momentum injection along flat directions accelerates optimization.  Our analysis reveals that it \textit{enables faster forgetting of accumulated noise while improving the signal learning rate, thereby  leading to  lower terminal loss}. These findings support the design intuition behind the multiscale momentum for flat directions.
\end{itemize}

\subsection{Notations}
For a matrix $M\in \mathbb{R}^{m\times n}$, we define the matrix sign function as $\text{msign}(M) = M(M^\top M)^{-1/2} = (MM^\top)^{-1/2}M$, where the matrix inverse square root is interpreted via the Moore-Penrose pseudo-inverse if $M$ is not full rank. Let $\|\cdot\|_F$ denote the Frobenius norm. The operator $\mathsf{RowScale}_r(M)$ scales each row of $M$ to have a Frobenius norm of $r$, and $\mathsf{Norm}_r(M)=rM/\|M\|_F$. For a positive semi-definite matrix $H$, the induced semi-norm of a vector $v$ is defined as $\|v\|_H = \sqrt{v^\top Hv}$. The symbol $\odot$ represents
the element-wise product.  We use the term \textit{curvature} in an intuitive sense to refer to Hessian-related information (e.g., eigenvalue distributions).  $\lesssim$, $\gtrsim$, and $\asymp$ denote $\mathcal{O}$, $\Omega$, and $\Theta$, respectively, to absorb absolute constants.

\section{Preliminaries and Related Works}
\paragraph{Landscape geometry  and training dynamics in deep learning.}
Empirical studies ~\cite{pmlr-v97-ghorbani19b,yao2020pyhessian,zhang2024why,pmlr-v267-tang25d} show that the loss landscape in deep learning is highly ill-conditioned. 
Specifically, the Hessian possesses a small proportion of large positive eigenvalues (the corresponding eigenspaces are referred to as the \textit{sharp directions})  and a massive number of near-zero and negative  eigenvalues (corresponding to the \textit{flat directions}).
Recent studies~\cite{song2025does, cohen2025understanding, wen2025understanding} show that this landscape structure drives the \textbf{fast–slow training dynamics}: oscillatory behavior along sharp directions without divergence, and slow but persistent progress along flat directions that  ultimately dominates the final loss reduction.
Thus, accelerating optimization along flat directions is critical, and this is precisely the focus of our work.

\paragraph{Efficient adaptive optimizer design for LLM pretraining.}
Adaptive optimizers have become indispensable in large-scale LLM pretraining.  AdamW~\cite{loshchilov2017decoupled} has become the de facto choice, owing to its simple and effective coordinate-wise preconditioning. However, its diagonal preconditioner inherently limits its capacity to capture cross-parameter correlations within the full curvature.  Muon~\cite{jordan2024muon,liu2025muon} instead applies matrix-level preconditioning to  parameter blocks, exhibiting favorable convergence speed and scaling behavior. Recent methods, such as~\cite{vyas2025soap,pethick2025training,li2026normuon}, further develop structured matrix preconditioners to  better approximate curvature-related information. Our work is complementary to this line of research. Instead of improving the preconditioner structure, we focus on the design of momentum.

\paragraph{Improved momentum mechanism for adaptive optimizers.} Heavy Ball momentum~\cite{POLYAK19641} has been widely  adopted in modern adaptive optimizers to facilitate training. Building upon it, Nesterov-type momentum~\cite{xie2023adan,yuan2025mars} offers additional steady performance improvements. More recently, multiscale momentum has been introduced to smooth and accelerate the dynamics of momentum methods. AggMo~\cite{lucas2019aggregatedmomentumstabilitypassive} linearly combines multiple momentum buffers with different decay rates to dampen oscillations. AdEMAMix~\cite{pagliardini2025the} extends this  by integrating multiscale momentum with Adam's preconditioner in training neural networks. GPA~\cite{defazio2026smoothingdilocoprimalaveraging} employs extra primal averaging to smooth the iterates of  Nesterov momentum. SODA~\cite{pethick2026optimisticdualaveragingunifies} and EMA-Nesterov~\cite{yau2026emanesterovstabilizingnesterovslookahead} introduce an extra momentum via primal extrapolation and  exponential moving average of look-ahead directions  respectively to achieve acceleration in pretraining, while accommodating various base optimizers. Different from these approaches which are motivated primarily by deterministic  optimization, our work focuses on its role in variance reduction under noise-dominated stochastic environments, as well as on distinct behaviors at different time scales across regions of varying curvature (sharp and flat). In addition, we provide a unified high-order ODE formulation that offers a continuous-time perspective for understanding these different algorithmic forms in Appendix \ref{sec:multiscale-ode-unification}.

\paragraph{Controlling the effective learning rate: from weight decay to sphere constraints.}
Normalization layers in modern LLMs introduce redundant degrees of freedom, making the learning rate alone insufficient to fully characterize the parameter update speed. 
For instance, an RMSNorm layer of the form $W_1\bigl(\gamma \odot \mathrm{Norm}(W_2 x)\bigr)$ exhibits a radial redundancy structure $W_1 \operatorname{diag}(\gamma)$ and norm‑scale invariance with respect to $W_2$ (when $W_2$ is learnable). 
This naturally motivates treating radial and angular updates separately. 
Recent studies~\cite{roburin2022sphericalperspectivelearningnormalization,kosson2024rotational,wen2026fantasticpretrainingoptimizersii,zhou-zhou-gu-2026-elr} regard the angular update rate $\|\Delta W\|_F / \|W\|_F$ as the \textit{effective learning rate}, which has been empirically shown to play a dominant role in shaping the loss trajectory~\cite{li2025efficienthyperparametertuningtrajectory,xiao2026hyperballfreelunch,liu2026effectivelearningrategoverns}. 
The radial evolution determines the weight norm and, in turn, affects the effective learning rate. 
Weight decay can be interpreted as controlling the noise‑induced inflation of the weight norm, thereby preventing the effective learning rate from decaying too rapidly. Reference \cite{defazio2025gradientsrapidlyincreasenear} uses corrected weight decay coefficients to stabilize the weight norm. 
To achieve finer control, some recent approaches~\cite{xie2026controlledllmtrainingspectral,wen2026fantasticpretrainingoptimizersii,an2026demystifyingmanifoldconstraintsllm,hagele2026improvingneuralnetworktraining} replace weight decay with  direct norm constraints to weights, and they have demonstrated promising advantages over weight decay in LLM pretraining. 

\section{Illustrative Examples Motivating Flat-Direction Multiscale Momentum}
In this section, we illustrate how varying momentum decay rates affect stochastic optimization dynamics. 

\subsection{Momentum Accelerates Flat-Direction Optimization Through Variance Reduction}\label{sec:momentum_var_red}
We begin with a simple case: at iteration $k$, the stochastic gradient follows $g_k\sim \mathcal{N}(\mu,\Sigma^2)$ with a constant true gradient $\mu$, and the heavy-ball momentum updates as  $m_k=(1-\alpha)m_{k-1}+\alpha g_k$.   The coefficient $1-\alpha$ measures the momentum decay rate: \textbf{smaller  $\alpha$ corresponds to slower decay, and larger $\alpha$ gives faster decay.}   The  estimator $ m_k$ has the same mean $\mu$ and a reduced equilibrium covariance $\frac{\alpha}{2-\alpha}\Sigma^2$. 
Thus, using a smaller $\alpha$ (slower momentum decay) allows $m_k$ to estimate the true gradient signal $\mu$ with lower variance.

However, the above reasoning requires the true gradient to be fixed or nearly unchanged. 
Since gradient variation is directly governed by its derivative (i.e., the Hessian), slow momentum decay is only safe in flat directions, where small Hessian eigenvalues lead to slowly changing gradients. 
Applying overly slow momentum decay to sharp directions can introduce bias and cause instability. 
A simple example of minimizing $f(x,y)=x^2-0.1y$ with Gaussian  gradient noise illustrates this clearly (Figure~\ref{fig:ex-quadratic-river-valley}). 
Figure~\ref{fig:ex-quadratic-river-valley} (a,b) show that compared to a baseline with $\alpha=0.1$, \textit{using a smaller $\alpha$ (slower decay) for the flat direction $y$ yields a faster and less noisy trajectory, whereas the same slow decay for the sharp direction $x$ amplifies oscillations.}  
While increasing the learning rate for $y$ also promotes progress, it simultaneously amplifies noise and leads to an oscillatory and relatively more unstable acceleration (Figure~\ref{fig:ex-quadratic-river-valley} (c)). 
These observations motivate the use of anisotropic momentum decay to better suppress noise:
\begin{equation}\label{alg:anisotropic_momentum_decay}
    \begin{aligned}
        m_k&=(I-\alpha_{\text{sharp}}\mathcal{P}_k-\alpha_{\text{flat}}\mathcal{Q}_k)m_{k-1}+g_k, \\
        w_{k+1}&=w_k-\eta_k m_k,
    \end{aligned}
\end{equation}
where $1>\alpha_{\text{sharp}}\gg\alpha_{\text{flat}}>0$ denote the decay coefficients for sharp and flat directions, and $\mathcal{P}_k,\mathcal{Q}_k$ denote projections onto the sharp and flat subspaces at the $k$-th step, respectively.

\begin{figure}
    \centering
    \includegraphics[width=1\linewidth]{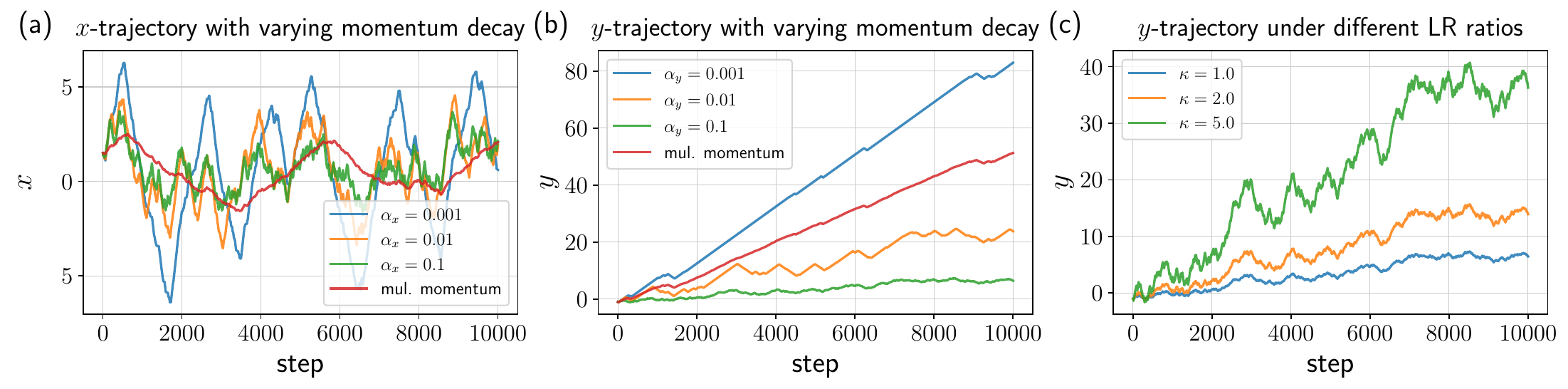}
    \caption{Coordinate-wise trajectories of different momentum methods for minimizing $f(x,y)=x^2-0.1y$ with the step size $\eta=0.01$ and sign-momentum preconditioning. The $x$- and $y$-axes correspond to the sharp and flat directions, respectively. (a, b) employ various momentum decay coefficients $\alpha_x,\alpha_y$ for $x,y$ respectively. ``mul. momentum'' refers to \eqref{alg:flat-mts-base} with $(\alpha^{\text{fast}},\alpha^{\text{slow}},c)=(0.1,0.001,0.75)$ and a biased flat-direction projection $\mathcal{Q}_k (x,y)=\frac{1}{\sqrt{0.2^2+0.8^2}}(0.2x+0.8y)$. (c) amplifies the step size for $y$ by a factor of $\kappa$.}
    \label{fig:ex-quadratic-river-valley}
\end{figure}


\subsection{Flat-Direction Multiscale Momentum: Robust Variance Reduction against Projection Errors}
In practice, the acceleration provided by \eqref{alg:anisotropic_momentum_decay} is compromised by various imperfections, including errors in estimating the projection onto flat subspaces and unmodeled variations in the Hessian eigenspace. Projection estimation errors introduce a portion of the slow decay for momentum in sharp directions, thereby destabilizing training. At the same time, 
persistent rotation and alternating transitions of the Hessian eigenspaces inject large bursts of sharp-direction momentum into the flat subspaces. 
Due to the slow forgetting rate in flat regions (requiring $\Theta(\alpha_{\text{flat}}^{-1})$ steps), such biases can persist and cause oscillations.

To address this, we use multiscale momentum in flat directions, which linearly interpolates a fast-decay component $m^{\text{fast}}$ for quick adaptation to gradient and curvature changes and a slow-decay component $m^{\text{slow}}$ for variance reduction. At the same time, we retain the fast-decay momentum $m^{\text{fast}}$ to help maintain stability.  
This leads to the following algorithm:
\begin{equation}\label{alg:flat-mts-base}
    \begin{aligned}
        m_k^{\#}&=(1-\alpha_{\#})m_{k-1}^{\#}+\alpha_{\#}g_k, \quad {\#}\in\{\text{fast},\text{slow}\}, \\
        w_{k+1}&=w_k-\eta_k \bigl(c\mathcal{Q}_k  m^{\text{slow}}_k  +(1-c)  m^{\text{fast}}_k\bigr)
        \\&=w_k-\eta_k \bigl(\mathcal{Q}_k(c m^{\text{slow}}_k + (1-c)m^{\text{fast}}_k)+(1-c)\mathcal{P}_k m^{\text{fast}}_k\bigr),
    \end{aligned}
\end{equation}
where $\alpha_{\text{fast}}\gg\alpha_{\text{slow}}>0$ denote the momentum decay rates, $0<c<1$ is the interpolation coefficient, and $\mathcal{P}_k,\mathcal{Q}_k$ are   the projections onto sharp and flat subspaces. The  interpolation $c m^{\text{slow}}_k + (1-c)m^{\text{fast}}_k$ also yields lower noise in estimating the true gradient compared to $ m^{\text{fast}}_k$ 
  under the setting of Section~\ref{sec:momentum_var_red} (Appendix \ref{app:var-mts}). As illustrated in Figure \ref{fig:ex-quadratic-river-valley} (a, b) and Figure~\ref{fig:rosen}, with projection errors and varying curvature, the flat-direction multiscale momentum still yields \textit{less noisy acceleration along the flat direction while not amplifying oscillations in the sharp direction.}

  \begin{figure}[H]
    \centering
    \includegraphics[width=1\linewidth]{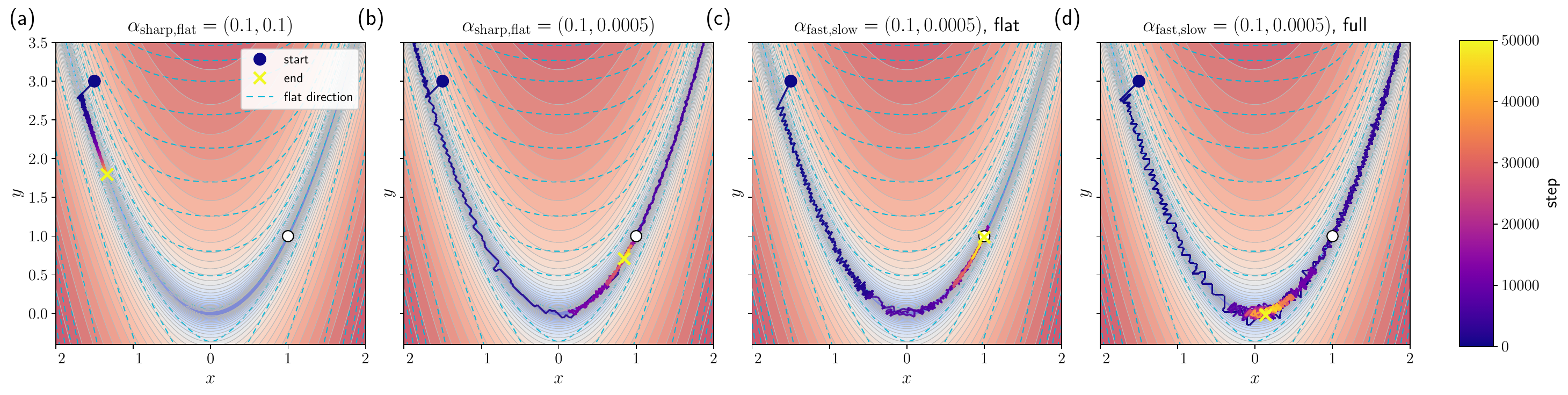}
    \caption{Optimization trajectories on $f(x,y)=(1-x)^2+1000(y-x^2)^2$ (global minima at $(1,1)$) under Gaussian noise. (a) Standard momentum progresses slowly.  (b) Anisotropic momentum \eqref{alg:anisotropic_momentum_decay} advances faster along flat directions but induces back-and-forth oscillations. (c) Flat-direction multiscale momentum \eqref{alg:flat-mts-base} with $c=0.5$ achieves stable acceleration. (d) Total-space multiscale momentum with $c=0.5$ leads to instability, confirming that this technique should be restricted to flat directions.}
    \label{fig:rosen}
\end{figure}


The intuition behind the robust variance reduction of flat-direction multiscale momentum   is as follows. Assume $m^{\text{fast}}_k=\nabla f(w_k)+b^{\text{fast}}_k+\xi^{\text{fast}}_k$, $m^{\text{slow}}_k=\nabla f(w_k)+b^{\text{slow}}_k+\xi^{\text{slow}}_k$ with bias $\|b^{\text{slow}}_k\|\gg\|b^{\text{fast}}_k\|$ (due to slower forgetting over longer gradient histories), zero mean noise $\|\xi^{\text{slow}}_k\|\ll\|\xi^{\text{fast}}_k\|$ and noise dominating the fast momentum such that $\|\xi^{\text{fast}}_k\|\gg\|b^{\text{fast}}_k\|+\|\nabla f(w_k)\|$. Then, for an appropriate choice of $c$, the expected   loss decrease conditioned on $w_k$ resulting from the $\mathcal{Q}_k$-subspace update can be approximated via a second-order Taylor expansion:
\begin{equation}
    \begin{aligned}
       &\mathbb{E}_k [f(w_{k
       +1})- f(w_{k})]|_{\mathcal{Q}_k}
       \\\approx& -\eta\mathbb{E}_k\left\langle \mathcal{Q}_k \nabla  f(w_k),\textcolor{orange}{c} \mathcal{Q}_km^{\text{slow}}_k + (1-c)\mathcal{Q}_km^{\text{fast}}_k\right\rangle\\&+ 
\frac{\eta^2}{2}\mathbb{E}_k(c m^{\text{slow}}_k + \textcolor{orange}{(1-c)}m^{\text{fast}}_k)^\top\mathcal{Q}_k\nabla^2 f(w_k)\mathcal{Q}_k(cm^{\text{slow}}_k +\textcolor{orange}{(1-c)} m^{\text{fast}}_k) 
\\\approx &\underbrace{-\eta \|\mathcal{Q}_k \nabla f(w_k)\|^2}_{\text{gradient descent}}+\underbrace{\textcolor{orange}{c}\eta \left\langle \mathcal{Q}_k\nabla f(w_k),- \mathcal{Q}_k b^{\text{slow}}_k \right\rangle}_{\text{bias injection}} + 
\underbrace{\frac{\textcolor{orange}{(1-c)^2}\eta^2}{2}\mathbb{E}_k(\xi^{\text{fast}}_k)^\top\mathcal{Q}_k\nabla^2 f(w_k)\mathcal{Q}_k\xi^{\text{fast}}_k}_{\text{noise variance}}.\nonumber
    \end{aligned}
\end{equation}
The noise-variance term is typically positive, since noise tends to accumulate along directions with larger Hessian eigenvalues \cite{wu2022the,song2025does,wang2026gradpower}. Assuming the bias 
 $\mathcal{Q}_kb^{\text{slow}}_k$ in $\mathcal{Q}_k$ subspace  is moderate relative to $\nabla f(w_k)$, we can choose suitable  $\textcolor{orange}{c<1}$ so that the bias injection is   negligible compared to the gradient-descent term, while the noise variance term is simultaneously suppressed, yielding an improved local loss reduction.  For the $\mathcal{P}_k$ subspace, we note that  $\mathbb{E}_k [f(w_{k
       +1})- f(w_{k})]|_{\mathcal{P}_k}\approx0$ as training typically lies in the Edge of Stability regime \cite{cohen2021gradient,andreyev2025edgestochasticstabilityrevisiting,andreyev2026momentum}. On the other hand, take $\textcolor{orange}{c>0}$ also reduces the  actual learning rate in the $\mathcal{P}_k$ 
  subspace, which helps decrease oscillations. We exclude slow momentum from $\mathcal{P}_k$, as slow-momentum bias in sharp directions can be much larger than the gradient signal, which would force $c$ to be vanishingly small and thereby restrict the acceleration in flat directions. Consequently, the overall effect is a net reduction in the local loss.

\section{Obstacles with Naive Multiscale Momentum: Parameter Inflation and Effective Learning Rate Collapse}\label{sec:obt-mts}

We now extend the idea of \eqref{alg:flat-mts-base} to the Muon optimizer for LLM pretraining. To estimate flat directions, we consider distinguishing them within each parameter block. Following \cite{zhu2026acceleratingllmpretrainingflatdirection}, we empirically identify the bottom 90\% singular spaces of the momentum matrix as flat directions, based on empirical evidence that the top singular spaces align well with the top eigenspaces of the Hessian. See implementation details in Appendix \ref{app:flat-proj}. For a transformer matrix $W\in \mathbb{R}^{m\times n} $, we extend the preconditioner-free algorithm \eqref{alg:flat-mts-base} by equipping  $m^{\text{fast}}$ with Muon preconditioning, while additionally incorporating the slow momentum $m^{\text{slow}}$ in flat directions.  The update rule \footnote{Here we set the coefficient of $G_k$ to 1. This choice differs from the EMA format in \eqref{alg:flat-mts-base} only by a constant factor and is thus essentially equivalent.} at the $k$-th iteration is
\begin{equation}\label{alg:ours-with-wd}
\begin{aligned}
M_k^{\#}&=(1-\alpha_{\#})M_{k-1}^{\#}+G_k, \quad{\#}\in\{\text{fast},\text{slow}\},
\\W_{k+1}&=(1-\lambda \eta_k)W_k-\eta_k\left(\mathsf{msign}((1-\alpha_{\text{fast}})M_{k}^{\text{fast}}+G_k)+\chi\mathcal{Q}_k ( \mathsf{RowScale 
}_{\min\left(1,\sqrt{\frac{n}{m}}\right)}(M_k^{\textrm{slow}}))\right), 
\end{aligned}
\end{equation}
where $\lambda$ is the weight decay coefficient, $\chi>0$ is the slow-momentum coefficient,  $\eta_k$ is the scaled learning rate \footnote{We use the RMS norm alignment rescaling  in \cite{liu2025muon}.}, and $\mathcal{Q}_k: \mathbb{R}^{m\times n}\to \mathbb{R}^{m\times n}$ is the estimated linear projection to flat directions. 
Rather than designing a specialized preconditioner for the slow momentum, we simply apply row-wise normalization to match the norm of the msign term. This approach is inspired by the near-zero inter-row Hessian observed in \cite{zhang2025adammini}, and it proves efficient in practice.


\begin{figure}[htbp]
    \centering
    \begin{subfigure}[b]{0.36\textwidth}
        \centering
        \includegraphics[width=\textwidth]{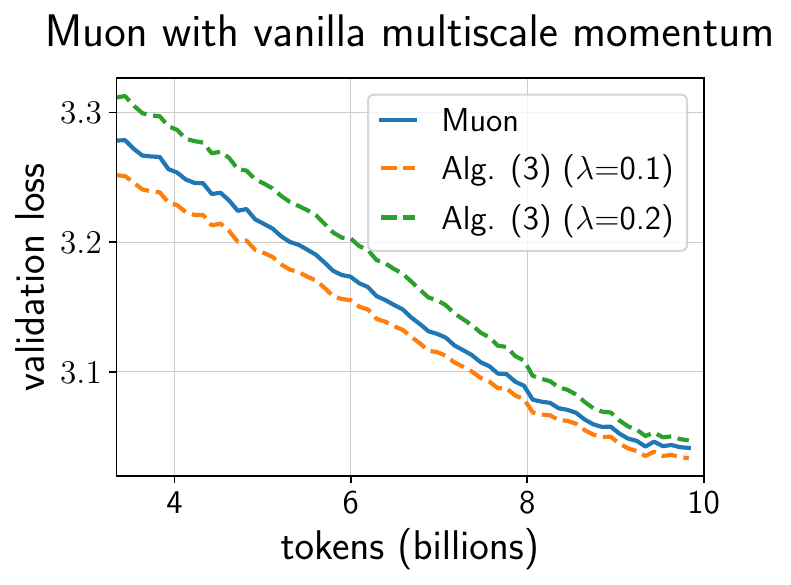}
        \label{fig:mts-wd-val}
    \end{subfigure}
     \hspace{0.3cm} 
    \begin{subfigure}[b]{0.36\textwidth}
        \centering
        \includegraphics[width=\textwidth]{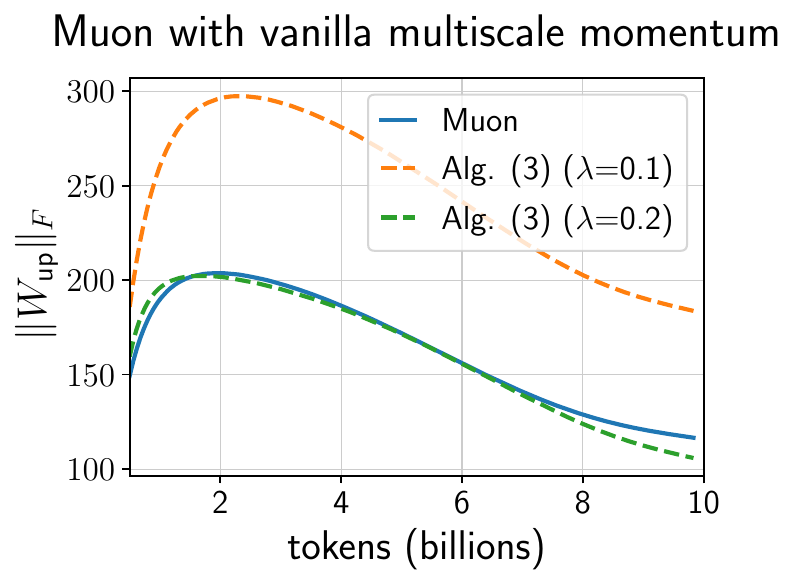}
        \label{fig:mts-wd-norm}
    \end{subfigure}
    \caption{Straightforward application of flat-direction multiscale momentum to Muon on a 0.12B dense model. (Left) Validation loss; (Right) Frobenius norm evolution of an up projection block during training.}
    \label{fig:mts-wd-120m}
\end{figure}

The results on 0.12B dense model are in Figure~\ref{fig:mts-wd-120m} (left), where both the Muon baseline and   the value of $\chi$ and $\alpha_{\text{slow}}$ have been thoroughly tuned (see details in Appendix \ref{app:exp-obt-mts}). The inclusion of slow momentum along flat directions yields a certain reduction in loss. However, the gain diminishes progressively, resulting in a terminal loss improvement of less than 0.01. This behavior is attributed to the amplified weight norm $\|W_k\|_F$ by slow momentum  (e.g., the norm of an up projection block in Figure~\ref{fig:mts-wd-120m}, right)  and in turn accelerates the decay of the effective learning rate  $\frac{\|\Delta W_k\|_F}{\|W_k\|_F}$. We provide an illustrative theoretical explanation to this phenomenon in Appendix \ref{app:norm-inflation},  which reveals its intrinsic link to the slow momentum.  This mechanism also explains the evolution of the loss gain: a sharp initial loss drop driven by the rapid effective learning rate decay in the early phase, followed by the baseline gradually catching up in the later phase, though the flat-direction slow-momentum acceleration still retains a marginal edge.

A natural remedy would be to use a larger weight decay coefficient $\lambda$ (default 
0.1) 
to restrain the norm inflation. However, even with $\lambda=0.2$
 which brings the norm back to the baseline level, the loss becomes even higher. This suggests that simply tuning existing hyper-parameters is insufficient to unleash the  acceleration potential of multiscale momentum. We therefore turn to weight-decay-free techniques.

\section{The Flat-Direction  Multiscale Momentum Method with Sphere Constraints}
We address the issue of weight norm inflation discussed in the previous section by replacing weight decay with  \textbf{sphere constraints}. Before presenting the final algorithm, we first give a short review of manifold optimization.

\subsection{Preliminaries on Manifold Optimization}\label{pre:manifold-opt} Manifold optimization deals with optimization problems where the variable is constrained to lie on a manifold. In this work, we restrict ourselves to the Frobenius sphere constraint $\mathbb{S}_R = \{ w \in \mathbb{R}^p : \|w\|_{F} = R \}$ with 
$R > 0$. The \textit{tangent space} at a point $w \in \mathbb{S}_R$ is
$T_w \mathbb{S}_R = \{ v \in \mathbb{R}^p : v^\top w = 0 \}
$.  To define the optimization geometry, the tangent space is equipped with a metric (a positive definite quadratic form). Here we simply use the Euclidean metric.

\begin{wrapfigure}{r}{0.35\linewidth}
    \centering
    \includegraphics[width=\linewidth]{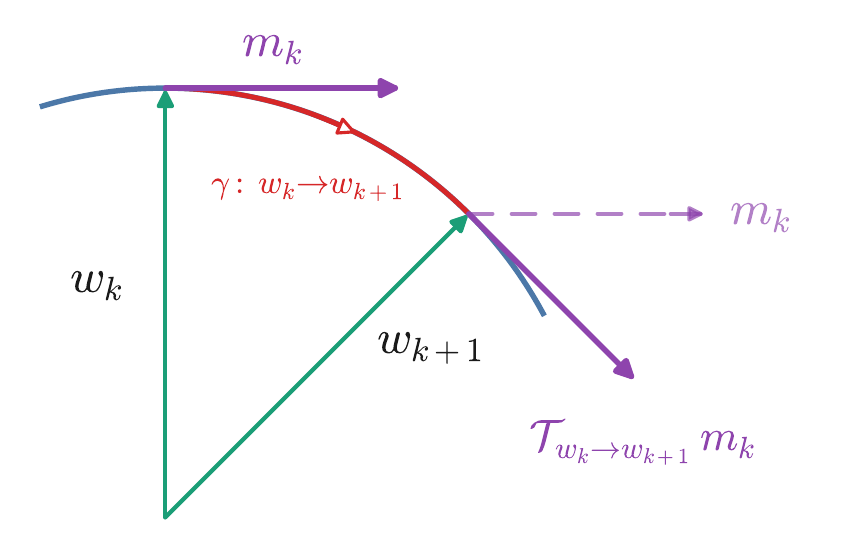}
    \caption{Illustration of parallel transport on the sphere.}
    \label{fig:placeholder}
    \vspace{-0.2cm}  
\end{wrapfigure}
At the $k$-th step, the update direction $\Delta w_k$ (e.g.,  gradient or momentum) is \textit{projected onto the tangent space} via $
\mathcal{P}_{T_{w_k} \mathbb{S}_R}(\Delta w_k) = \Delta w_k - \frac{\Delta w_k^\top w_k}{R^2}w_k $.
A \textit{retraction} $
w_{k+1} = \mathsf{Norm}_R(w_k - \eta_k \Delta w_k)$ is then applied to keep the iterate on the sphere, where $\eta_k$ is the step size. In this framework, momentum thus accumulates the Riemannian gradients  $\mathcal{P}_{T_w \mathbb{S}_R}(g)$, with $g$ denoting the Euclidean gradient. After moving from $w_k$ to $w_{k+1}$, the momentum vector  in $T_{w_k}\mathbb{S}_R$ is transferred to the new tangent space $T_{w_{k+1}}\mathbb{S}_R$. This is accomplished by \textit{parallel transport} $
\mathcal{T}_{w_k \to w_{k+1}} : T_{w_k}\mathbb{S}_R \to T_{w_{k+1}}\mathbb{S}_R,
$
which moves tangent vectors along the manifold while preserving their norms. Intuitively, parallel transport ensures the  momentum  is adjusted to the new geometry, preventing a gradual distortion of the accumulated information. For the Frobenius sphere, parallel transport admits a simple closed form. Omitting it would amplify direction errors in the accumulated gradient history, especially when momentum decays slowly and the tangent spaces vary substantially along the optimization trajectory. We will elaborate on this point in Section~\ref{sec:ablation}. Given $v \in T_{w}\mathbb{S}_R$ and a new point $w' \in \mathbb{S}_R$ ($w'\neq- w$), the transported vector is
\begin{equation}
  \mathcal{T}_{w \to w'}(v) = v - \frac{v^\top w'}{R^2 + w^\top w'} \, (w + w').  
\end{equation}
The above discussion can be directly extended to other tensor types such as matrices. 

\subsection{The Proposed  Approach}\label{sec:ours-approach}

\paragraph{Directional Sphere Parameterization with Learnable Radius} 

Motivated by \cite{wen2026fantasticpretrainingoptimizersii}, we adopt a fixed Frobenius norm constraint (denoted as $R$) and rescale the optimizer update $\Delta\widehat{W}$ to the same norm $R$, which translates the learning rate into an angular velocity on the sphere. 
To decouple radial and angular updates, we re-parameterize each transformer weight matrix $W$  as $W = r \widehat{W}$, where $\|\widehat{W}\|_F\equiv R$  and $r\in \mathbb{R}$ is a learnable scalar.
Compared to a purely fixed‑norm parameterization, the learnable $r$ automatically adapts to the natural heterogeneity across blocks and layers \cite{wang2026negligiblesizesignificanteffect}. 
Vector parameters (e.g., biases) are re-parameterized analogously. 
Given the token-level heterogeneity in the embedding and output matrices $W \in \mathbb{R}^{V \times d}$ with vocabulary size $V$ and model dimension $d$, we adopt a row-wise re-parameterization as $W=\diag{(\gamma)} \widehat{W}$, where $\gamma\in \mathbb{R}^V$   and each row of $\widehat{W}$ maintains a fixed norm. We denote a base optimizer used on the re-parameterized model by appending the suffix "S" (sphere) to its name (e.g., \textbf{MuonS}).

\paragraph{Algorithm Design} 

We now apply the manifold optimization paradigm (parallel transport $\textcolor{orange}{\mathcal{T}}$ and tangent space projection ${\color{violet}\mathcal{P}}$) from Section~\ref{pre:manifold-opt} to the base multiscale momentum method~(\ref{alg:ours-with-wd}) under the sphere constraint described above. We rescale both the fast-momentum and slow-momentum update norms to the sphere radius  $R$, so that the learning rate  acts as the angular velocity and simplifies tuning. The resulting algorithm (denoted as \textbf{\muonours}) follows the Nesterov-type momentum update of~\cite{jordan2024muon}, using the Nesterov-type momentum $\widetilde{M}_k^{\mathrm{fast}}:=(1-\alpha_{\text{fast}}){M}_k^{\mathrm{fast}}+\mathcal{P}_{T_{W_k}\mathbb{S}_R}G_k$ in \eqref{m3-update}. 
The complete procedure for transformer blocks is summarized in Algorithm~\ref{alg:ours}, which degenerates to MuonS when $\chi=0$. We apply Adam to scalar parameters and sphere-constrained Adam to vector parameters. Please refer to Section~\ref{app:ours-alg-adam} for the complete algorithm details.

\begin{algorithm}[H]
  \caption{\muonours for a transformer block matrix $W\in \mathbb{R}^{m\times n}$}
  \label{alg:ours}
  \begin{algorithmic}[1]
    \STATE {\bfseries Input:}  $\{\eta_k\},\alpha_{\text{fast}}>\alpha_{\text{slow}}>0,\chi\ge0$.
    \FOR{$k=0$ {\bfseries to} $K$}
    \STATE Update multiscale momenta $M_k^{\#}=(1-\alpha_{\#})\textcolor{orange}{\mathcal{T}_{W_{k-1}\to W_{k}}} (M_{k-1}^{\#})+{\color{violet}\mathcal{P}_{T_{W_k}\mathbb{S}_R}}G_k, \quad {\#}\in\{\text{fast},\text{slow}\}$.
    \STATE Estimate projection $\mathcal{Q}_k$ onto the flat directions in the tangent space. Then compute the update direction
    \begin{snugshade}
    \vspace{-.1cm}
    \begin{equation}\label{m3-update}
    \begin{aligned}
U_k&=\mathsf{Norm}_R({\color{violet}\mathcal{P}_{T_{W_k}\mathbb{S}_R}}\mathsf{msign}(\widetilde{M}_k^{\mathrm{fast}}))+\chi\mathcal{Q}_k(\mathsf{RowScale}_{R/\sqrt{m}}(M_k^{\mathrm{slow}})).
        \end{aligned}
\end{equation}
    \end{snugshade}
    \vspace{-.1em}
    \STATE Update parameters $W_{k+1}= \mathrm{\mathsf{Norm}}_R(W_k-\eta_k U_k)$.
    \ENDFOR
  \end{algorithmic}
\end{algorithm}

The row‑wise rescaling aligns the norm of the slow momentum with that of the msign term.
The flat‑direction projection $\mathcal{Q}_k$ already includes the projection to the tangent space, and is computed by online power iterations to estimate top singular subspaces of $\widetilde{M}_k^{\mathrm{fast}}$.  Implementation details are provided in Section~\ref{app:flat-proj}.

\paragraph{Computation, Communication and Memory Overhead.} Our method incurs only marginal additional overhead in computation, memory, and communication compared to the base Muon optimizer.
The extra computation stems mainly from manifold-related operations (tangent space projection, retraction, and parallel transport) and the flat-direction projection $\mathcal{Q}_k$.
The former involves no dense matrix–matrix products  while the latter's main cost consists of two lightweight msign operations on matrices much smaller than $W$, which is  nearly negligible as analyzed in Section~\ref{app:flat-proj}.
The additional memory comes from the slow momentum buffer and the small projection matrices used by $\mathcal{Q}_k$, totaling roughly the size of one extra set of model parameters.
In pretraining, memory is typically  dominated by activations, and the  extra optimizer state can be further reduced to a marginal amount per GPU via Fully Sharded Data Parallelism (FSDP)~\cite{zhao2023pytorchfsdpexperiencesscaling}. 
Regarding communication, in large-scale distributed training that employs parallelism strategies such as expert parallelism (EP) or pipeline parallelism (PP), the communication volume is similarly dominated by activations, and the extra communication from the sharded optimizer states is insignificant.

\section{Experiments}

\begin{wrapfigure}{r}{0.4\textwidth}
    \centering
    \includegraphics[width=\linewidth]{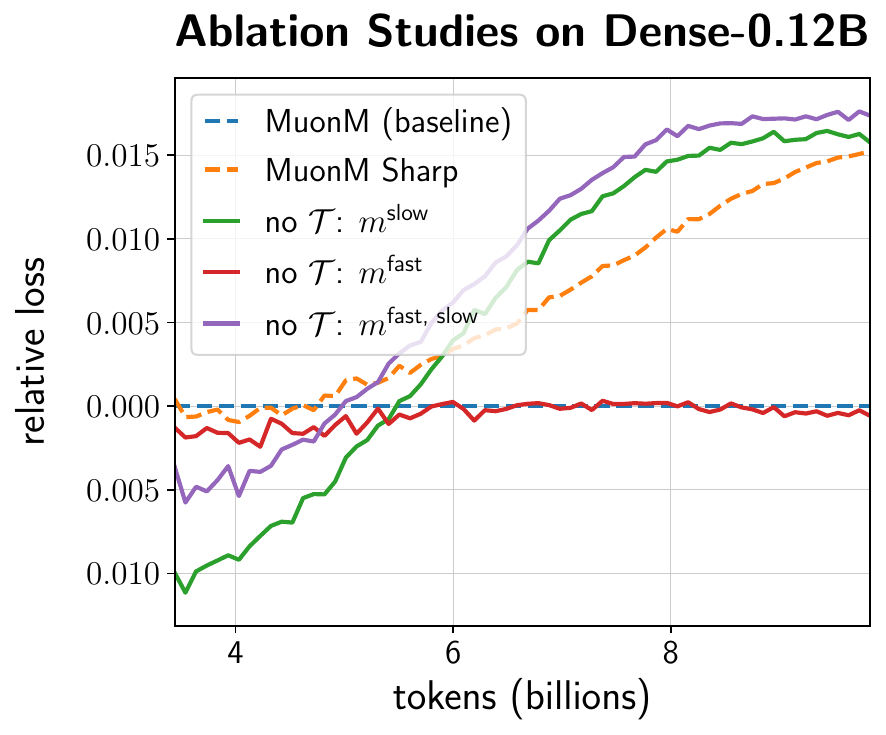}
    \caption{Ablation studies on projection direction and parallel transport $\mathcal{T}$. The label "Sharp" corresponds to replacing $\mathcal{Q}$ with the  sharp-direction projection.  }
    \label{fig:ablation}
\end{wrapfigure}

In this section, we evaluate our proposed optimizer on dense and Mixture-of-Expert  (MoE)  architectures with scales from 0.12B to 2.3B. All models are trained using data from a 350B-token high-quality pretraining corpus. The training budget is approximately 100 tokens per parameter (TPP) for dense models and approximately 500 tokens per activated parameter for MoE models, substantially exceeding the 20 TPP of the Chinchilla‑optimal data~\cite{hoffmann2022training} and aligning more closely with industrial pretraining practice. Unless otherwise specified, the token batch size is about 0.5M for dense models and 4M for MoE models. More experimental details are in Appendices \ref{app:alg-details}, \ref{app:exp-details}.

\subsection{Validation of Critical Algorithm Components}\label{sec:ablation}

We evaluate the roles of the flat-direction projection $\mathcal{Q}$ and parallel transport for momentum on a 0.12B dense model, using \muonours as the baseline (Figure \ref{fig:ablation}).
We first replace the flat-direction projection $\mathcal{Q}$ with the sharp-direction projection (i.e., the orthogonal complement projection of $\mathcal{Q}$), which leads to a large increase in loss, indicating that \textbf{applying multiscale momentum along sharp directions is detrimental}.

Next, we examine the effect of parallel transport on the fast and slow momentum separately.
Removing parallel transport for the fast momentum alone results in a negligible change in the loss trajectory.
In contrast, removing it for the slow momentum causes a significant increase in loss, and additionally disabling parallel transport for the fast momentum has little further impact.
This suggests that \textbf{parallel transport is unnecessary for the fast momentum but indispensable for the slow momentum}. 
A plausible explanation is that the fast momentum decays rapidly. Over its short decay horizon, relatively few parameter updates occur and the tangent space changes little. Thus  the accumulated error from skipping parallel transport remains small.
The slow momentum, on the other hand, decays slowly and spans many updates over a large parameter range, during which the tangent space can change substantially. Parallel transport is therefore required to keep the error accumulation under control.
For instance, with a decay factor of $0.95$ (half-life $\approx 14$ steps) for the fast momentum and $0.999$ (half-life $\approx 700$ steps) for the slow momentum, this difference in update ranges becomes evident.

\subsection{Results on Dense Models}\label{sec:results-dense}


\paragraph{Comparison with Related  Methods.} We first compare our sphere constraint strategy (Section~\ref{sec:ours-approach}) with two recent methods of the same type: MuonH~\cite{wen2026fantasticpretrainingoptimizersii} (Frobenius norm sphere) and SSO~\cite{xie2026controlledllmtrainingspectral} (spectral norm sphere).
All sphere-constrained methods outperform the weight-decayed Muon baseline, with MuonS (i.e., \muonours without slow momentum) achieving the lowest loss (Figure~\ref{fig:compare-120m}, left).
We attribute this advantage to the adaptivity of the learnable radius across heterogeneous blocks.
Next, we compare  \muonours with other enhanced momentum approaches, including AdEMAMix~\cite{pagliardini2025the} (using AdamW as the base optimizer) and EMA-Nesterov~\cite{yau2026emanesterovstabilizingnesterovslookahead} (using Muon as the base optimizer).
\muonours achieves a lower terminal loss than these methods by a clear margin (Figure~\ref{fig:compare-120m}, right). 
Notably, the loss improvement of \muonours over MuonS (-0.019) and over Muon (-0.028) is substantially larger than the gain from directly applying flat-direction multiscale momentum to Muon (-0.008, see Figure~\ref{fig:mts-wd-120m}).
This indicates that \textbf{the sphere constraint more effectively unlocks the acceleration potential of flat-direction multiscale momentum.}

\begin{figure}[htbp]
    \centering
    \begin{subfigure}[b]{0.35\textwidth}
        \centering
        \includegraphics[width=\textwidth]{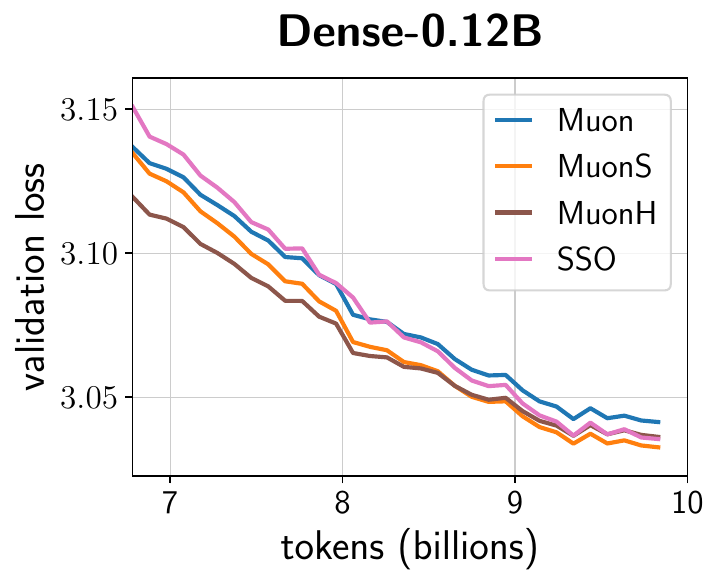}
    \end{subfigure}
    \hspace{0.3cm}
    \begin{subfigure}[b]{0.35\textwidth}
        \centering
        \includegraphics[width=\textwidth]{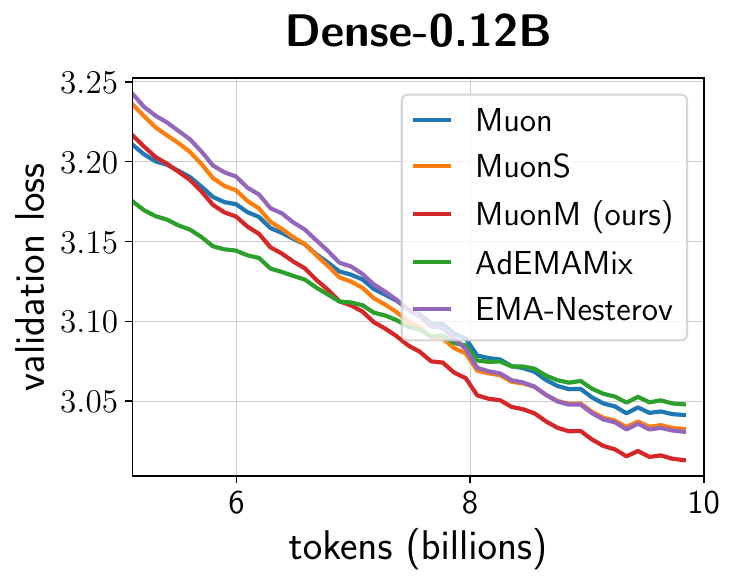}
    \end{subfigure}
    \caption{Comparison of MuonS and \muonours with other sphere-constrained methods (left) and  momentum-enhancement  approaches (right).}
    \label{fig:compare-120m}
\end{figure}

\paragraph{Main Results.} We systematically evaluate the acceleration benefits of \muonours against well-tuned MuonS and Muon baselines across dense models ranging from 0.12B to 1.4B parameters (Figures  \ref{fig:dense_val_loss} and \ref{fig:compare-120m}). Across all settings, \muonours \textbf{consistently achieves the lowest terminal loss}, outperforming MuonS and Muon by approximately 0.02 and 0.03, respectively.
\begin{figure}[!ht]
    \centering
    \includegraphics[width=0.32\linewidth]{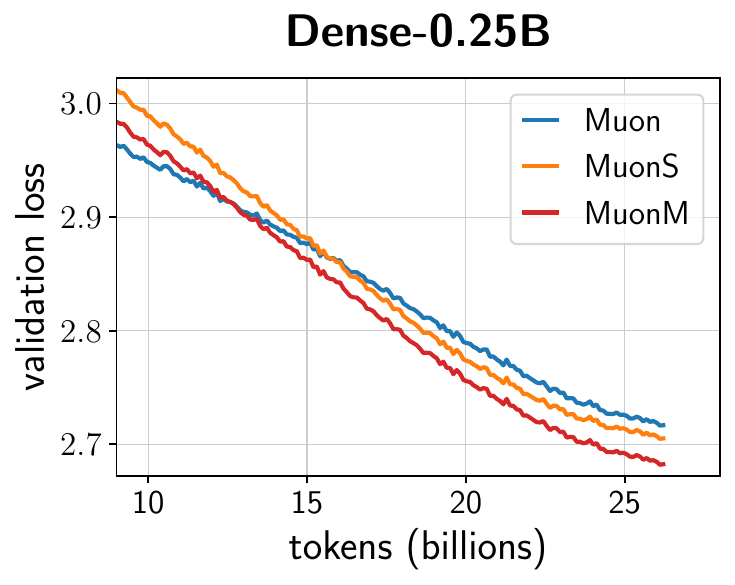}
    \hfill
    \includegraphics[width=0.32\linewidth]{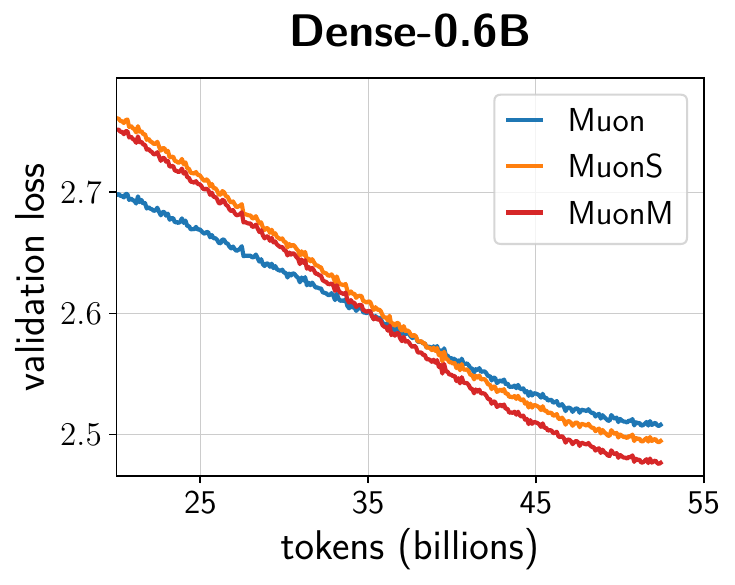}
    \hfill
    \includegraphics[width=0.32\linewidth]{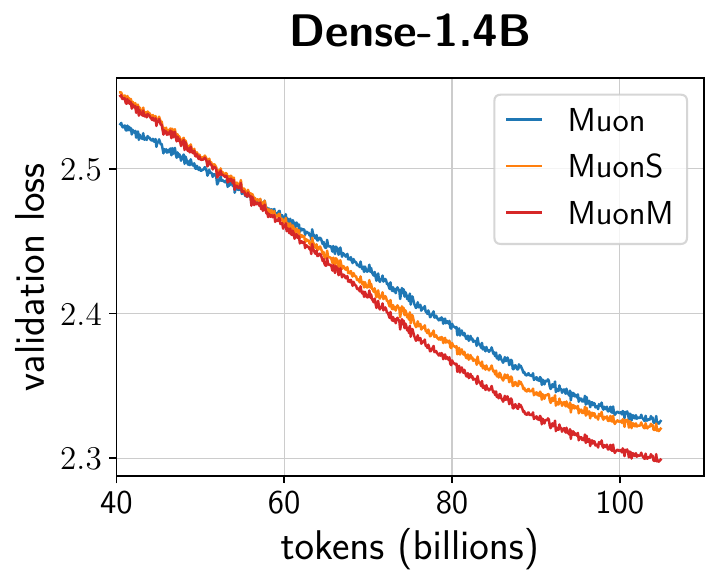}
\caption{Validation losses on dense models across different optimizers. \muonours consistently achieves lower loss than all baselines with a notable margin.}
    \label{fig:dense_val_loss}
\end{figure}

\paragraph{Scaling Behavior under Extended Training.} We evaluate Muon, MuonS, and \muonours on the 0.12B dense model across token budgets ranging from approximately 100 to 1000 TPP, with a peak learning rate sweep performed at each budget. The terminal losses are fitted using $L(N)=L_\infty+AN^{-\gamma}$ with the terminal loss $L(N)$, where $N$ is the number of training tokens. As shown in Figure~\ref{fig:long_horizon_cos_scaling_fit}, \muonours consistently achieves the lowest terminal loss across all budgets, and the \textbf{performance gap widens as the training budget increases}, demonstrating its potential  in long-horizon training. In addition, the searched optimal learning rate of MuonS (and thus \muonours) decays much more slowly (from at 1e-2 at 100 TPP to 7e-3 at 1000 TPP) compared to Muon, which decays from 5e-3
to 1e-3  over the same range (Table~\ref{tab:optimal_lr_sweep_overtrain}). \textbf{This mild shift in the optimal learning rate makes it more readily transferable across varying training horizons.}

\begin{figure}[h]
    \centering
    \includegraphics[width=0.66\linewidth]{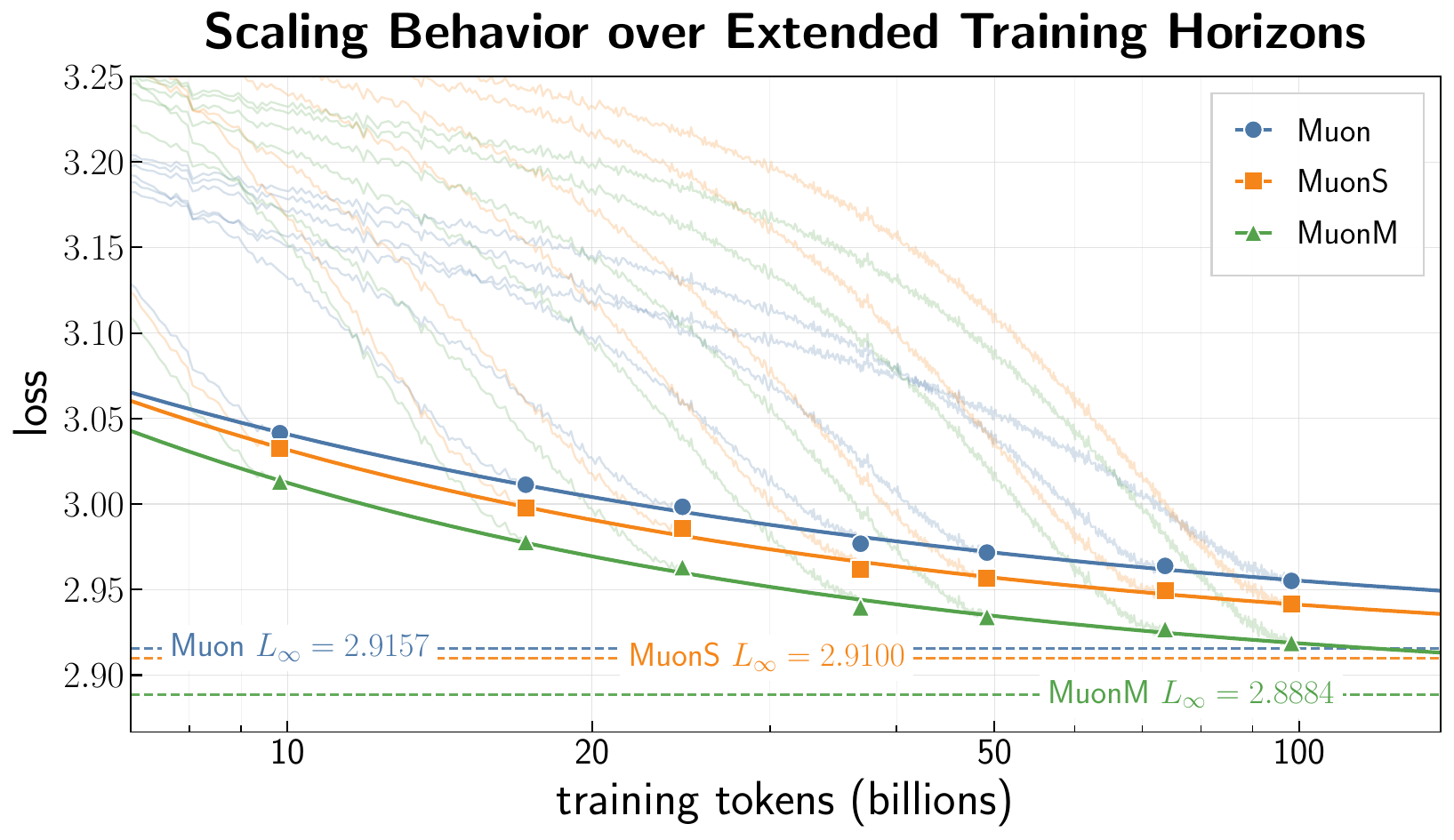}
    \caption{Terminal validation loss on the 0.12B model under extended training. Dots indicate the terminal loss at each token budget with the corresponding tuned learning rate, and the solid curves are the fitted power-law trends.}
    \label{fig:long_horizon_cos_scaling_fit}
\end{figure}

\subsection{Results on MoE Models}\label{sec:results-moe}

Figure~\ref{fig:moe_val_loss} shows the comparison of Muon, MuonS, and \muonours on MoE models with 0.64B (A0.13B) and 2.3B (A0.36B) parameters. \muonours consistently \textbf{achieves notable terminal loss reductions} over the well-tuned Muon and MuonS baselines. We further compare these methods under the WSD schedule on the 0.64B (A0.13B) model, where \muonours also attains the lowest terminal loss (Figure \ref{fig:wsd-moe-0.6b}). These results indicate the advantages of \muonours extend robustly from dense architectures to MoE models.
\begin{figure}[!ht]
    \centering
    \begin{subfigure}[b]{0.35\linewidth}
        \centering
        \includegraphics[width=\linewidth]{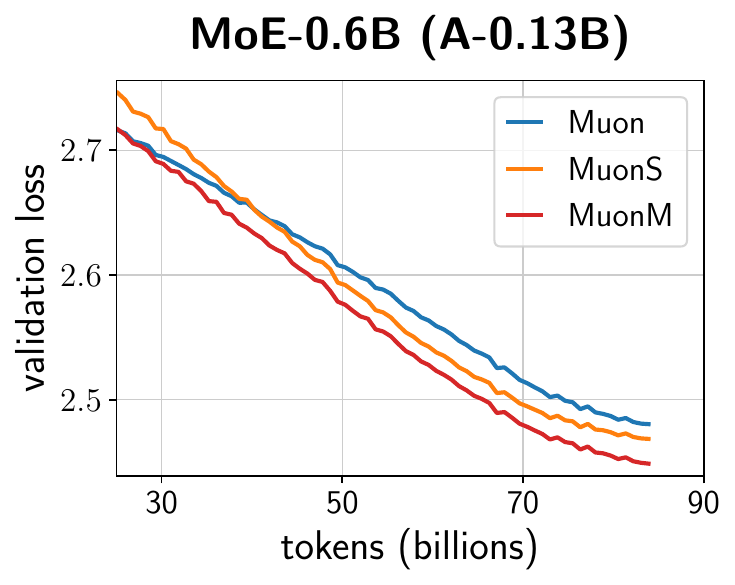}
        \label{fig:moe-0.64b}
    \end{subfigure}
    \hspace{0.3cm}   
    \begin{subfigure}[b]{0.35\linewidth}
        \centering
        \includegraphics[width=\linewidth]{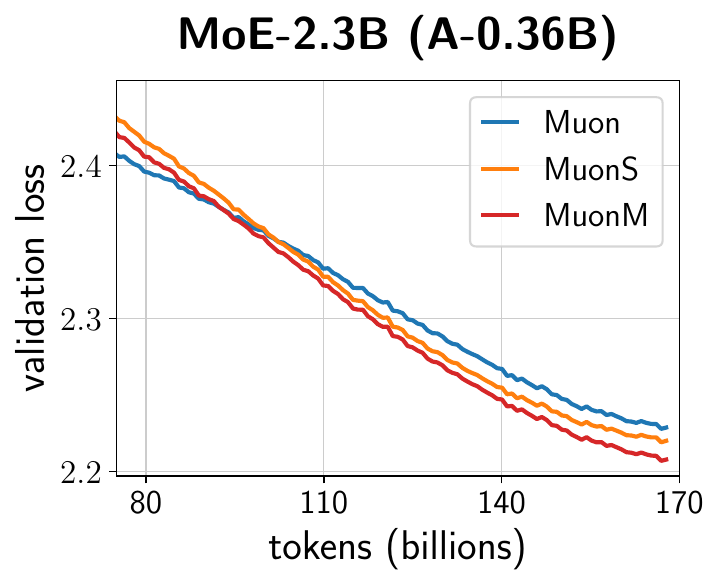}
        \label{fig:moe-2.3b}
    \end{subfigure}
    \caption{Validation losses on MoE models across different optimizers.}
    \label{fig:moe_val_loss}
\end{figure}
\section{Theoretical Analysis}

In this section, we analyze the training dynamics of flat-direction multiscale momentum and characterize its acceleration over vanilla momentum. Due to the inherent complexity of neural network loss landscapes, we resort to a  proxy that captures the essential features of the true optimization trajectory. Following recent practices that demonstrate the surprising effectiveness of quadratic models (e.g., linear regression with a power-decayed spectrum) approximating actual LLM training dynamics~\cite{paquette2024,lin2024scaling,li2025functional,meterez2026anytimepretraininghorizonfreelearningrate,meterez2026defensequadraticmodel}, we adopt this setting to approximate the pretraining landscape. Specifically, we instantiate our analysis under the feature space linear regression framework established in~\cite{li2025functional,wang2026fast}. 

\subsection{Problem Formulation}
Let $\mathcal{X}$ and $\mathcal{D}$ denote the data domain and data distribution, respectively. Let $\phi:\mathcal{X}\to \mathbb{R}^d$ be a feature map with $\phi(x)\sim \mathcal{N}(0,H)$ for $x\sim \mathcal{D}$, and let the target function be $f^\star(x)=\langle \phi(x), w^\star\rangle$ with target parameter $w^\star\in\mathbb{R}^d$. Without loss of generality, we assume $H=\diag(\lambda_1,\ldots,\lambda_d)$ is diagonal with entries $\lambda_1\geq \cdots \geq \lambda_d>0$.

Labels are generated according to $
y(x)=\langle\phi(x),w^\star\rangle+\epsilon$ with $ \epsilon\sim\mathcal N(0,\sigma^2)$.  We learn $f^\star$ using a student model $f(x,w)=\langle \phi(x), w\rangle$ with $w\in\mathbb{R}^d$ by minimizing the population risk
\begin{equation}
\mathcal{R}(w)=\frac{1}{2}\mathbb{E}_{x\sim\mathcal{D}}(f(x,w)-y(x))^2=\frac12 u^\top H u+\frac{1}{2}\sigma^2,
\end{equation}
where $u=w-w^\star$. For brevity, we define the excess risk $\mathcal{E}(w)=\frac{1}{2}u^\top H u$.

\begin{assumption}[Power-law model]\label{ass:power-law-short}
There exist constants $\nu>1$, $s>0$, such that the Hessian spectrum and target parameter show a power-decayed structure with 
\begin{equation}
    \lambda_j\asymp j^{-\nu},\quad |w_j^\star|^2\asymp j^{-1}\lambda_j^{s-1} \asymp j^{-1-(s-1)\nu},\quad 1\leq j\leq d.
\end{equation}
\end{assumption}
Here, the spectral exponent $\nu$ characterizes the model capacity of $\phi$, while $s$ measures the relative difficulty of the task with respect to the model capacity.

Let $\Sigma(w)$ denote the single-batch gradient noise covariance at $w$. Without loss of generality, we set the batch size to $1$. The stochastic gradient at iteration $k$, computed on the sampled data $x_k$, takes the form $
g_k = \nabla_w \frac{1}{2}(f(x_k,w_k)-y(x_k))^2 = H u_k + \xi_k$, 
where the gradient noise satisfies $\mathbb{E}_k\xi_k=0$ and has covariance $\mathbb{E}_k[\xi_k\xi_k^\top]=\Sigma(u_k)$.

We consider the multiscale momentum with  Nesterov-type momentum  
\begin{equation}\label{eq:algorithm}
\begin{aligned}
 m_k^{\#}&=(1-\alpha_{\#})m_{k-1}^{\#}+g_k,\quad \#\in\{\mathrm{fast},\mathrm{slow}\},\\
 w_{k+1}&=w_k-\eta_k
 \bigl(\chi Qm_k^{\mathrm{slow}}+\bd g_k+m_k^{\mathrm{fast}}\bigr),
\end{aligned}
\end{equation}
with initializations $w_0=m_{-1}^{\mathrm{fast}}=m_{-1}^{\mathrm{slow}}=0$ and hyper-parameters $0<\as<\af\leq1, \chi\geq0, \bd\geq0$. For an integer \(0\leq r<d\), we denote the flat-direction projection as $Q=\sum_{j=1}^d q_j e_je_j^\top$,  where $e_j$ is the eigenvector of $\lambda_j$ and $q_j=\mathbb I_{\{j>r\}}$ is the indicator for flat directions. For a given learning rate schedule, we define the intrinsic time as the accumulated learning rate $T_k:=\sum_{i=0}^{k-1}\eta_i$.

\subsection{Theoretical Results}

\begin{theorem}[Asymptotic Estimate of the Excess Risk]
\label{thm:discrete-fsl}
Define the constants $\atpg=\bd+\af^{-1}, 
 \aperp=\bd+\af^{-1}+\chi\as^{-1},p=2-\nu^{-1}$. We further let \(a_j=\atpg\) for \(j\leq r\) and \(a_j=\aperp\) for \(j>r\). 
Suppose Assumption~\ref{ass:power-law-short} holds and \(d\geq 2(r+1)\). 
Then for any $k$ and learning rate schedule $\{\eta_i\}$ such that \(\aperp T_k\lesssim d^\nu\) and additional conditions in \eqref{eq:discrete-regime} hold, the iterates \(w_k\) in 
\eqref{eq:algorithm} satisfy
\begin{equation}\label{eq:end-to-end-fsl}
 \E\cE(w_k)\asymp
 \underbrace{\SMS(T_k)}_{\mathrm{signal\,\, learning}}+\underbrace{\sigma^2\sum_{i=0}^{k-1}
 \eta_i^2\KMS(T_k-T_i)}_{\mathrm{noise\,\, accumulation}},
\end{equation}
where
\begin{equation}\label{eq:closed-kernels}
\begin{aligned}
 \SMS(t)
 &=\sum_{j=1}^{r}\lambda_j|w_j^\star|^2
       e^{-2\atpg\lambda_jt}
   +\bigl((r+1)^\nu+\aperp t\bigr)^{-s},\\
 \KMS(t)
 &=\sum_{j=1}^{r}\atpg^2\lambda_j^2
       e^{-2\atpg\lambda_jt}
   +\aperp^2\bigl((r+1)^\nu+\aperp t\bigr)^{-p}.
\end{aligned}
\end{equation}

\end{theorem}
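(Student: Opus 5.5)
Because $H$ and $Q$ are diagonal in the eigenbasis $\{e_j\}$, the mean dynamics of \eqref{eq:algorithm} split into $d$ independent scalar systems; only the noise couples coordinates. Fix a coordinate $j$ and write the state as $z_{k,j}=(u_{k,j},m^{\mathrm{fast}}_{k-1,j},m^{\mathrm{slow}}_{k-1,j})$. Its evolution is a $3\times 3$ linear recursion
$$z_{k+1,j}=A_{k,j}z_{k,j}+b_{k,j}\xi_{k,j},$$
and the matrix $A_{k,j}$ depends only on $\eta_k\lambda_j$, $\af$, $\as$, $\chi q_j$ and $\bd$. Taking second moments gives
$$\E\cE(w_k)=\sum_j \tfrac12\lambda_j\,\E u_{k,j}^2 .$$
This is a deterministic bias part (the mean of $u$ started from $u_0=-w^\star$) plus a variance part $\sum_i \eta_i^2\,\cdots\,\Sigma(u_i)$. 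For Gaussian features the covariance $\Sigma(u)=(\|u\|_H^2+\sigma^2)H+Huu^\top H$ splits into a $\sigma^2 H$ term and a risk-dependent term. I would dominate the latter by the former, following the standard argument of \cite{li2025functional,wang2026fast}: under the step-size conditions \eqref{eq:discrete-regime}, the self-consistent feedback $\E\cE(w_i)\lesssim\sigma^2$ contributes at most a constant factor. This reduces the statement to the form in \eqref{eq:end-to-end-fsl}, with a per-coordinate propagator $\Phi_j(k,i)$ governing the response of $u_{k,j}$ to an impulse injected at step $i$.

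The heart of the proof is a sharp two-sided estimate of $\Phi_j$ in the regime $\eta\lambda_j a_j\ll 1$. Here the momentum variables are fast and relax on time scales $\af^{-1}$ and $\as^{-1}$ (in step counts). Adiabatically eliminating them, a gradient impulse is ultimately integrated into $u$ with total weight
$$\bd+\sum_{\ell\ge0}(1-\af)^\ell+\chi q_j\sum_{\ell\ge0}(1-\as)^\ell=\bd+\af^{-1}+\chi q_j\as^{-1}=a_j .$$
So the slow mode of $A_{k,j}$ has eigenvalue $1-a_j\lambda_j\eta_k+O((\eta_k\lambda_j a_j)^2)$, and the other two eigenvalues lie near $1-\af$ and $1-\as$. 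I would make this rigorous with a perturbative diagonalisation, or equivalently by comparing with the effective first-order ODE $\dot u=-a_j\lambda_j u$ in intrinsic time $T$. This gives
$$\Phi_j(k,i)\asymp a_j e^{-a_j\lambda_j(T_k-T_i)}.$$
The extra factor $a_j$ reflects that the noise injected through the momentum buffers is amplified by the total weight $a_j$. The bias term similarly gives $u_{k,j}\asymp -w^\star_j e^{-a_j\lambda_j T_k}$.

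\textbf{Main obstacle.} The main obstacle is controlling the transient, non-commuting product $\prod_i A_{i,j}$ uniformly over a time-varying schedule. The slow momentum has memory $\as^{-1}$, and across that window the step size $\eta_i$ may change. The conditions in \eqref{eq:discrete-regime} (e.g.\ $\eta_i\lambda_1 a_j$ small and $\eta$ varying slowly on the $\as^{-1}$ scale) are exactly what is needed. I would prove the two-sided bounds on the product by a Lyapunov/energy argument in the rotated basis, bounding the off-diagonal coupling by $\eta\lambda_j a_j$.

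With the propagator in hand, summation over coordinates gives the closed forms. For $j\le r$ the sums are kept verbatim, yielding the first terms of $\SMS$ and $\KMS$ with $a_j=\atpg$:
$$\lambda_j|w_j^\star|^2e^{-2\atpg\lambda_jt}\quad\text{and}\quad \atpg^2\lambda_j^2e^{-2\atpg\lambda_jt}.$$
For $j>r$ I would replace the sums by integrals using Assumption~\ref{ass:power-law-short}, with $\lambda_j\asymp j^{-\nu}$ and $\lambda_j|w_j^\star|^2\asymp j^{-1-s\nu}$. The signal tail is
$$\sum_{j>r}j^{-1-s\nu}e^{-2\aperp j^{-\nu}t}\asymp\bigl((r+1)^\nu+\aperp t\bigr)^{-s}.$$
The effective cutoff is $j\approx\max(r+1,(\aperp t)^{1/\nu})$, and it stays below $d$ because $\aperp T_k\lesssim d^\nu$ and $d\ge2(r+1)$. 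The noise tail is
$$\aperp^2\sum_{j>r}j^{-2\nu}e^{-2\aperp j^{-\nu}t}\asymp\aperp^2\bigl((r+1)^\nu+\aperp t\bigr)^{-(2\nu-1)/\nu},$$
which produces the exponent $p=2-\nu^{-1}$. Combining the bias and variance parts gives \eqref{eq:end-to-end-fsl}.
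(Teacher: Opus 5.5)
\textbf{Gap: the propagator estimate is false near the diagonal.} Your argument rests on the pointwise two-sided estimate $\Phi_j(k,i)\asymp a_je^{-a_j\lambda_j(T_k-T_i)}$, taken per unit $\eta_i$ and with constants independent of $\af,\as$ as the statement requires. This estimate fails inside the momentum memory window $k-i\lesssim\as^{-1}$ (or $k-i\lesssim\af^{-1}$ on sharp coordinates). A noise impulse entering at step $i$ reaches $u$ only gradually. To leading order its response is $-\sum_{t=i}^{k-1}\eta_tC_{t-i}$ with $C_\ell=\bd\mathbb I_{\{\ell=0\}}+(1-\af)^\ell+\chi q_j(1-\as)^\ell$. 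Its magnitude therefore falls short of $a_j\eta_i$ by about $\eta_iD_{k-i}(q_j)$, where $D_n(q)=(1-\af)^n/\af+\chi q(1-\as)^n/\as$. In particular $R_{j;k,k-1}=-\eta_{k-1}(\bd+1+\chi q_j)$ exactly, whereas you assert a magnitude $\asymp\eta_{k-1}(\bd+\af^{-1}+\chi q_j\as^{-1})$.

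This is not a technicality. At $k=1$ one has $\E\cE(w_1)=O(\cE(w_0))+\frac{\sigma^2}{2}\eta_0^2\sum_j\lambda_j^2(\bd+1+\chi q_j)^2$. For $\bd=0$ the second term is $O(\af^2)$ times the claimed $\sigma^2\eta_0^2\KMS(\eta_0)\asymp\sigma^2\eta_0^2\sum_ja_j^2\lambda_j^2$. Letting $\af\to0$ with $\sigma^2\eta_0^2\KMS(\eta_0)/\cE(w_0)\to\infty$, the lower half of \eqref{eq:end-to-end-fsl} fails for every absolute constant. Yet for small $\eta_0$ every condition in \eqref{eq:discrete-regime} other than the one on $\widehat\varepsilon(k)$ holds, since $K_\eta^{\rm max}(1)=\widehat\delta^{\rm max}(1)=0$. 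So neither a perturbative diagonalisation nor a Lyapunov argument can deliver the propagator bound you state.

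\textbf{What is missing.} The noise kernel can be matched to $K_A$ only in aggregate over $i$, up to an additive boundary-layer error that needs its own hypothesis. The paper does this in Part~3 of Lemma~\ref{lem:adiabatic-reduction}, in three steps. First, the exact Abel identity $\mathbf u_{t+1}-\mathbf u_t=-a_q\eta_t\mathbf g_t+\eta_t[(D_q*\mathbf g)_t-(D_q*\mathbf g)_{t-1}]$ separates the gain $a_q$ from a boundary difference. Second, the true response is compared with the Euler response $R^E$ only in $\ell^2$ over $i$: $\sum_i(R_{j;k,i}-R^E_{j;k,i})^2\le4G_{q_j}^2L_{j,k}^2$ with $G_q^2=\sum_{n\ge1}D_n(q)^2$. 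Third, only $R^E$ is compared pointwise with $R^0_{j;k,i}=-a_q\eta_ie^{-a_q\lambda_j(T_k-T_i)}$.

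This yields $\frac{225}{512}(\mathcal Ts)_k-(\widehat{\mathcal B}s)_k\le\sum_iK^{\rm noise}_{k,i}s_i\le\frac{289}{128}(\mathcal Ts)_k+2(\widehat{\mathcal B}s)_k$. The condition $\widehat\varepsilon(k)=\sigma^2\widehat\delta(k)/\FMS(k)\le c_{sk,-}/(16(6C_0+1))$ is exactly what absorbs $\widehat{\mathcal B}s$ in the lower bound. Your proposal neither identifies this error term nor uses that hypothesis. Likewise, the a priori bound $\max_{i<k}\E\cE(w_i)\lesssim\sigma^2$ you invoke needs $K_\eta^{\rm max}(k)\le1/8$, $\widehat\delta^{\rm max}(k)\le1/25$ \emph{and} the standing assumption $\E\cE(w_0)\le C_0\sigma^2$, not the step-size conditions alone.

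A smaller inaccuracy: the quadratic term of $-\log z_0$ is $a_q(d_q+a_q/2)(\eta\lambda)^2$ with $d_q=(1-\af)\af^{-2}+\chi q(1-\as)\as^{-2}\le a_q/\as$. On flat coordinates the correction can therefore be of order $(a_q\eta\lambda)^2/\as$, not $(a_q\eta\lambda)^2$, which is why $\as^{-2}$ appears in $\varepsilon_{proj}$. Your treatment of the signal term and of the power-law tail sums does agree with Parts~1--2 of Lemma~\ref{lem:adiabatic-reduction} and with Lemma~\ref{lem:power-law-kernels}.
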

\begin{remark}[Scope of $T_k$] 
The regime $T_k\lesssim d^\nu/\aperp$ 
 already encompasses the practically typical range $k\lesssim d$.  For $T_k\gg d^\nu/\aperp$, 
  matching upper and lower bounds for $\E \mathcal{E}(w_k)$   are not available within our current analysis.  The same difficulty also arises in the SDE analysis of analogous scenarios of \cite{li2025functional}. 
\end{remark}

Under a constant learning rate, the asymptotic estimate admits a simplified form as follows.

\begin{corollary}[Constant Learning Rate]
\label{cor:constant-schedule}
Let \(\eta_i\equiv\eta\) and \(r\geq4\). 
Suppose the conditions of Theorem~\ref{thm:discrete-fsl} hold.  If we further have
\begin{equation}\label{eq:constant-schedule-window}
\atpg^{-1}r^\nu(1+\log(\aperp/\atpg))
 \lesssim \eta k \lesssim \aperp^{-1}d^\nu,
\end{equation}
then
\begin{equation}\label{eq:constant-schedule-final}
 \E\cE(w_k)\asymp(\aperp\eta k)^{-s}
 +\eta\sigma^2\left(
 \atpg\sum_{j=1}^r j^{-\nu}
 +\aperp\sum_{j=r+1}^d j^{-\nu}\right)
 \overset{\Delta}{=}\mathcal{E}_\chi(\eta;k).
\end{equation}
\end{corollary}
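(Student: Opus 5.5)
The plan is to specialize Theorem~\ref{thm:discrete-fsl} to $\eta_i\equiv\eta$, so that $T_k=\eta k$ and $T_k-T_i=\eta(k-i)$. We then estimate the signal term $\SMS(\eta k)$ and the noise term $\sigma^2\eta^2\sum_{i=0}^{k-1}\KMS(\eta(k-i))$ separately, using the window \eqref{eq:constant-schedule-window}. The upper end $\aperp\eta k\lesssim d^\nu$ is exactly the scope condition of the theorem, so it is available throughout. The lower end is what makes the sharp-direction contributions negligible or saturated.

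\textbf{Signal term.} The tail part is $((r+1)^\nu+\aperp\eta k)^{-s}$. Since $\aperp\ge\atpg$ and $\eta k\gtrsim\atpg^{-1}r^\nu$, we have $\aperp\eta k\gtrsim r^\nu\asymp(r+1)^\nu$, so this part is $\asymp(\aperp\eta k)^{-s}$. For the head part we use Assumption~\ref{ass:power-law-short}: $\lambda_j|w_j^\star|^2\asymp j^{-1}\lambda_j^s$ and $\lambda_j\gtrsim r^{-\nu}$ for $j\le r$. Then
\[
\sum_{j\le r}j^{-1}\lambda_j^s e^{-2\atpg\lambda_j\eta k}\lesssim e^{-c\atpg r^{-\nu}\eta k}\log r .
\]
We need this to be $\lesssim(\aperp\eta k)^{-s}$. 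Write $x=\atpg r^{-\nu}\eta k\gtrsim 1+\log(\aperp/\atpg)$. The target becomes $(\aperp\eta k)^{-s}=(\aperp/\atpg)^{-s}r^{-\nu s}x^{-s}$. The factor $e^{-cx}$ beats $(\aperp/\atpg)^{-s}x^{-s}$ because of the $\log(\aperp/\atpg)$ in the window. The remaining factors $r^{-\nu s}\log r$ are the delicate point: presumably the implicit constant in the window is allowed to absorb a $\log r$ (the hypothesis $r\ge4$ suggests logs of $r$ appear). Alternatively, one can bound the head sum more carefully term by term, since $\lambda_j t$ is large for all $j\le r$. I expect this comparison to be the main obstacle. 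If necessary, I would read the window's hidden constant as depending on $s,\nu$ and include a $\log r$ slack.

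\textbf{Noise term, sharp block.} Consider $\eta^2\sum_{i}\atpg^2\lambda_j^2e^{-2\atpg\lambda_j\eta(k-i)}$ for $j\le r$. This geometric sum is $\asymp\eta^2\atpg^2\lambda_j^2/(1-e^{-2\atpg\lambda_j\eta})$. Assuming the stability condition in \eqref{eq:discrete-regime} gives $\atpg\lambda_j\eta\lesssim1$, the sum is $\asymp\eta\atpg\lambda_j$. The saturation $1-e^{-2\atpg\lambda_j\eta k}\asymp1$ holds by the window. Summing over $j$ gives $\eta\atpg\sum_{j\le r}j^{-\nu}$.

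\textbf{Noise term, flat block.} Here the sum is $\eta^2\aperp^2\sum_{l=1}^{k}((r+1)^\nu+\aperp\eta l)^{-p}$. Since $p=2-\nu^{-1}>1$, compare it with the integral $\eta\aperp\int_0^{\aperp\eta k}((r+1)^\nu+u)^{-p}\,du$. Because $\aperp\eta k\gtrsim(r+1)^\nu$, this is $\asymp\eta\aperp(r+1)^{-\nu(p-1)}=\eta\aperp(r+1)^{1-\nu}$, which is $\asymp\eta\aperp\sum_{j>r}j^{-\nu}$. The last equivalence uses $d\ge2(r+1)$, which ensures $\sum_{j=r+1}^d j^{-\nu}\asymp(r+1)^{1-\nu}$.

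Adding the three estimates yields \eqref{eq:constant-schedule-final}. Both directions of $\asymp$ follow from the two-sided bounds of Theorem~\ref{thm:discrete-fsl}.
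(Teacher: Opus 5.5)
\textbf{Gap: the head of the signal term.} Your signal tail and your noise estimates are correct and agree in substance with the paper's Part~1. The noise estimates are the geometric sum on the sharp block, the integral comparison on the flat block, saturation from the lower end of the window, and $\sum_{j>r}j^{-\nu}\asymp(r+1)^{1-\nu}$ from $d\ge2(r+1)$. For the flat-block Riemann-sum comparison, state that the mesh ratio is harmless because $\aperp\eta/(r+1)^\nu\le\as/(8\lambda_1)\lesssim1$.

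The problem is your bound $\sum_{j\le r}j^{-1}\lambda_j^s e^{-2\atpg\lambda_j\eta k}\lesssim e^{-cy}\log r$, where $y:=\atpg r^{-\nu}\eta k$. It replaces every exponential by the weakest one and then pays the full prefactor $j^{-1}\lambda_j^s\asymp j^{-1-s\nu}$ at small $j$. The target is $(\aperp\eta k)^{-s}=\kappa^{-s}r^{-\nu s}y^{-s}$ with $\kappa=\aperp/\atpg$, so you lose a factor of order $r^{\nu s}\log r$. Concretely, take $\kappa=1$ and $y$ a fixed constant, which the window allows: your bound is of order $\log r$, while the target is of order $r^{-\nu s}$. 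Compensating would require $y\gtrsim\log r+\log\kappa$, which is a strictly stronger hypothesis than the stated window. Letting the implicit constant absorb a $\log r$ is not legitimate, because the constants in $\lesssim$ may depend on $s,\nu$ and the power-law constants, but not on $r$. Also, $r\ge4$ is not a hint that logarithms of $r$ appear; the paper uses it only to get $r/(r+1)\ge4/5$.

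\textbf{The missing idea.} The exponential weight becomes much stronger for small $j$, since $e^{-2\atpg\lambda_j\eta k}\le e^{-2\ell_- y(r/j)^\nu}$. Hence the head sum concentrates on $j\asymp r$.
\begin{itemize}
\item Split $\{1,\dots,r\}$ into dyadic shells $B_m=(r2^{-m-1},r2^{-m}]$, $m\ge0$.
\item On $B_m$ you have $j^{-1-s\nu}\le(2^{m+1}/r)^{1+s\nu}$, $|B_m|\le r2^{-m-1}+1$, and $(r/j)^\nu\ge2^{m\nu}$.
\item So $B_m$ contributes $\lesssim r^{-s\nu}2^{ms\nu}e^{-2\ell_- y2^{m\nu}}$. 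If $B_m$ has at most one point, the single term is at most $1\le(2^{m+1}/r)^{s\nu}$, so the same bound holds.
\item For $y\ge1$, summing over $m$ gives $\lesssim r^{-s\nu}e^{-2\ell_- y}$.
\item The ratio of the head to the target is then $\lesssim\kappa^s y^s e^{-2\ell_- y}$. This is bounded once $y\ge C_h(1+\log\kappa)$ with $C_h\ge\max\{1,s/\ell_-\}$, because then $\kappa^s e^{-\ell_- y}\le1$ and $y^se^{-\ell_- y}\lesssim1$.
\end{itemize}
This is exactly Lemma~\ref{lem:power-law-kernels}(iv), which the paper invokes in Part~2. The paper gets the slightly sharper bound $r^{-s\nu}(y^{-1}+r^{-1})e^{-2\ell_- y}$ by treating $[r/2,r]$ separately. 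With this estimate in place of your crude bound, the rest of your argument goes through.
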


Building on this formulation, we demonstrate that for the same \(\af,\bd\) (while allowing $\eta$
 to be tuned) in \eqref{eq:algorithm}, using $\chi>0$ (i.e., incorporating slow momentum) leads to accelerated convergence. A direct insight  is that with a larger $\chi$, taking a proportionally smaller  $\eta$ can  \textit{reduce both the signal learning error and noise accumulation}. We detail the mechanism underlying this improvement below.

\begin{corollary}[Acceleration by Slow Momentum]
\label{cor:noise-aware-acceleration}
Suppose the conditions of Corollary~\ref{cor:constant-schedule} hold. Define the following quantities as functions of $\chi$:
\begin{equation}
a_\chi=\bd+\frac{1}{\af}+\frac{\chi}{\as},\,\,  \kappa_\chi=\frac{a_\chi}{a_0},\,\,N_\chi=a_0(\Lambda_P+\kappa_\chi\Lambda_Q),\,\, \Gamma_\chi
= \frac{\Lambda_P+\Lambda_Q}
        {\Lambda_Q+\Lambda_P/\kappa_\chi},
\end{equation}
where $\Lambda_P=\sum_{j=1}^r j^{-\nu},\,\Lambda_Q=\sum_{j=r+1}^d j^{-\nu}$.   Then the following statements hold:

\begin{enumerate}
\item \textbf{(Proxy‐optimal loss reduction).}  
The optimal learning rate $\eta_{\chi,k}^{\star}:=\operatorname{argmin}_\eta \mathcal{E}_\chi(\eta;k)$ satisfies
\begin{align}
\eta_{\chi,k}^{\star}
 \asymp \left[\frac{s}{\sigma^2 a_0N_\chi (a_\chi k)^s}\right]^{\frac{1}{s+1}},  \, 
\mathcal{E}_\chi(\eta_{\chi,k}^{\star}; k)
\asymp 
\left[\frac{\sigma^2}{k}
\left(\Lambda_Q + \frac{\Lambda_P}{\kappa_\chi}\right)
\right]^{\frac{s}{s+1}},
\end{align}
where the signal learning term, noise accumulation term, and risk $\mathcal{E}_\chi(\eta_{\chi,k}^{\star}; k)$ are balanced at the same asymptotic order (up to constants). 
Consequently, the loss reduction ratio is $\frac{\mathcal{E}_\chi(\eta_{\chi,k}^{\star}; k)}
     {\mathcal{E}_0(\eta_{0,k}^{\star}; k)}
\asymp \Gamma_\chi^{-s/(s+1)}$. 

\item \textbf{(Iteration complexity speedup).}  
For any $\epsilon>0$, let $k_\chi(\epsilon)$  solve $\mathcal{E}_\chi(\eta_{\chi,k}^{\star}; k_\chi(\epsilon)) = \epsilon$. Then we have
\begin{equation}
\frac{k_\chi(\epsilon)}
     {k_0(\epsilon)}
\asymp \Gamma_\chi^{-1}. \label{eq:noise-aware-complexity}
\end{equation}
\end{enumerate}
\end{corollary}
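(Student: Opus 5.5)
The plan is to treat $\mathcal{E}_\chi(\eta;k)$ in \eqref{eq:constant-schedule-final} as an explicit one-variable function of $\eta$ and minimize it in closed form. First I rewrite it in the notation of the corollary. Since $\atpg=\bd+\af^{-1}=a_0$ and $\aperp=a_\chi$, we have
\begin{equation*}
\mathcal{E}_\chi(\eta;k)=A_\chi\,\eta^{-s}+B_\chi\,\eta,
\end{equation*}
with $A_\chi:=(a_\chi k)^{-s}$ and $B_\chi:=\sigma^2\bigl(a_0\Lambda_P+a_\chi\Lambda_Q\bigr)=\sigma^2 N_\chi$. The identity $N_\chi=a_0\Lambda_P+a_\chi\Lambda_Q$ is just the definition of $\kappa_\chi$.

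\textbf{Step 1: the optimal learning rate.} The function $A\eta^{-s}+B\eta$ is strictly convex on $(0,\infty)$. Setting its derivative to zero gives
\begin{equation*}
\eta^\star=\left(\frac{sA_\chi}{B_\chi}\right)^{1/(s+1)}.
\end{equation*}
At $\eta^\star$ the two terms satisfy $B_\chi\eta^\star=sA_\chi(\eta^\star)^{-s}$, so they are balanced up to the factor $s$. The minimum value is therefore $\asymp A_\chi^{1/(s+1)}B_\chi^{s/(s+1)}$, with constants depending only on $s$.

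Substituting gives $\eta^\star\asymp[s/(\sigma^2 N_\chi(a_\chi k)^s)]^{1/(s+1)}$. The stated formula carries an extra factor $a_0$, which I would either absorb into $\asymp$ (the hyper-parameters $\af,\bd$ are fixed across the comparison) or trace back to the normalization convention. For the value,
\begin{equation*}
A_\chi^{1/(s+1)}B_\chi^{s/(s+1)}=\left[\frac{\sigma^2N_\chi}{a_\chi k}\right]^{s/(s+1)}.
\end{equation*}
Since $N_\chi/a_\chi=\Lambda_Q+\Lambda_P/\kappa_\chi$, this is exactly the claimed expression.

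\textbf{Step 2: the ratio and the complexity.} At $\chi=0$ we have $\kappa_0=1$, so the optimal risk is $[\sigma^2(\Lambda_P+\Lambda_Q)/k]^{s/(s+1)}$. Dividing the two optimal risks gives $\Gamma_\chi^{-s/(s+1)}$. For part 2, write $C_\chi:=\Lambda_Q+\Lambda_P/\kappa_\chi$. Solving $\epsilon\asymp[\sigma^2C_\chi/k]^{s/(s+1)}$ for $k$ gives $k_\chi(\epsilon)\asymp\sigma^2C_\chi\,\epsilon^{-(s+1)/s}$. Hence $k_\chi/k_0\asymp C_\chi/C_0=\Gamma_\chi^{-1}$; the $\epsilon$-dependence cancels.

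\textbf{Main obstacle: legitimacy of the substitution.} Corollary~\ref{cor:constant-schedule} identifies $\E\cE(w_k)$ with $\mathcal{E}_\chi(\eta;k)$ only for $\eta$ in the window \eqref{eq:constant-schedule-window} and under the conditions of Theorem~\ref{thm:discrete-fsl}. The statement of the present corollary is phrased for the proxy $\mathcal{E}_\chi$, and the hypothesis ``the conditions of Corollary~\ref{cor:constant-schedule} hold'' is read as including $\eta=\eta^\star_{\chi,k}$.

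So I would verify explicitly that
\begin{equation*}
\eta^\star k\asymp\bigl[k/(\sigma^2N_\chi a_\chi^s)\bigr]^{1/(s+1)}
\end{equation*}
lies between $a_0^{-1}r^\nu(1+\log\kappa_\chi)$ and $a_\chi^{-1}d^\nu$ for the range of $k$ considered. Where this fails, I would record it as an additional assumption on $k$, $\sigma$ and $d$.

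A second point is that minimizing a function that is only known up to $\asymp$ could in principle shift the minimizer. This is not an issue here, because $\mathcal{E}_\chi$ is defined as the explicit expression in \eqref{eq:constant-schedule-final}, so the argmin is computed exactly and only the final constants are hidden in $\asymp$.
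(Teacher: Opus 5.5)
Your argument is correct and is essentially the paper's proof: the paper also minimizes the explicit proxy $(a_\chi\eta k)^{-s}+\eta\sigma^2N_\chi$ from Corollary~\ref{cor:constant-schedule} by direct computation, using a weighted AM--GM inequality where you use the first-order condition, and it likewise treats admissibility of $\eta^\star_{\chi,k}$ in the window \eqref{eq:constant-schedule-window} as a separate condition on $k$ and $d$. The one thing to change is how you handle the extra factor $a_0$. You cannot absorb it into $\asymp$, because $a_0=\bd+\af^{-1}$ is a hyper-parameter rather than an absolute constant. However, your formula $\eta^\star_{\chi,k}\asymp[s/(\sigma^2N_\chi(a_\chi k)^s)]^{1/(s+1)}$ is simply the right one, since $N_\chi$ already contains $a_0$. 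The paper's own admissibility condition after the corollary, $\frac{r^\nu}{a_0}(1+\log\kappa_\chi)\lesssim(k/(\sigma^2N_\chi a_\chi^s))^{1/(s+1)}$, uses exactly your expression, so the displayed $a_0N_\chi$ in the statement is a typo.
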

The proof follows by direct computation with Weighted AM-GM Inequality. According to the constraints in  Corollary~\ref{cor:constant-schedule} and Theorem \ref{thm:discrete-fsl}, the optimal learning rate   $\eta_{\chi,k}^{\star}$ is admissible if $k$ satisfies $\frac{r^\nu}{a_0}(1+\log \kappa_\chi)
 \lesssim \left(\frac{k}{\sigma^2 N_\chi (a_\chi )^s}\right)^{\frac{1}{s+1}} \lesssim  \frac{d^\nu}{a_\chi}$ and $d$ is sufficiently large. The corollary explicitly quantifies the speedup induced by slow momentum, which becomes more pronounced for larger values of $\chi/\alpha_{\text{slow}}$.

\section{Conclusion and Discussion}
In this work, we propose  a flat-direction multiscale momentum method with sphere constraints to accelerate LLM pretraining. The sphere constraint serves as a more effective foundation than standard weight decay for unleashing the acceleration potential of flat-direction multiscale momentum, by resolving the effective learning rate collapse inherent to the multiscale design. Extensive experiments show that our proposed method (\muonours) consistently accelerates Muon with substantial loss reductions across diverse model architectures and training setups. Our results also indicate that the sphere-constraint technique holds promise as a general building block for developing more efficient optimizers.

Several limitations and open questions exist in this study. First, the optimal learning rate schedule for sphere-constrained methods  remains an open question. Second, we have not investigated preconditioner design for the slow momentum, which we posit could further accelerate training.

\section{Acknowledgment}
We thank Prof. Lexing Ying and Prof. Weijie Su for helpful discussions.

\clearpage

\bibliographystyle{plainnat}
\bibliography{ref}

\clearpage

\beginappendix

\startcontents[sections]
\printcontents[sections]{l}{1}{\setcounter{tocdepth}{2}}

\newpage
\section{Algorithm Details}\label{app:alg-details}

\subsection{Estimating Flat Directions}\label{app:flat-proj}

\begin{wrapfigure}{r}{0.5\textwidth}   
    \centering
    \includegraphics[width=\linewidth]{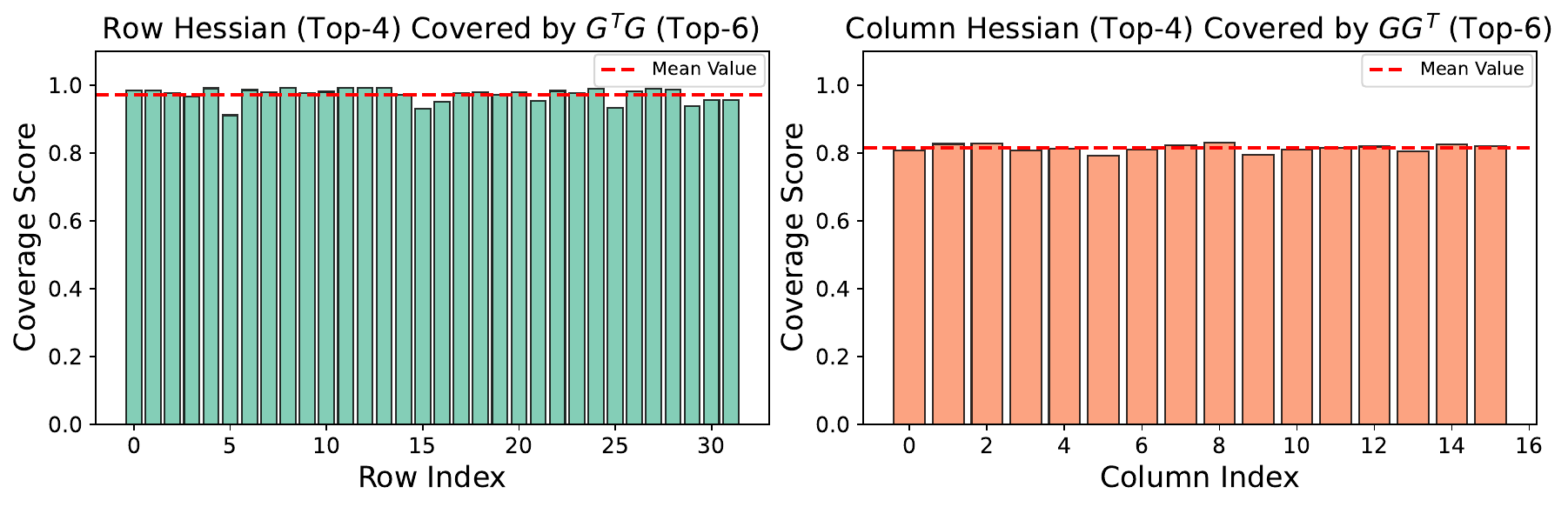}
    \caption{(Figure 3 in~\cite{zhu2026acceleratingllmpretrainingflatdirection}) Coverage of the top eigenspaces of row (column) Hessians by those of $G^\top G$ ($G G^\top$) for the \texttt{up\_proj} block of an FFN layer in a toy dense language model, where $G\in \mathbb{R}^{32\times 16}$. A higher score indicates a greater degree of containment.}
    \label{fig:top-align}
\end{wrapfigure}

In this study, we focus on separating sharp and flat directions within each block.
The construction of the flat-direction projection is motivated by the empirical observation in~\cite{zhu2026acceleratingllmpretrainingflatdirection} that, for a transformation block $W \in \mathbb{R}^{m \times n}$, the top eigenspaces of $G^\top G$ and $G G^\top$ are highly aligned with the per‑row and per‑column Hessians, respectively (Figure~\ref{fig:top-align}).
Therefore, at the $k$-th iteration, we estimate the top singular space of the Nesterov‑type momentum $\widetilde{M}_k^{\text{fast}} := (1-\alpha_{\text{fast}}) M_k^{\text{fast}} + G_k$, which serves as a stabilized proxy for the gradient $G_k$, as the sharp directions.
Specifically, we maintain a projection matrix $P^{\text{right}} \in \mathbb{R}^{n \times d}$ ($d \ll n$, e.g., $d =0.1n$ in our experiments) that captures the sharp directions of the row‑wise Hessian.
At iteration $k$, $P^{\text{right}}$ is updated via online power iteration to track the top $d$ right singular spaces:
\[
P^{\text{right}}_k = \operatorname{msign}\left(\left(\widetilde{M}_k^{\text{fast}}\right)^\top\widetilde{M}_k^{\text{fast}}P^{\text{right}}_{k-1}\right),
\]
and the corresponding projection for the column‑wise Hessian is then computed as
\[
P^{\text{left}}_k = \operatorname{msign}\bigl(\widetilde{M}_k^{\text{fast}} P^{\text{right}}_k\bigr) \in \mathbb{R}^{m \times d}.
\]
Finally, for any $U\in \mathbb{R}^{m \times n}$, the flat‑direction projection is given by
\[
\mathcal{Q}_k(U) = \mathcal{P}_{T_{W_k}\mathbb{S}_R}\left(\bigl(I_m - P^{\text{left}}_k (P^{\text{left}}_k)^\top\bigr) \, U \, \bigl(I_n - P^{\text{right}}_k (P^{\text{right}}_k)^\top\bigr)\right),
\]
where $I$ denotes the identity matrix, $\mathcal{P}_{T_{W_k}\mathbb{S}_R}$ is the projection to the tangent space at $W_k$. Similarly, \cite{robert2025ldadam} applies the power method to  estimate the top eigenspaces. Here we employ the more efficient msign operator to estimate the subspace projection, replacing the computationally expensive QR decomposition.

For the embedding and output matrices $W \in \mathbb{R}^{V \times d_{\text{model}}}$, where $V$ and $d_{\text{model}}$ denote the vocabulary size and model dimension, respectively, we simply select the $d$ rows with the largest norm of the second moment in Adam. This choice is motivated by~\cite{zhang2025adammini}, which shows that the token‑wise Hessian exhibits an approximately block‑diagonal structure across rows, each row (token) corresponding to a dense sub‑block. In practice we set $d \ll V$ (e.g., $d = 0.1V$ in our experiments). For vector and scalar parameters, we do not apply flat‑direction acceleration.

\paragraph{Computation and Memory Overhead of Flat-Direction Projection.} The computational cost (in FLOPs) for computing $M\in \mathbb{R}^{m\times n}$  in computing $\mathsf{msign}(M)$ via Newton-Schulz (NS) iterations is approximately  $6sr^2c$, where $c=\max\{m,n\}, r=\min\{m,n\}$, and $s$ denotes the number of iterations. Consequently, compared to performing NS iterations on the full-dimensional matrix of size  $m\times n$, employing the right and left projection matrices of sizes  $m\times d$ and $n \times d$ reduces the computation to approximately $(d/n)^2$ of the original cost  (e.g., $0.1\times 0.1=0.01$) , rendering it nearly negligible. The additional memory overhead primarily stems from storing the right projection matrix of size $n\times d$, which is significantly smaller than that required for the momentum.

\subsection{Update Rules of \muonours for Different Parameter Types}\label{app:ours-alg-adam}

\paragraph{Notations.}
For a matrix $A$, we use $A[i,:]$ to denote its $i$-th row. For a vector $v$, $v[i]$ denotes its $i$-th element.

We categorize the parameters into the following three types based on their structure:
\begin{enumerate}

\item \textbf{Matrix parameters}: Transformer weight matrices, such as Q, K, V, O, and FFN blocks. We apply \muonours (Algorithm \ref{alg:ours}) to these.
\item \textbf{Vector parameters}: biases, learnable vectors in RMSNorm layers, each row of embedding/output layers (corresponding to each token),   and the learnable grouped radii introduced by the sphere constraint. For these, we use Adam with sphere constraints, with slow momentum applied to partial vector blocks.
\item \textbf{Scalar parameters}: native architectural scalars and the learnable radii from the sphere constraint. We directly apply Adam to these.
\end{enumerate}
The specific update rules are as follows.

\begin{itemize}
\item \textbf{Scalar.}
It is initialized as $1$ and trained directly by Adam without weight decay. The learning rate is the same as other vector and matrix parameters. In practice, if several scalars are interchangeable (e.g., the scalar multipliers of a weight matrix and its preceding RMSNorm, of the query and key matrices in attention, or of the up and down matrices in the FFN), we group them into a single learnable parameter.

\item \textbf{Vector.}
For a vector $\gamma \in \mathbb{R}^d$, we parameterize it as $\gamma = r\,\widehat{\gamma}$, where $r \in \mathbb{R}$ and $\widehat{\gamma} \in \mathbb{R}^d$ has a fixed norm $R$.
At the $k$-th iteration, it is updated by
\begin{equation}
\begin{aligned}
   m_{k}  &= (1-\alpha_{\text{fast}}^{\text{adam}})\,\mathcal{T}_{\widehat{\gamma}_{k-1}\to \widehat{\gamma}_{k}} (m_{k-1}) + \mathcal{P}_{T_{\widehat{\gamma}_k}\mathbb{S}_{R}}\,{g}_k, \\
   v_{k}  &= (1-\beta_{\text{adam}})\,v_{k-1} + \beta_{\text{adam}}\,{g}_k\odot g_k, \\
   \widehat{\gamma}_{k+1} &= \mathsf{Norm}_{R}\Bigl(\widehat{\gamma}_{k}-\eta_k\mathcal{P}_{T_{\widehat{\gamma}_{k}}\mathbb{S}_{R}}\Bigl(\frac{m_k}{\sqrt{v_k}+\epsilon}\Bigr)\Bigr),
\end{aligned}
\end{equation}
where $g_k$ is the stochastic gradient.

\item \textbf{Embedding and Output.}
They are re-parameterized as $W = \operatorname{diag}(\gamma)\,\widehat{W}$, and we impose row-wise sphere constraints on $\widehat{W}\in \mathbb{R}^{V\times d}$, where the $i$-th row has a fixed Frobenius norm $R_i$. The vector $\gamma$ is treated as a vector parameter and follows the vector update rules above.
At the $k$-th iteration, for each row $i$ we update the momenta as
\begin{equation}
    M_k^{\#}[i,:] = (1-\alpha_{\#}^{\text{adam}})\,\mathcal{T}_{W_{k-1}[i,:]\to W_{k}[i,:]}\,(M_{k-1}^{\#}[i,:]) + \mathcal{P}_{T_{W_k[i,:]}\mathbb{S}_{R_i}}\,G_k[i,:], \quad \#\in\{\text{fast},\text{slow}\},
\end{equation}
where $G_k$ is the stochastic gradient, and parallel transport is performed row‑wise.
Then the flat‑direction projection $\mathcal{Q}_k$ is computed as described in Section~\ref{app:flat-proj}, yielding a vector $q_k \in [0,1]^V$, where a value of $1$ indicates that the corresponding row is selected as a flat direction.
The update direction for each row is then
\begin{equation}
U_k[i,:] = \mathsf{Norm}_{R_i}\Bigl(\mathcal{P}_{T_{W_k[i,:]}\mathbb{S}_{R_i}}\Bigl(\frac{\widetilde{M}_k^{\mathrm{fast}}[i,:]}{\sqrt{V_k[i,:]}+\epsilon}\Bigr)\Bigr) + \chi\,q_k[i]\,\mathsf{Norm}_{R_i}\bigl(M_k^{\mathrm{slow}}[i,:]\bigr),
\end{equation}
where the second moment $V_k$ is maintained as $V_{k} = (1-\beta_{\text{adam}})\,V_{k-1} + \beta_{\text{adam}}\,G_k\odot G_k$.
Finally, the parameters are updated row‑wise via
\begin{equation}
    W_{k+1}[i,:] = \mathsf{Norm}_{R_i}\bigl(W_k[i,:] - \eta_k\,U_k[i,:]\bigr).
\end{equation}
\end{itemize}

\section{Experimental Details}\label{app:exp-details}
We use 6 Newton-Schulz (NS) iterations with coefficients in  Polar Express~\cite{amsel2025polarexpressoptimalmatrix} to implement the msign operation, which achieves higher precision than the method in \cite{jordan2024muon}. The peak learning rate is searching in $\{$5e-4, 7e-4, 1e-3, 2e-3, 3e-3, 5e-3, 7e-3, 1e-2, 1.5e-2, 2e-2$\}$ for all optimizers. 

\subsection{Experimental Details of Section~\ref{sec:obt-mts}}\label{app:exp-obt-mts}
In our hyperparameter search, the optimal learning rate for Muon was found to be \(0.005\). The sharp subspace ratio was set to \(10\%\), following the procedure described in Appendix~\ref{app:flat-proj}. Beyond the learning rate, we set $\alpha_{\text{fast}}=0.05$, searched over \(\alpha_{\text{slow}} \in \{0.005, 0.002, 0.001, 0.0005\}\) and \(\chi \in \{0.1, 0.2, 0.5\}\). The final best configuration is \(\text{lr} = 0.005\), \(\alpha_{\text{slow}} = 0.002\), and \(\chi = 0.1\).

\subsection{Experimental Details of Section~\ref{sec:results-dense}}

\paragraph{Comparing Different Optimizers on the 0.12B Dense Model.}
We use the cosine decay learning rate schedules for all optimizers. For MuonH and SSO, the optimal learning rates found by grid search are $1\times10^{-2}$ and $5\times10^{-3}$, respectively.
We also tuned the radius for SSO over $\{1.0, 2.0, 5.0\}$ and found that $2.0$ yields the lowest loss.
For AdEMAMix, we searched $\beta_3 \in \{0.01, 0.001, 0.0001\}$ and $\alpha \in \{1, 3, 5, 8\}$, obtaining the best configuration $(\mathrm{lr}, \beta_3, \alpha) = (3\times10^{-3}, 1\times10^{-3}, 5.0)$.
Following~\cite{pagliardini2025the}, the warmup iterations $T_\alpha$ and $T_{\beta_3}$ are set to the total number of iterations.
For EMA-Nesterov, we searched $\gamma \in \{0.99, 0.995, 0.999\}$ and $\beta_{\max} \in \{0.3, 0.5, 0.7, 0.9\}$, and selected $(\mathrm{lr}, \beta_{\max}, \gamma) = (3\times10^{-3}, 0.9, 0.99)$.
The coefficient schedule of $\beta_t$ involves two transition points $T_w$ and $T_r$, and we adopted the recommended values $T_w = 0.3\,T$ and $T_r = 0.8\,T$ from~\cite{yau2026emanesterovstabilizingnesterovslookahead}, where $T$ denotes the total number of iterations.

\textbf{Muon Configuration.} We follow the standard Muon setup described in \cite{liu2025muon}.  Muon is applied exclusively to all 2D transformer weight matrices, while AdamW handles all other parameters (RMSNorm, embeddings, and the output layer). For a Muon‑updated matrix of shape $m\times n$, the learning rate is scaled by $0.2\sqrt{\max\{m,n\}}$
to align the RMS of the updates with that of AdamW. We employ Nesterov momentum with decay rate $\theta_{\text{muon}}=0.95$, weight decay $\lambda=0.1$, and a   gradient clipping threshold of $1.0$.

\textbf{\muonours Configuration.} Unless otherwise specified, we uniformly adopt $\chi=0.2$, $\alpha_{\text{fast}}=0.05$, $\alpha_{\text{slow}}=0.001$, and set the top $10\%$ subspace as sharp directions for each block using \muonours.  Given the rapid gradient variations during the early training phase, we apply a warmup mechanism to the slow momentum coefficient $\alpha_{\text{slow}}$ as
\[
\alpha_{\text{slow},t}=
\begin{cases}
\alpha_{\text{fast}}, & t\le T_{\text{warmup}},\\[6pt]
\displaystyle
\frac{\alpha_{\mathrm{fast}}}
{1+\frac{\alpha_{\mathrm{fast}}/\alpha_{\mathrm{slow}}-1}{T_{\text{warmup}}}(t-T_{\text{warmup}})},
& T_{\text{warmup}}<t\le 2T_{\text{warmup}},\\[12pt]
\displaystyle
\alpha_{\mathrm{slow}},
& t>2T_{\text{warmup}},
\end{cases}
\]
where $T_{\text{warmup}}$ is the warmup iterations for the learning rate.

\paragraph{Models.}  We utilize two popular classes of LLM architectures in the experiments:

\begin{itemize}
    \item \textbf{Dense.} We adopt a dense decoder-only Transformer architecture that uses Rotary Positional Encoding (RoPE) \cite{su2024roformer}, SwiGLU feed-forward layers, RMSNorm, and a Pre-Norm design.  Multi-head attention (MHA) is employed in self attention. Detailed configurations are provided in Table \ref{table:dense_model_config_and_max_lrs}.

\item \textbf{MoE.} The Mixture-of-Experts (MoE) architecture we use is built on a decoder-only Transformer with Multi-head Latent Attention (MLA) \cite{deepseekai2024deepseekv2strongeconomicalefficient}. Each MoE layer employs one shared expert together with partially activated experts per token. The first layer is configured as dense. We use an auxiliary-loss-free load balancing strategy. RMSNorm with a Pre-Norm design and RoPE (applied within the decoupled rotary components of MLA) are used throughout. Detailed configurations are provided in Table \ref{table:moe_model_config_and_max_lrs}.
\end{itemize}

\begin{table}[!ht]
    \centering
    \renewcommand{\arraystretch}{1.25}
\caption{\small Dense LLaMA model configurations. All models are trained with a default sequence length of 1024.}
    \label{table:dense_model_config_and_max_lrs}
    \begin{small}
        \begin{tabular}{l|c|c|c|c|c|c|c|c|c}
            \hline
            \multirow{2}{*}{Acronym} & \multirow{2}{*}{Size} &
            \multirow{2}{*}{$d_{\mathrm{model}}$} & \multirow{2}{*}{$d_{\mathrm{FFN}}$} &
            \multirow{2}{*}{$n_{\text{head}}$} & \multirow{2}{*}{$n_{\text{layer}}$} &
            \multirow{2}{*}{Batch size} &
            \multicolumn{3}{c}{\texttt{lr\_max}} \\
            \cline{8-10}
            & & & & & & & Muon & MuonS & \muonours \\
            \hline\hline
            Dense (0.12B) & 0.12B & 768  & 2048 & 12 & 6  & 480  & 5e-3 & 1e-2 & 7e-3 \\
            Dense (0.25B) & 0.25B & 768  & 2048 & 16 & 24 & 512 & 3e-3 & 7e-3 & 4.5e-3 \\
            Dense (0.6B)  & 0.6B  & 1280 & 3412 & 20 & 24 & 512 & 1e-3 & 5e-3 & 3.5e-3 \\
            Dense (1.4B)  & 1.4B  & 2048 & 5461 & 32 & 24 & 512 & 7e-4 & 3e-3 & 2e-3 \\
            \hline
        \end{tabular}
    \end{small}
\end{table}


\begin{table}[!ht]
    \centering
    \renewcommand{\arraystretch}{1.25}
    \caption{\small MoE model configurations. All models are trained with a
    default sequence length of 4096 and batch size of 1024.}
    \label{table:moe_model_config_and_max_lrs}
    \begin{small}
    \resizebox{\textwidth}{!}{%
        \begin{tabular}{l|*{16}{c|}c}
            \hline
            \multirow{2}{*}{Acronym}
            & \multirow{2}{*}{Size}
            & \multirow{2}{*}{Activated Size}
            & \multirow{2}{*}{$d_{\mathrm{model}}$}
            & \multirow{2}{*}{$n_{\mathrm{head}}$}
            & \multirow{2}{*}{$n_{\mathrm{layer}}$}
            & \multicolumn{3}{c|}{$d_{\mathrm{FFN}}$}
            & \multirow{2}{*}{$n_{\mathrm{expert}}$}
            & \multicolumn{5}{c|}{MLA Dimensions}
            & \multicolumn{3}{c}{\texttt{lr\_max}} \\
            \cline{7-9}
            \cline{11-15}
            \cline{16-18}
            & & & & & 
            & Dense
            & Routed Expert
            & Shared Expert
            &
            & $r_q$
            & $r_{kv}$
            & $d_{qk}$
            & $d_{qk}^{\text{rope}}$
            & $d_v$
            & Muon
            & MuonS
            & \muonours \\
            \hline\hline

            MoE-0.6B
            & 0.64B & 0.13B
            & 704 & 11 & 12
            & 3584 & 672 & 672
            & 4/32
            & 704 & 240 & 128 & 64 & 128
            & 7e-3 & 1e-2 & 7e-3 \\

            MoE-2.3B
            & 2.27B & 0.36B
            & 1280 & 16 & 12
            & 7168 & 704 & 1536
            & 4/64
            & 1536 & 512 & 128 & 64 & 128
            & 3e-3 & 7e-3 & 4.5e-3 \\
            \hline
        \end{tabular}%
    }
    \end{small}
\end{table}

\paragraph{Learning Rate Schedulers.}
We evaluate two common learning rate  schedules with $\min\{1000,T/50\}$  warm-up steps in all  experiments ($T$ is the total steps):
\begin{itemize}
\item \texttt{cos} (cosine decay):
a linear warm-up to peak \texttt{lr\_max}, followed by cosine decay to a terminal learning rate $\texttt{lr\_min}=0.05\times \texttt{lr\_max}$.
\item \texttt{wsd} (warmup-stable-decay):
a linear warm-up to \texttt{lr\_max}, followed by a stable phase with constant  learning rate of \texttt{lr\_max} (up to 70\% of total steps), and then a minus-sqrt ($1-\sqrt{t}$) decay to $\texttt{lr\_min}=0.05\times \texttt{lr\_max}$ over the final 30\% steps. The searched peak learning rates for Muon and MuonS on the 0.64B MoE model are 5e-3 and 7e-3  respectively. Then we directly set the peak learning rate of \muonours to  4.5e-3 $\approx$ 7e-3$\times2/3$.
\end{itemize}

\begin{table}[h]
\centering
\caption{Optimal peak learning rates for Muon and MuonS  at various training steps on the 0.12B dense model. The peak learning rate of \muonours is set to approximately 2/3 of that of MuonS without search.}
\label{tab:optimal_lr_sweep_overtrain}
\begin{tabular}{lccccccc}
\toprule
Iterations & 20k & 35k & 50k & 75k & 100k & 150k & 200k \\
\midrule
Muon      & 5e-3 & 3e-3 & 2e-3 &2e-3  &2e-3  &  2e-3& 1e-3 \\
MuonS     & 1e-2 & 1e-2 & 7e-3 & 7e-3 &  7e-3 & 7e-3 & 7e-3 \\
\muonours & 7e-3 &7e-3  & 4.5e-3 & 4.5e-3 & 4.5e-3 & 4.5e-3 & 4.5e-3 \\
\bottomrule
\end{tabular}
\end{table}

\section{Additional Experimental Results}

\subsection{Results with WSD Learning Rate Schedules}
Figure~\ref{fig:wsd-moe-0.6b} shows the loss trajectories of Muon, MuonS, and \muonours under the WSD learning rate schedule. \muonours achieves the lowest terminal loss. The peak learning rates for Muon, MuonS, and \muonours are 0.005, 0.007 and 0.0045, respectively.
\begin{figure}[H]
    \centering
    \includegraphics[width=0.3\linewidth]{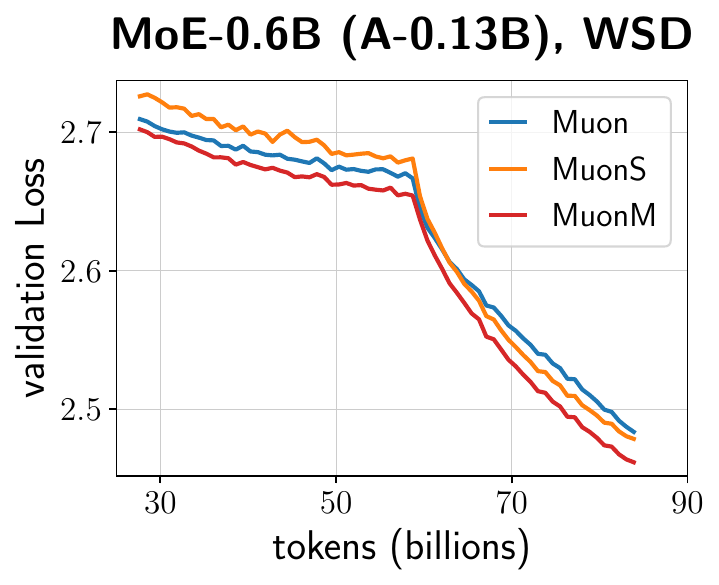}
    \caption{Validation losses of different optimizers on the 0.6B MoE model under the WSD schedule.}
    \label{fig:wsd-moe-0.6b}
\end{figure}

\section{A Continuous-Time Perspective for Unifying Enhanced Momentum Methods}
\label{sec:multiscale-ode-unification}

In this section, we establish the connections among AdEMAMix, GPA, EMA-Nesterov, SODA, and our method: all share the same continuous-time ODE form when preconditioning is ignored. For GPA and EMA-Nesterov, we assume the underlying base optimizer is Heavy Ball momentum. For brevity, our ODE derivation is conducted under gradient-noise-free conditions and with constant hyper-parameters.

Let $g_k= \nabla f(w_k)$ denote the gradient and $P_k$ the preconditioner at the $k$-th iteration. Let $h>0$ denote the discretization interval and $t_k=kh$. For the  EMA coefficient $\rho$, we use the scaling $\rho=e^{-h/\tau}\approx 1-\frac{h}{\tau}$ 
where $\tau>0$ is the corresponding time  constant. The algorithmic details and continuous limits are as follows.

\paragraph{Our Approach}  We illustrate our approach using \muonours as an example. Omitting the sphere constraint and Muon preconditioning yields the following discrete-time recursion (constants have been absorbed into step sizes for notational simplicity):
\begin{equation}\label{eq:alg-discrete}
\begin{aligned}
 m_k^{\#}&=(1-\alpha_{\#})m_{k-1}^{\#}+\alpha_{\#}g_k,\quad \#\in\{\mathrm{fast},\mathrm{slow}\},\\
 w_{k+1}&=w_k-\eta_k
 \bigl(\chi m_k^{\mathrm{slow}}+\bd g_k+m_k^{\mathrm{fast}}\bigr).
\end{aligned}
\end{equation}
Under a minor change of notation, the continuous-time counterpart takes the form
\begin{equation}
\label{eq:ours-state-ode}
\begin{aligned}
    \dot m_{\mathrm f}
    &=-\tau_{\mathrm f}^{-1} m_{\mathrm f}+\nabla f(w),\\
    \dot m_{\mathrm s}
    &=-\tau_{\mathrm s}^{-1} m_{\mathrm s}+\nabla f(w),\\
    \dot w
    &=-a_{\mathrm f}m_{\mathrm f}-a_{\mathrm s}m_{\mathrm s}-a_{\mathrm g} \nabla f(w).
\end{aligned}
\end{equation}
Let \(g_t=\nabla f(w_t)\). The   momentum equations can be written as$
\left(1+\tau_{\mathrm f}\frac{d}{dt}\right)m_{\mathrm f}
=\tau_{\mathrm f}g_t,
\,
\left(1+\tau_{\mathrm s}\frac{d}{dt}\right)m_{\mathrm s}
=\tau_{\mathrm s}g_t$.

Applying the operator
$
\left(1+\tau_{\mathrm f}\frac{d}{dt}\right)
\left(1+\tau_{\mathrm s}\frac{d}{dt}\right)
$
to  
$
\frac{d}{dt}w_t
=-a_{\mathrm f}m_{\mathrm f}
-a_{\mathrm s}m_{\mathrm s}
-a_{\mathrm g}g_t,
$
we obtain
\[
\begin{aligned}
0={}&
\left(1+\tau_{\mathrm f}\frac{d}{dt}\right)
\left(1+\tau_{\mathrm s}\frac{d}{dt}\right)
\frac{d}{dt}w_t
+a_{\mathrm f}\tau_{\mathrm f}
\left(1+\tau_{\mathrm s}\frac{d}{dt}\right)g_t\\
&+a_{\mathrm s}\tau_{\mathrm s}
\left(1+\tau_{\mathrm f}\frac{d}{dt}\right)g_t
+a_{\mathrm g}
\left(1+\tau_{\mathrm f}\frac{d}{dt}\right)
\left(1+\tau_{\mathrm s}\frac{d}{dt}\right)g_t.
\end{aligned}
\]
Expanding the operator products gives the third-order ODE
\[
\begin{aligned}
0={}&
\tau_{\mathrm f}\tau_{\mathrm s}\frac{d^3}{dt^3}w_t
+\left(\tau_{\mathrm f}+\tau_{\mathrm s}\right)
\frac{d^2}{dt^2}w_t
+\frac{d}{dt}w_t
+\left(a_{\mathrm f}\tau_{\mathrm f}
+a_{\mathrm s}\tau_{\mathrm s}+a_{\mathrm g}\right)
\nabla f(w_t)\\
&+\left[\tau_{\mathrm f}\tau_{\mathrm s}
\left(a_{\mathrm f}+a_{\mathrm s}\right)
+a_{\mathrm g}\left(\tau_{\mathrm f}+\tau_{\mathrm s}\right)\right]
\nabla^2 f(w_t)\frac{d}{dt}w_t
+a_{\mathrm g}\tau_{\mathrm f}\tau_{\mathrm s}
\left(\nabla^3 f(w_t)\left[\frac{d}{dt}w_t,\frac{d}{dt}w_t\right]
+\nabla^2 f(w_t)\frac{d^2}{dt^2}w_t\right),
\end{aligned}
\]
where we also use
\[
\frac{d^2}{dt^2}\nabla f(w_t)
=
\nabla^3 f(w_t)\left[\frac{d}{dt}w_t,\frac{d}{dt}w_t\right]
+\nabla^2 f(w_t)\frac{d^2}{dt^2}w_t.
\]

\paragraph{GPA} GPA maintains a base sequence $z_k$, an exponentially averaged sequence $x_k$, and an interpolation point $y_k$ at which the stochastic gradient is evaluated \cite{defazio2026smoothingdilocoprimalaveraging}. The generic GPA framework permits an arbitrary base optimizer. We instantiate it with preconditioned Heavy Ball:
\begin{equation}
\label{eq:gpa-hb-full}
\begin{aligned}
    y_k
    &= \mu_y x_k+(1-\mu_y)z_k,\\
    b_k
    &= \rho_{\mathrm H}b_{k-1}+(1-\rho_{\mathrm H})g_k,\\
    d_k
    &= P_k b_k,\\
    z_{k+1}
    &= (1-\eta_k\lambda)z_k-\eta_k d_k,\\
    x_{k+1}
    &= \mu_xx_k+(1-\mu_x)z_{k+1}.
\end{aligned}
\end{equation}
Here $b_k$ is the Heavy-Ball state, $\mu_x$ controls primal averaging, and $\mu_y$ controls the interpolation between the averaged and base iterates.

The continuous time limit of \eqref{eq:gpa-hb-full} without preconditioner and weight decay can be written as
\begin{equation}
\label{eq:gpa-hb-state-ode}
\begin{aligned}
    \dot b &=-\tau_{\mathrm f}^{-1}b+\tau_{\mathrm f}^{-1}\nabla f(y),\\
    \dot z &=-b,\\
    \dot x &=-\tau_{\mathrm s}^{-1}x+\tau_{\mathrm s}^{-1}z,\\
    y &=\mu_yx+(1-\mu_y)z.
\end{aligned}
\end{equation}
The last two equations imply
\begin{equation}
\label{eq:gpa-filter-relation}
    (1+\tau_{\mathrm s}\frac{d}{dt})y
    =\bigl(1+(1-\mu_y)\tau_{\mathrm s}\frac{d}{dt}\bigr)z.
\end{equation}
The first two equations give
\begin{equation}
    (1+\tau_{\mathrm f}\frac{d}{dt})\frac{d}{dt}z+\nabla f(y)=0.
\end{equation}
Eliminating $z$ yields the third-order GPA (Heavy-Ball) ODE
\begin{equation}
\label{eq:gpa-hb-third-order-operator}
\begin{aligned}
 (1+\tau_{\mathrm f}\frac{d}{dt})(1+\tau_{\mathrm s}\frac{d}{dt})\frac{d}{dt}y
 +\bigl(1+(1-\mu_y)\tau_{\mathrm s}\frac{d}{dt}\bigr)\nabla f(y)=0.
\end{aligned}
\end{equation}
Namely
\begin{equation}
\label{eq:gpa-hb-third-order-expanded}
\begin{aligned}
    \tau_{\mathrm f}\tau_{\mathrm s}\frac{d^3}{dt^3}y_t
    +\bigl(\tau_{\mathrm f}+\tau_{\mathrm s}
      \bigr)\frac{d^2}{dt^2}y_t
    +\frac{d}{dt}y_t
      +\nabla f(y_t)+(1-\mu_y)\tau_{\mathrm s}\nabla^2 f(y_t) \frac{d}{dt}y_t=0.
\end{aligned}
\end{equation}

\paragraph{AdEMAMix} 
AdEMAMix augments AdamW with a second, substantially slower first-moment EMA \cite{pagliardini2025the}. Let $m_k^{\mathrm f}$ and $m_k^{\mathrm s}$ denote the fast and slow first moments, and let $v_k$ denote the second moment. Its complete update is
\begin{equation}
\label{eq:ademamix-full}
\begin{aligned}
    m_k^{\mathrm f}
    &= \beta_1 m_{k-1}^{\mathrm f}+(1-\beta_1)g_k,
    &\widehat m_k^{\mathrm f}
    &= \frac{m_k^{\mathrm f}}{1-\beta_1^k},\\
    m_k^{\mathrm s}
    &= \beta_{3,k}m_{k-1}^{\mathrm s}+(1-\beta_{3,k})g_k,\\
    v_k
    &= \beta_2v_{k-1}+(1-\beta_2)g_k^{\odot 2},
    &\widehat v_k
    &= \frac{v_k}{1-\beta_2^k},\\
    w_{k+1}
    &= (1-\eta_k\lambda)w_k
       -\eta_k\frac{\widehat m_k^{\mathrm f}+a_km_k^{\mathrm s}}
       {\sqrt{\widehat v_k}+\epsilon}.
\end{aligned}
\end{equation}

After removing the second-moment preconditioner, bias-correction and weight decay, the continuous state equations are
\begin{equation}
\label{eq:ademamix-state-ode}
\begin{aligned}
    \dot m_{\mathrm f}
    &=-\tau_{\mathrm f}^{-1} m_{\mathrm f}+\tau_{\mathrm f}^{-1}\nabla f(w),\\
    \dot m_{\mathrm s}
    &=-\tau_{\mathrm s}^{-1} m_{\mathrm s}+\tau_{\mathrm s}^{-1}\nabla f(w),\\
    \dot w
    &=-a_{\mathrm f}m_{\mathrm f}-a_{\mathrm s}m_{\mathrm s}.
\end{aligned}
\end{equation}
Applying $(1+\tau_{\mathrm f}\frac{d}{dt})(1+\tau_{\mathrm s}\frac{d}{dt})$ to the parameter equation and eliminating the two momentum states gives
\begin{equation}
\label{eq:ademamix-third-order-operator}
\begin{aligned}
 (1+\tau_{\mathrm f}\frac{d}{dt})(1+\tau_{\mathrm s}\frac{d}{dt})\frac{d}{dt}w
 +a_{\mathrm f}(1+\tau_{\mathrm s}\frac{d}{dt})\nabla f(w)
 +a_{\mathrm s}(1+\tau_{\mathrm f}\frac{d}{dt})\nabla f(w)=0.
\end{aligned}
\end{equation}
Equivalently,
\begin{equation}
\label{eq:ademamix-third-order-expanded}
\begin{aligned}
    \tau_{\mathrm f}\tau_{\mathrm s}\frac{d^3}{dt^3}w_t
    +\bigl(\tau_{\mathrm f}+\tau_{\mathrm s}
      \bigr)\frac{d^2}{dt^2} w_t
    +\frac{d}{dt} w_t
    +(a_{\mathrm f}+a_{\mathrm s})\nabla f(w_t)
      +(a_{\mathrm f}\tau_{\mathrm s}
        +a_{\mathrm s}\tau_{\mathrm f})\nabla ^2f(w_t)\frac{d}{dt} (w_t)=0.
\end{aligned}
\end{equation}

\paragraph{EMA-Nesterov} EMA-Nesterov is a wrapper that replaces the one-step Nesterov look-ahead direction by an EMA of parameter increments \cite{yau2026emanesterovstabilizingnesterovslookahead}. Let $z_k$ denote the EMA look-ahead state. When the wrapped base optimizer is preconditioned Heavy Ball momentum, the update rule is
\begin{equation}
\label{eq:ema-nesterov-hb-full}
\begin{aligned}
    y_k
    &= x_k+\beta_kz_k,\\
    b_k
    &= \rho_{\mathrm H}b_{k-1}+(1-\rho_{\mathrm H})g_k,\\
    x_{k+1}
    &= (1-\eta_k\lambda)y_k-\eta_kP_kb_k,\\
    z_{k+1}
    &= \rho_{z}z_k+(1-\rho_{z})(x_{k+1}-x_k).
\end{aligned}
\end{equation}
If $\mathcal A_k$ denotes the base-optimizer map, the original wrapper is
$x_{k+1}=\mathcal A_k(x_k+\beta_kz_k)$ followed by the final line of \eqref{eq:ema-nesterov-hb-full}. 

The continuous time limit of \eqref{eq:ema-nesterov-hb-full} without preconditioning and weight decay is
\begin{equation}
\label{eq:ema-nesterov-hb-state-ode}
\begin{aligned}
    \dot b  &=-\tau_{\mathrm f}^{-1}b+\tau_{\mathrm f}^{-1}\nabla f(x),\\
    \dot z  &=-\tau_{\mathrm s}^{-1}z+\tau_{\mathrm s}^{-1}\dot x,\\
    \dot x &=\beta z-b.
\end{aligned}
\end{equation}
Eliminating $b$ and $z$ gives
\begin{equation}
\label{eq:ema-nesterov-hb-third-order-operator}
\begin{aligned}
 (1+\tau_{\mathrm f}\frac{d}{dt})
 \left[(1+\tau_{\mathrm s}\frac{d}{dt})\frac{d}{dt}-\beta \frac{d}{dt}\right]x  +(1+\tau_{\mathrm s}\frac{d}{dt})\nabla f(x)=0.
\end{aligned}
\end{equation}
Equivalently,
\begin{equation}
\label{eq:ema-nesterov-hb-third-order-expanded}
\begin{aligned}
    \tau_{\mathrm f}\tau_{\mathrm s}\frac{d^3}{dt^3} x_t
    +\bigl(\tau_{\mathrm f}+\tau_{\mathrm s}
      -\beta\tau_{\mathrm f}\bigr)\frac{d^2}{dt^2} x_t
    +\bigl(1-\beta\bigr)\frac{d}{dt}x_t
      +\nabla f(x_t)+\tau_{\mathrm s}\nabla^2 f(x_t) \frac{d}{dt}x_t=0.
\end{aligned}
\end{equation}

\paragraph{SODA} 
SODA combines gradient averaging, dual optimism, primal averaging, and primal extrapolation \cite{pethick2026optimisticdualaveragingunifies}. Its full update is
\begin{equation}
\label{eq:soda-full}
\begin{aligned}
    m_{k+1}
    &= (1-\alpha_k)m_k+\alpha_k\nabla f(y_k),\\
    \bar m_{k+1}
    &= (1-\bar\alpha_k)m_{k+1}
       +\bar\alpha_k\nabla f(y_k),\\
    z_{k+1}
    &\in \partial h_k^*(-\gamma_k\bar m_{k+1})
      =\arg\min_z\left\{\gamma_k\langle\bar m_{k+1},z\rangle+h_k(z)\right\},\\
    x_{k+1}
    &= (1-\lambda_k)x_k+\lambda_k z_{k+1},\\
    y_{k+1}
    &= (1-\bar\lambda_k)x_{k+1}+\bar\lambda_k z_{k+1}.
\end{aligned}
\end{equation}

We consider the Euclidean mirror map $h(z)=\frac{1}{2}\|z\|^2$, and 
 freeze these coefficients with new notations. The continuous limit  is
\begin{equation}
\label{eq:soda-state-ode}
\begin{aligned}
    \dot m &=-\tau^{-1} m+\tau^{-1}\nabla f(y),\\
    \dot x&=-\lambda_1 x-\eta(m+\omega \nabla f(y)),\\
    y &=(1-\lambda_2)x-\nu (m+\omega \nabla f(y)).
\end{aligned}
\end{equation}

Define the quantities $    s= m+\omega g(y),
    \delta= \eta a+\nu\lambda_1$. The state equations then become
\begin{equation}
\label{eq:soda-s-state}
    \dot x=-\lambda_1x-\eta s,
    \qquad
    y=(1-\lambda_2)x-\nu s.
\end{equation}
Moreover,
\begin{align}
    \dot s
    =\dot m+\omega \nabla^2f(y)\dot y
    =-\tau^{-1}s
      +\frac{1+\omega}{\tau}\nabla f(y)
      +\omega \nabla^2f(y)\dot y.
\label{eq:soda-s-dynamics}
\end{align}

Differentiating $y=ax-\nu s$ and using
\eqref{eq:soda-s-state}--\eqref{eq:soda-s-dynamics} yield
\begin{align}
    \bigl(I+\nu\omega \nabla^2f(y)\bigr)\dot y
    &=
    -\lambda_1y
    +
    q s
    -
    \frac{\nu(1+\omega)}{\tau}g(y),
\label{eq:soda-y-first-order}
\end{align}
where $  q=
    \nu\bigl(\tau^{-1}-\lambda_1\bigr)
    -\eta(1-\lambda_2)$.

Differentiating \eqref{eq:soda-y-first-order} and eliminating
$s$ and $\dot s$ give a closed equation for $y$:
\begin{equation}
\label{eq:soda-y-second-order}
\begin{aligned}
    \bigl(I+\nu\omega \nabla^2f(y)\bigr)\ddot y
    +
    \nu\omega
    \nabla^3f(y)[\dot y,\dot y]
    +
    \Biggl[
        \bigl(\lambda_1+\frac{1}{\tau}\bigr)I
        +
        \ell \nabla^2f(y)
    \Biggr]\dot y
    +
    \frac{\lambda_1}{\tau}y
    +
    \frac{(1+\omega)\delta}{\tau}
    \nabla f(y)
    =0,
\end{aligned}
\end{equation}
where $\ell=
            \frac{\nu(1+\omega)}{\tau}
            +
            \omega\delta
        $, and the $\frac{\lambda_1}{\tau}y$ term denotes the weight decay. Optimistic momentum and primal extrapolation together yield an implicit preconditioner  $\bigl(I+\nu\omega \nabla^2f(y)\bigr)^{-1}$.

\subsection{Unified ODE Representation}
\label{subsec:common-operator-family}

The preceding derivation indicates that these  continuous-time limits can be cast in a unified form
\begin{equation}
\label{eq:general-cubic-family}
\mathcal A\left(\frac{d}{dt}\right)w_t+\mathcal B\left(\frac{d}{dt}\right)\nabla f(w_t)=0,
\end{equation}
where $\mathcal{A}, \mathcal{B}$ are matrix-valued polynomials in the differential operator $\frac{d}{dt}$:
\begin{equation}
\begin{aligned}
\mathcal A\left(\frac{d}{dt}\right)
=\sum_{i=0}^n A_i\left(\frac{d}{dt}\right)^i,\qquad
\mathcal B\left(\frac{d}{dt}\right)
=\sum_{i=0}^m B_i\left(\frac{d}{dt}\right)^i.
\end{aligned}
\end{equation}

For two-timescale momentum, $\mathcal{A}$ is of degree three. This polynomial framework naturally generalizes to the $n$-timescale case, where the degree of $\mathcal{A}$ becomes $n+1$.

\section{Illustrative Theoretical Analysis}

\subsection{Variance Reduction of Multiscale Momentum}\label{app:var-mts}
In this section, we analyze the variance of multiscale momentum through an
illustrative Gaussian example, demonstrating the variance-reduction effect of
introducing a slow momentum component. Consider
\begin{equation}
    \begin{aligned}
        m^{\mathrm{slow}}_k
        &=(1-\alpha_{\mathrm{slow}})\,m^{\mathrm{slow}}_{k-1}+\alpha_{\mathrm{slow}}\,g_k,\\
        m^{\mathrm{fast}}_k
        &=(1-\alpha_{\mathrm{fast}})\,m^{\mathrm{fast}}_{k-1}+\alpha_{\mathrm{fast}}\,g_k,\\
        m_k&=c\, m^{\mathrm{slow}}_k+(1-c)\,m^{\mathrm{fast}}_k,
    \end{aligned}
\end{equation}
where $0<\alpha_{\mathrm{slow}}<\alpha_{\mathrm{fast}}\le 1$ and
$g_k\overset{\mathrm{i.i.d.}}{\sim}\mathcal{N}(\mu,\Sigma^2)$.
We are interested in the stationary covariance of the update $m_k$
which preserves the stationary mean, i.e.,
$\mathbb{E}[m_\infty]=\mu$.

Let $\epsilon_k=g_k-\mu$ denote the zero-mean noise. Unrolling the recursions,
the noise components of the two momenta are
\begin{equation}
    m^{\mathrm{slow}}_k-\mu
    =\sum_{j=0}^{k}(1-\alpha_{\mathrm{slow}})^{j}\,\alpha_{\mathrm{slow}}\,\epsilon_{k-j},
    \qquad
    m^{\mathrm{fast}}_k-\mu
    =\sum_{j=0}^{k}(1-\alpha_{\mathrm{fast}})^{j}\,\alpha_{\mathrm{fast}}\,\epsilon_{k-j}.
\end{equation}
Therefore, in stationarity,
\begin{equation}
\begin{aligned}
    &\operatorname{Cov}(m^{\mathrm{slow}}_\infty)
    =\sum_{j=0}^{\infty}(1-\alpha_{\mathrm{slow}})^{2j}\,\alpha_{\mathrm{slow}}^2\,\Sigma^2,
    \qquad
    \operatorname{Cov}(m^{\mathrm{fast}}_\infty)
    =\sum_{j=0}^{\infty}(1-\alpha_{\mathrm{fast}})^{2j}\,\alpha_{\mathrm{fast}}^2\,\Sigma^2,
    \\&
    \operatorname{Cov}(m^{\mathrm{slow}}_\infty,m^{\mathrm{fast}}_\infty)
    =\sum_{j=0}^{\infty}(1-\alpha_{\mathrm{slow}})^{j}(1-\alpha_{\mathrm{fast}})^{j}\,\alpha_{\mathrm{slow}}\alpha_{\mathrm{fast}}\,\Sigma^2.
\end{aligned}
\end{equation}
Summing the geometric series, it follows that
\begin{equation}
    \operatorname{Cov}(m_\infty)
    =\left(
        \frac{c^2\,\alpha_{\mathrm{slow}}^2}{1-(1-\alpha_{\mathrm{slow}})^2}
        +\frac{(1-c)^2\,\alpha_{\mathrm{fast}}^2}{1-(1-\alpha_{\mathrm{fast}})^2}
        +\frac{2c(1-c)\,\alpha_{\mathrm{slow}}\alpha_{\mathrm{fast}}}{1-(1-\alpha_{\mathrm{slow}})(1-\alpha_{\mathrm{fast}})}
    \right)\Sigma^2.
\end{equation}
Simplifying yields
\begin{equation}
    \operatorname{Cov}(m_\infty)
    =\underbrace{\left(
        \frac{c^2\,\alpha_{\mathrm{slow}}}{2-\alpha_{\mathrm{slow}}}
        +\frac{(1-c)^2\,\alpha_{\mathrm{fast}}}{2-\alpha_{\mathrm{fast}}}
        +\frac{2c(1-c)\,\alpha_{\mathrm{slow}}\alpha_{\mathrm{fast}}}{\alpha_{\mathrm{slow}}+\alpha_{\mathrm{fast}}-\alpha_{\mathrm{slow}}\alpha_{\mathrm{fast}}}
    \right)}_{:=\,f(c)}\,\Sigma^2.
\end{equation}
Differentiating with respect to $c$ gives
\begin{equation}
    f'(c)
    =-\frac{2(\alpha_{\mathrm{fast}}-\alpha_{\mathrm{slow}})
    \Bigl(\alpha_{\mathrm{fast}}(2-\alpha_{\mathrm{slow}})
    -2(\alpha_{\mathrm{fast}}-\alpha_{\mathrm{slow}})\,c\Bigr)}
    {(\alpha_{\mathrm{slow}}+\alpha_{\mathrm{fast}}
    -\alpha_{\mathrm{slow}}\alpha_{\mathrm{fast}})
    (2-\alpha_{\mathrm{slow}})(2-\alpha_{\mathrm{fast}})}<0,
\end{equation}
since every factor is positive for
$0<\alpha_{\mathrm{slow}}<\alpha_{\mathrm{fast}}\le 1$ and $c\in[0,1]$.
Hence $f$ is strictly decreasing on $[0,1]$. \textbf{Increasing the weight $c$ of the slow momentum monotonically reduces the stationary variance.} In particular, the two limiting regimes are
\begin{equation}
    f(0)=\frac{\alpha_{\mathrm{fast}}}{2-\alpha_{\mathrm{fast}}}
    \quad\text{(fast momentum only)},
    \qquad
    f(1)=\frac{\alpha_{\mathrm{slow}}}{2-\alpha_{\mathrm{slow}}}
    \quad\text{(slow momentum only)}.
\end{equation}
Thus incorporating slow momentum reduces the stationary variance by a factor of
up to
$\dfrac{\alpha_{\mathrm{slow}}(2-\alpha_{\mathrm{fast}})}
{\alpha_{\mathrm{fast}}(2-\alpha_{\mathrm{slow}})}
\approx\dfrac{\alpha_{\mathrm{slow}}}{\alpha_{\mathrm{fast}}}$ relative to fast
momentum alone.

\subsection{Weight Norm Inflation by Slow Momentum}\label{app:norm-inflation}
In this section, we provide a theoretical understanding of the role of slow momentum in norm inflation. Motivated by the fact that noise dominates in LLM pretraining, the intuition behind norm inflation for a scale‑invariant loss $L$ (i.e., $L(cW)=L(W)$ for any 
$c>0$) is that a Brownian‑motion‑like movement drives the norm to blow up, while weight decay can suppress it. We therefore approximately assume that the stochastic gradients $g_k\overset{\mathrm{i.i.d.}}{\sim}
\mathcal N(0,\sigma^2I_d)$. For brevity, we assume that the projection onto the flat directions $\mathcal Q$ is constant across iterations and has rank $q\approx d$. We then consider the update with weight decay coefficient$\lambda>0$
\begin{equation}\label{alg:flat-mts-base-norm-inflation}
    \begin{aligned}
        m_k^{\#}&=(1-\alpha_{\#})m_{k-1}^{\#}+g_k, \quad {\#}\in\{\text{fast},\text{slow}\}, \\
        w_{k+1}^{\chi}&=(1-\eta\lambda)w_k^{\chi}-\eta (\chi\mathcal{Q} m^{\text{slow}}_k + m^{\text{fast}}_k),
    \end{aligned}
\end{equation}
where $m_{-1}^{\text{fast}}=m_{-1}^{\text{slow}}=0$ and the superscript $\chi$ denotes the slow momentum coefficient. We have the following conclusion.

\begin{proposition}[Slow-momentum-induced norm inflation]
\label{prop:slow-momentum-norm-inflation}
Suppose $\rho:=1-\eta\lambda>0$ and both processes start from the
same deterministic $w_0$.  Then for every $k\geq 0$, the iterates in \eqref{alg:flat-mts-base-norm-inflation} satisfy
\begin{equation}\label{norm-diff}
    \mathbb E\|w_{k+1}^{\chi}\|^2
-
\mathbb E\|w_{k+1}^{0}\|^2
=
q\eta^2\sigma^2
\sum_{n=0}^{k}
\left(
2\chi A_{\mathrm{fast},n}A_{\mathrm{slow},n}
+
\chi^2A_{\mathrm{slow},n}^2
\right),
\end{equation}
where $
\beta_{\mathrm{fast}}=1-\alpha_{\mathrm{fast}}, 
\beta_{\mathrm{slow}}=1-\alpha_{\mathrm{slow}}$ and 
\[
A_{\#,n}
:=
\sum_{\ell=0}^{n}
\rho^{\,n-\ell}\beta_{\#}^\ell,
\qquad
{\#}\in\{\mathrm{fast},\mathrm{slow}\}.
\]

Moreover, the weight norm converges to
\begin{equation}\label{equ-norm}
    \lim_{k\to\infty}\mathbb E\|w_k^{\chi}\|^2
=
V_{\chi}
:=
d\eta^2\sigma^2
\sum_{n=0}^{\infty}A_{\mathrm{fast},n}^2+\underbrace{q\eta^2\sigma^2
\sum_{n=0}^{\infty}
\left(
2\chi
A_{\mathrm{fast},n}A_{\mathrm{slow},n}
+
\chi^2A_{\mathrm{slow},n}^2
\right)}_{\text{inflated norm by slow momentum}},
\end{equation}
where $V_{\chi}<\infty$ if and only if $\rho<1$ (i.e., $\lambda>0$).
\end{proposition}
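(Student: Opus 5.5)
The plan is to unroll the linear recursion and write $w_{k+1}^{\chi}$ as a deterministic term plus a linear combination of the i.i.d. Gaussian noises $g_0,\dots,g_k$. Everything is then a second-moment computation.

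First, unroll the momenta:
\[
m_k^{\#}=\sum_{\ell=0}^{k}\beta_{\#}^{\ell}g_{k-\ell}.
\]
Next, unroll the weight recursion:
\[
w_{k+1}^{\chi}=\rho^{k+1}w_0-\eta\sum_{i=0}^{k}\rho^{k-i}\bigl(\chi\mathcal Q m_i^{\mathrm{slow}}+m_i^{\mathrm{fast}}\bigr).
\]
Substituting the momentum expansions and collecting the coefficient of each noise $g_j$ with $i=j+\ell$ gives
\[
\sum_{\ell=0}^{k-j}\rho^{k-j-\ell}\beta_{\#}^{\ell}=A_{\#,k-j}.
\]
Hence
\[
w_{k+1}^{\chi}=\rho^{k+1}w_0-\eta\sum_{j=0}^{k}\bigl(\chi A_{\mathrm{slow},k-j}\mathcal Q+A_{\mathrm{fast},k-j}I\bigr)g_j.
\]
This reindexing to the kernel $A_{\#,n}$ with $n=k-j$ is the main bookkeeping step. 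It is only a discrete convolution identity, so I expect it to go through cleanly.

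Then take expectations. Since $w_0$ is deterministic and the $g_j$ are independent with mean $0$ and covariance $\sigma^2I_d$, the cross terms vanish. This gives
\[
\mathbb E\|w_{k+1}^{\chi}\|^2=\rho^{2k+2}\|w_0\|^2+\eta^2\sigma^2\sum_{n=0}^{k}\operatorname{tr}\bigl[(\chi A_{\mathrm{slow},n}\mathcal Q+A_{\mathrm{fast},n}I)^2\bigr].
\]
Because $\mathcal Q$ is an orthogonal projection of rank $q$, we have $\mathcal Q^2=\mathcal Q$ and $\operatorname{tr}\mathcal Q=q$. The trace therefore equals
\[
dA_{\mathrm{fast},n}^2+q\bigl(2\chi A_{\mathrm{fast},n}A_{\mathrm{slow},n}+\chi^2A_{\mathrm{slow},n}^2\bigr).
\]
The $\chi=0$ process shares the same deterministic term and the same $d A_{\mathrm{fast},n}^2$ terms. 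Subtracting the two expressions yields \eqref{norm-diff} directly.

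For the limit, first assume $0<\rho<1$. Since $\beta_{\#}<1$ as well, each $A_{\#,n}$ is bounded by $(n+1)\max(\rho,\beta_{\#})^n$. Hence $A_{\#,n}^2$ and $A_{\mathrm{fast},n}A_{\mathrm{slow},n}$ are summable, $\rho^{2k+2}\|w_0\|^2\to0$, and monotone convergence of the nonnegative partial sums gives \eqref{equ-norm} with $V_\chi<\infty$.

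For the converse, take $\rho=1$ (i.e. $\lambda=0$). Then $A_{\mathrm{fast},n}=\sum_{\ell\le n}\beta_{\mathrm{fast}}^\ell\ge1$, so $\sum_n A_{\mathrm{fast},n}^2=\infty$. The remaining terms are nonnegative when $\chi\ge0$, so $V_\chi=\infty$. Note that $\rho>1$ is excluded since $\lambda>0$ forces $\rho<1$; the "only if" direction therefore reduces to the $\lambda=0$ case.

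A minor subtlety is that the formula holds for every $k\ge0$ even with $w_0\neq0$, which is why the deterministic term must be tracked and then cancelled in the difference.
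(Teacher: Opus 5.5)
Your proposal is correct and follows the paper's proof essentially step for step. You unroll both momenta and the weight recursion into $\rho^{k+1}w_0-\eta\sum_{j=0}^{k}(A_{\mathrm{fast},k-j}I_d+\chi A_{\mathrm{slow},k-j}\mathcal Q)g_j$, take the second moment using independence of the $g_j$, and subtract the $\chi=0$ case using $\mathcal Q^2=\mathcal Q$ and $\operatorname{tr}\mathcal Q=q$. The only difference is cosmetic: you get summability from the bound $A_{\#,n}\le(n+1)\max(\rho,\beta_{\#})^n$ and divergence at $\rho=1$ from $A_{\mathrm{fast},n}\ge 1$, whereas the paper writes out the closed form of $A_{\#,n}$ to obtain the same dichotomy.
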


\begin{remark}
The norm inflation due to slow momentum has two contributing factors. The first is the explicit positive coefficient $\chi>0$, and the second is the asymptotic dominance $A_{\mathrm{slow},n}\gg A_{\mathrm{fast},n}$ for large $n$.
\end{remark}

\begin{proof}
Unrolling the momentum recursions gives
\[
m_k^{\#}
=
\sum_{j=0}^{k}\beta_{\#}^{\,k-j}g_j,
\qquad
{\#}\in\{\mathrm{fast},\mathrm{slow}\}.
\]
Substituting it into the update rule yields
\[
w_{k+1}^{\chi}
=
\rho^{k+1}w_0
-\eta\sum_{j=0}^{k}
\left(
A_{\mathrm{fast},k-j}I_d
+
\chi A_{\mathrm{slow},k-j}\mathcal Q
\right)g_j.
\]
Since $g_k\overset{\mathrm{i.i.d.}}{\sim}
\mathcal N(0,\sigma^2I_d)$, 
\[
\mathbb E\|w_{k+1}^{\chi}\|^2
=
\rho^{2(k+1)}\|w_0\|^2
+
\eta^2\sigma^2
\sum_{j=0}^{k}
\left\|
A_{\mathrm{fast},k-j}I_d
+
\chi A_{\mathrm{slow},k-j}\mathcal Q
\right\|_F^2.
\]
We obtain \eqref{norm-diff}  by 
\[
\left\|A_{\mathrm{fast},n}I_d+\chi A_{\mathrm{slow},n}\mathcal Q\right\|_F^2
-
\left\|A_{\mathrm{fast},n}I_d\right\|_F^2
=
q\left(
2\chi A_{\mathrm{fast},n}A_{\mathrm{slow},n}
+
\chi^2A_{s,n}^2
\right).
\]

Moreover, 
\[
A_{\#,n}
=
\begin{cases}
\dfrac{\rho^{n+1}-\beta_{\#}^{n+1}}
{\rho-\beta_{\#}},
& \rho\neq\beta_{\#},\\[2mm]
(n+1)\rho^n,
& \rho=\beta_{\#},
\end{cases}
\]
gives 
$\sum_{n=0}^{\infty}A_{\#,n}^2<\infty$ iff $\lambda>0$. Therefore, the stationary
mean-square parameter norms are finite iff $\lambda>0$, and taking the trace of
the stationary covariance matrix gives $V_\chi$ in \eqref{equ-norm}.
\end{proof}

\section{Proof}

\paragraph{Notations.} Define $\rf=1-\af, \rs=1-\as$, $\eta_{\max,k}=\max_{i<k}\eta_i$. For sharp/flat subspace indicator \(q\in\{0,1\}\), write $
 d_q=\frac{\rf}{\af^2}
       +\chi q\frac{\rs}{\as^2}$, \(a_0=\atpg\) and \(a_1=\aperp\). We take 
$e=2.71828...$ as the base of the natural logarithm, unless explicitly defined otherwise.

We first  introduce explicit constant coefficients in Assumption~\ref{ass:power-law-short} to facilitate cleaner mathematical derivations.

\begin{assumption}[Detailed Version of Assumption \ref{ass:power-law-short}]\label{ass:power-law}
There exist constants \(\ell_-,\ell_+,
\vartheta_-,\vartheta_+>0\), a capacity exponent \(\nu>1\)  and  a source exponent \(s>0\) such that, for every
\(1\leq j\leq d\),
\[
 \ell_-j^{-\nu}\leq\lambda_j
 \leq \ell_+ j^{-\nu},
 \qquad
 \vartheta_- j^{-1}\lambda_j^{s-1}
 \leq |w_j^\star|^2
 \leq \vartheta_+ j^{-1}\lambda_j^{s-1}.
\]

\end{assumption}

\begin{lemma}[Hessian-aligned anisotropic  noise structure]\label{lem:gaussian-noise}
For every \(u\in\R^d\), the gradient noise covariance 
is
\begin{equation}\label{eq:exact-noise-covariance}
 \Sigma(u)=Huu^\top H+(u^\top Hu+\sigma^2)H,
\end{equation}
and it satisfies 
\begin{equation}\label{eq:noise-sandwich}
 (2\cE(u)+\sigma^2)H
 \preceq\Sigma(u)
 \preceq(4\cE(u)+\sigma^2)H.
\end{equation}

\end{lemma}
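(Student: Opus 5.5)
The covariance formula follows from a direct Gaussian fourth-moment computation; the sandwich then follows from elementary positive-semidefiniteness and Cauchy--Schwarz.

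\textbf{Setting up the noise.} With $x$ sampled and $\phi=\phi(x)\sim\mathcal N(0,H)$, the label is $y=\phi^\top w^\star+\epsilon$. The stochastic gradient at $w$ is therefore
\[
g=\phi(\phi^\top w-y)=\phi\phi^\top u-\epsilon\phi,
\]
where $\epsilon\sim\mathcal N(0,\sigma^2)$ is independent of $\phi$. Its mean is $Hu$, so the noise is $\xi=(\phi\phi^\top-H)u-\epsilon\phi$.

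\textbf{Computing $\Sigma(u)$.} The cross term between the two pieces vanishes because $\epsilon$ is independent and mean zero. The label-noise piece contributes $\E[\epsilon^2\phi\phi^\top]=\sigma^2H$. For the remaining piece, I compute
\[
\E[(\phi\phi^\top-H)uu^\top(\phi\phi^\top-H)]=\E[\phi\phi^\top uu^\top\phi\phi^\top]-Huu^\top H.
\]
By Isserlis' (Wick's) theorem applied to $\phi=H^{1/2}z$ with $z\sim\mathcal N(0,I)$, or equivalently by the standard identity $\E[\phi\phi^\top A\phi\phi^\top]=H(A+A^\top)H+\tr(AH)H$, with $A=uu^\top$ this gives
\[
\E[\phi\phi^\top uu^\top\phi\phi^\top]=2Huu^\top H+(u^\top Hu)H.
\]
Subtracting $Huu^\top H$ and adding $\sigma^2H$ yields \eqref{eq:exact-noise-covariance}.

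\textbf{The sandwich.} Recall $u^\top Hu=2\cE(u)$.

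For the lower bound, $Huu^\top H\succeq0$, so $\Sigma(u)\succeq(2\cE(u)+\sigma^2)H$.

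For the upper bound, I need $Huu^\top H\preceq(u^\top Hu)H$. Take any $v$. By Cauchy--Schwarz in the $H$ inner product,
\[
v^\top Huu^\top Hv=(v^\top Hu)^2\le(v^\top Hv)(u^\top Hu).
\]
Adding this to the remaining terms gives $\Sigma(u)\preceq(2u^\top Hu+\sigma^2)H=(4\cE(u)+\sigma^2)H$.

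\textbf{Main obstacle.} The only step with real content is the fourth-moment identity. I would verify it in coordinates: write $\E[\phi_a\phi_b\phi_c\phi_e]=H_{ab}H_{ce}+H_{ac}H_{be}+H_{ae}H_{bc}$ and contract against $u_bu_c$. This gives exactly the three terms $(u^\top Hu)H_{ae}$ plus twice $(Hu)_a(Hu)_e$.
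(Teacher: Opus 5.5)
Your proposal is correct and follows the paper's proof essentially step for step: write $g=\phi\phi^\top u-\epsilon\phi$, apply Wick's formula to get $\E[\phi\phi^\top uu^\top\phi\phi^\top]=2Huu^\top H+(u^\top Hu)H$, and derive the sandwich from $Huu^\top H\succeq 0$ together with Cauchy--Schwarz in the $H$-inner product. You also spell out the vanishing cross terms and the subtraction of $Huu^\top H$, which the paper leaves as ``further computation.''
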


\begin{proof}
This lemma is Lemma A.1 in \cite{wang2026fast}, and we restate its proof here for completeness. For one sample, the stochastic gradient satisfies
\[
 g=\phi\phi^\top u-\phi\epsilon,
 \qquad \E g=Hu.
\]
 Wick's
formula for the centered Gaussian vector \(\phi\) gives
\[
 \E[\phi\phi^\top uu^\top\phi\phi^\top]
 =2Huu^\top H+(u^\top Hu)H.
\]
Further computation based on it proves \eqref{eq:exact-noise-covariance}.  Finally, we have 
 $Huu^\top H\preceq(u^\top Hu)H$ 
by Cauchy--Schwarz to the \(H\)-inner product.  Since
$u^\top Hu=2\cE(u)$, this proves \eqref{eq:noise-sandwich}.
\end{proof}

\begin{remark}
Lemma~\ref{lem:gaussian-noise} shows the alignment between the noise and the Hessian. In the early phase of training, $\cE$ is large and the noise is dominated by the loss scale. In the later phase, where $\cE \lesssim \sigma^2$, the noise aligns well with the Hessian, and the stochastic gradient is noise-dominated.
\end{remark}

The following lemma gives the aggregate parameter‑state update rule.
\begin{lemma}[Discrete parameter-momentum recursion]
\label{lem:exact-three-state-recursion}
For the $j$-th eigen-direction \(1\leq j\leq d\), define
\[
 X_{j,k}:=(u_{j,k},m_{j,k-1}^{\mathrm{fast}},m_{j,k-1}^{\mathrm{slow}})^\top.
\]
Then the concrete algorithm \eqref{eq:algorithm} is equivalent with 
\begin{equation}\label{eq:exact-modal-state}
 X_{j,k+1}=M_{j,k}X_{j,k}+d_{j,k}\xi_{j,k},
\end{equation}
where
\begin{equation}\label{eq:modal-matrix}
 M_{j,k}=\begin{pmatrix}
 1-\eta_kc_j\lambda_j&-\eta_k\rf&-\eta_k\chi q_j\rs\\
 \lambda_j&\rf&0\\
 \lambda_j&0&\rs
 \end{pmatrix},
 \qquad
 d_{j,k}=\begin{pmatrix}-\eta_kc_j\\1\\1\end{pmatrix},
\end{equation}
$c_j:=\bd+1+\chi q_j$, and \(\xi_{j,k}\in\mathbb{R}\) denotes the shared noise   in all three entries.
\end{lemma}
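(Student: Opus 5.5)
The proof is a direct substitution: I will write the update \eqref{eq:algorithm} coordinatewise in the eigenbasis of \(H\) and then eliminate the current momenta \(m_k^{\mathrm{fast}}, m_k^{\mathrm{slow}}\) in favour of the previous ones \(m_{k-1}^{\#}\) and the current gradient. No estimates are needed.

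\textbf{Step 1: diagonalize the update.} Since \(H=\diag(\lambda_1,\ldots,\lambda_d)\) and \(Q=\sum_j q_je_je_j^\top\) are both diagonal in the standard basis, every operation in \eqref{eq:algorithm} acts coordinatewise. Subtracting the constant \(w^\star\) turns the \(w\)-update into the same update for \(u_k=w_k-w^\star\). The stochastic gradient is \(g_k=Hu_k+\xi_k\), so its \(j\)-th coordinate is \(g_{j,k}=\lambda_j u_{j,k}+\xi_{j,k}\), where \(\xi_{j,k}:=e_j^\top\xi_k\). This single scalar \(\xi_{j,k}\) is the noise that enters all three components of the state, which explains the shared noise vector \(d_{j,k}\).

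\textbf{Step 2: momentum rows.} Because \(\rf=1-\af\) and \(\rs=1-\as\), the momentum recursions read
\[
 m^{\#}_{j,k}=\rho_{\#}m^{\#}_{j,k-1}+\lambda_ju_{j,k}+\xi_{j,k},\qquad \#\in\{\mathrm{fast},\mathrm{slow}\}.
\]
These are exactly the second and third rows of \(M_{j,k}X_{j,k}+d_{j,k}\xi_{j,k}\), namely \((\lambda_j,\rf,0)\) and \((\lambda_j,0,\rs)\) with noise coefficient \(1\). This is why the state vector is indexed with \(m_{k-1}\) rather than \(m_k\).

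\textbf{Step 3: parameter row.} Substituting the two identities from Step 2 into
\[
 u_{j,k+1}=u_{j,k}-\eta_k\bigl(\chi q_jm^{\mathrm{slow}}_{j,k}+\bd g_{j,k}+m^{\mathrm{fast}}_{j,k}\bigr)
\]
collects the gradient with total weight \(\bd+1+\chi q_j=c_j\). This gives
\[
 u_{j,k+1}=(1-\eta_kc_j\lambda_j)u_{j,k}-\eta_k\rf m^{\mathrm{fast}}_{j,k-1}-\eta_k\chi q_j\rs m^{\mathrm{slow}}_{j,k-1}-\eta_kc_j\xi_{j,k},
\]
which is the first row of \(M_{j,k}\) together with the first entry \(-\eta_kc_j\) of \(d_{j,k}\).

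\textbf{Equivalence and the main obstacle.} The initialization \(m_{-1}^{\#}=0\) makes \(X_{j,0}=(u_{j,0},0,0)^\top\) well defined. Conversely, \(m^{\#}_k\) is recovered from \(X_{j,k+1}\), so the two descriptions generate the same sequence and are therefore equivalent. The only real care needed is this index bookkeeping, \(m_{k-1}\) versus \(m_k\) in the state. Otherwise the argument is routine.
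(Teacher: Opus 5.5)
Your proposal is correct and matches the paper's proof, which likewise substitutes the coordinatewise recursions $m_k^{\#}=\rho_{\#}m_{k-1}^{\#}+\lambda_j u_{j,k}+\xi_{j,k}$ into \eqref{eq:algorithm} and reads off the three rows of $M_{j,k}$ and $d_{j,k}$. Your remarks on the index shift ($m_{k-1}$ versus $m_k$ in the state) and on the identification $\xi_{j,k}=e_j^\top\xi_k$ just spell out bookkeeping that the paper leaves implicit.
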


\begin{proof}
Substituting 
\[
 m_k^{\mathrm{fast}}=\rf m_{k-1}^{\mathrm{fast}}+\lambda_ju_{j,k}+\xi_{j,k},
 \qquad
 m_k^{\mathrm{slow}}=\rs m_{k-1}^{\mathrm{slow}}+\lambda_ju_{j,k}+\xi_{j,k}
\]
into \eqref{eq:algorithm} gives the results.
\end{proof}

We next analyze the roots of the characteristic polynomial $\text{det}(zI-M_{j,k})$ of the transition matrix $M_{j,k}$, which determine the final convergence rate. Consider any fixed coordinate $j$, and hide the subscript $j$ for brevity.  Define the constants 
\[
 a:=\bd+\af^{-1}+\widetilde\chi\as^{-1},
 \qquad
 d:=\frac{1-\af}{\af^2}
       +\widetilde\chi\frac{1-\as}{\as^2},
\]
where $\widetilde\chi\ge0$ denotes the corresponding coefficient for the slow momentum. The 
matrix with zero learning rate has eigenvalues
\[
 1,\qquad \rf=1-\af,\qquad \rs=1-\as.
\]
The next lemma shows
that for a sufficiently small positive learning rate, the root issuing
from \(1\) remains separated from the two roots issuing from \(\rf\) and
\(\rs\).

\begin{lemma}[Spectrum distribution of the transition matrix]
\label{lem:modal-polynomial}
Let $M_{\widetilde\chi,\lambda,\eta}$  be the matrix obtained from
\eqref{eq:modal-matrix}   with $\lambda_j=\lambda$, $\chi q_j=\widetilde\chi$ and $\eta_k=\eta$. Define $x=\eta\lambda$.  If
\begin{equation}\label{eq:root-smallness}
 a\eta\lambda\leq\frac{\as}{8},
\end{equation}
then we have:
\begin{itemize}
    \item 
$M_{\widetilde\chi,\lambda,\eta}$ has a real simple eigenvalue
\(z_0\) in
 $\left[1-\frac{\as}{4},1\right]$ with 
\begin{equation}\label{eq:slow-root-log-bound}
 \left|\log z_0+ax
 +a\left(d+\frac{a}{2}\right)x^2\right|
 \leq c_{\log}x^3,
\end{equation}
where 
\(c_{\log}=4a(d+a)^2+6a^2e\).
\item The other two eigenvalues, counted with algebraic multiplicity, satisfy
\begin{equation}\label{eq:fast-block-localization}
\max_{z\in\sigma(M_{\widetilde\chi,\lambda,\eta})\setminus\{z_0\}}|z|
 \leq1-\frac{3\as}{4}.
\end{equation}

\end{itemize}

\end{lemma}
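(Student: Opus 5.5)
We will work directly with the characteristic polynomial of $M_{\widetilde\chi,\lambda,\eta}$, treating $x=\eta\lambda$ as a small perturbation parameter. Write $c=\bd+1+\widetilde\chi$. Expanding $\det(zI-M)$ along the first row gives
\begin{equation*}
p(z)=(z-1)(z-\rf)(z-\rs)+x\,r(z),\qquad r(z):=c(z-\rf)(z-\rs)+\rf(z-\rs)+\widetilde\chi\rs(z-\rf).
\end{equation*}
Set $q(z)=(z-\rf)(z-\rs)$. The key algebraic identity is
\begin{equation*}
\frac{r(z)}{q(z)}=c+\frac{\rf}{z-\rf}+\frac{\widetilde\chi\rs}{z-\rs}.
\end{equation*}
At $z=1$ this equals $c+\rf/\af+\widetilde\chi\rs/\as=\bd+\af^{-1}+\widetilde\chi\as^{-1}=a$. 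Hence the roots of $p$ other than those near $\rf,\rs$ satisfy the fixed-point equation
\begin{equation*}
z=1-x\,\frac{r(z)}{q(z)}=:F(z).
\end{equation*}

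\textbf{Localization of the two fast roots.} We first prove \eqref{eq:fast-block-localization} by Rouché's theorem on the circle $|z|=1-\tfrac{3\as}{4}$, comparing $p$ with $(z-1)q(z)$. On this circle we have $|z-1|\geq\tfrac{3\as}{4}$. We also have $|z-\rs|\geq\tfrac{\as}{4}$ and $|z-\rf|\geq\af-\tfrac{3\as}{4}\geq\tfrac{\af}{4}$, using $\rf\leq\rs=1-\as$. Therefore
\begin{equation*}
\frac{x|r(z)|}{|q(z)|}\leq x\Bigl(c+\frac{4}{\af}+\frac{4\widetilde\chi}{\as}\Bigr)\leq 4ax\leq\frac{\as}{2}<|z-1|.
\end{equation*}
So $p$ and $(z-1)q$ have the same number of zeros inside the disk, namely two (at $\rf,\rs$). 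The third root $z_0$ lies outside the disk. Since complex roots of the real polynomial $p$ come in conjugate pairs and only one root lies outside, $z_0$ is real and simple.

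\textbf{Locating $z_0$ and its expansion.} On the interval $[1-\tfrac{\as}{4},1]$, the same bounds (now $|z-\rs|\geq\tfrac{3\as}{4}$ and $|z-\rf|\geq\tfrac{3\af}{4}$) show that $F$ maps the interval into itself, since $|F(z)-1|\leq 2ax\leq\as/4$, and that $F$ is a contraction there. This gives $z_0\in[1-\tfrac{\as}{4},1]$, with $p(1)=x\,r(1)>0$ consistent.

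For the log expansion, write $z_0=1-\delta$ and expand $r/q$ around $1$:
\begin{equation*}
\frac{r}{q}(1-\delta)=a+\Bigl(\frac{\rf}{\af^2}+\frac{\widetilde\chi\rs}{\as^2}\Bigr)\delta+O(\delta^2)=a+d\,\delta+O(\delta^2).
\end{equation*}
Iterating $\delta=x\,(a+d\delta+O(\delta^2))$ gives $\delta=ax+adx^2+O(x^3)$. Then $\log z_0=-\delta-\delta^2/2+O(\delta^3)$ yields
\begin{equation*}
\log z_0=-ax-a\bigl(d+\tfrac{a}{2}\bigr)x^2+O(x^3),
\end{equation*}
which is the expansion in \eqref{eq:slow-root-log-bound}.

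\textbf{Main obstacle.} The hardest step is making the $O(x^3)$ remainder explicit with the stated constant $c_{\log}=4a(d+a)^2+6a^2e$. Two pieces are needed. First, the second-derivative remainder of $r/q$ on the interval must be bounded; the terms $\rf/(z-\rf)$ and $\widetilde\chi\rs/(z-\rs)$ have second derivatives controlled by $\rf/(\tfrac34\af)^3$ and $\widetilde\chi\rs/(\tfrac34\as)^3$, and these should be absorbed as multiples of $d\cdot(\text{derivative scale})\lesssim d\,a$ using $\delta\leq 2ax$ and $ax\leq\as/8$. Second, the $\log(1-\delta)$ remainder must be bounded by a geometric sum; this is the likely source of the $6a^2e$ term. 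I would carry out both bounds with crude geometric-series estimates, using $\delta/\as\leq 1/4$ throughout.
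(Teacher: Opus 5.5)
Your treatment of the two fast roots is correct but takes a different route from the paper. You apply Rouch\'e to $p$ versus $(z-1)(z-\rf)(z-\rs)$ on $|z|=1-\tfrac{3\as}{4}$. The paper instead conjugates $M$ by $T$ into $N_q(x)$, which satisfies $\|N_q(x)-\diag(1,\rf,\rs)\|_\infty\le 2ax$, and then applies Gershgorin. Your version is self-contained and gives reality and simplicity of $z_0$ at once through conjugate pairs; the paper's version builds the matrix $N_q(x)$ that the next lemma reuses.

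Two small repairs are needed:
\begin{itemize}
\item Keep the factors $\rf,\rs$ in the Rouch\'e estimate. The bound $c+4/\af+4\widetilde\chi/\as\le 4a$ fails when $\bd$ is small, whereas $c+4\rf/\af+4\widetilde\chi\rs/\as=4a-3c\le 4a$ holds.
\item For $F$ to map $[1-\tfrac{\as}{4},1]$ into itself you need $F\le 1$. This follows from positivity of $r/q$ on that interval, not from $|F-1|\le 2ax$.
\end{itemize}

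The genuine gap is the explicit remainder, which is the real content of the first bullet. Your plan for it cannot work because it misidentifies $e$. In $c_{\log}$, $e$ is not Euler's number and does not come from the series of $\log(1-\delta)$. The paper's proof defines $e:=2\rf/\af^3+2\widetilde\chi\rs/\as^3$, which is twice the second Taylor coefficient of $G(\delta):=c+\rf/(\af-\delta)+\widetilde\chi\rs/(\as-\delta)$.

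The exact $x^3$ coefficient of $-\log z_0$ is $ad^2+a^2d+\tfrac13a^3+\tfrac12a^2e$, and the last term cannot be absorbed into multiples of $da$ or $a(d+a)^2$. For example, take $\af=1$, $\bd=0$, $\widetilde\chi=\as^2$. Then $a=1+\as$ and $d=1-\as$, but $\tfrac12a^2e\approx 1/\as$. So if $e$ were $2.718\ldots$, the claimed inequality would fail for small $\as$ and small $x$. Trading a factor $x$ via $ax\le\as/8$ to cancel the $1/\as$ in $\widetilde\chi\rs/\as^3\le d/\as$ does not rescue this: it only yields an $O(x^2)$ bound.

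The missing step is to keep $e$ explicit:
\begin{itemize}
\item For $\delta\le\as/4$, the geometric series give $a+d\delta+\tfrac e2\delta^2\le G(\delta)\le a+d\delta+\tfrac{2e}{3}\delta^2$.
\item From $\delta=xG(\delta)$, together with $\delta\le 2ax$ and $dx\le\tfrac18$, you get $0\le\delta-ax-adx^2\le(2ad^2+3a^2e)x^3$.
\item Similarly, $\tfrac12|\delta^2-a^2x^2|\le(3a^2d+\tfrac12a^2e)x^3$.
\item The logarithm's tail contributes only $\sum_{n\ge3}\delta^n/n\le 4a^3x^3$, which sits inside $4a(d+a)^2$.
\end{itemize}
Summing these pieces gives at most $c_{\log}x^3$.
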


\begin{proof}
 Expanding \(\det(zI-M_{\widetilde\chi,\lambda,\eta})\) along its first row
gives
\begin{equation}\label{eq:factored-characteristic}
\begin{aligned}
 P(z)
=(z-\rf)(z-\rs)(z-1+\eta c\lambda)
 +\eta\lambda\rf(z-\rs)
  +\eta\widetilde\chi\lambda\rs(z-\rf),
\end{aligned}
\end{equation}
where $c=\bd+1+\widetilde\chi$. 
Define the constants $e=2\frac{1-\af}{\af^3}
      +2\widetilde\chi\frac{1-\as}{\as^3}$,
and $c_{\log}=4a(d+a)^2+6a^2e$. 

\begin{itemize}
    \item \underline{Bound the largest real root.} Note that $F(z)>0$ for $z\ge1$. Thus all real roots are less than $1$.  Let $\delta:=1-z_1$. The characteristic equation
gives 
\begin{equation}\label{eq:slow-root-fixed-point}
 \delta=\underbrace{x\left[
 c+\frac{\rf}{\af-\delta}
       +\widetilde\chi\frac{\rs}{\as-\delta}\right]}_{:=F_x(\delta)}.
\end{equation}
On the interval $I=[0,\as/2]$, we have $0\le F_x(\delta)\le 2ax\le\as/4$. Thus $F_x(I)\subset I$. On the other hand, for $\delta\in I$, 
\[F_x'(\delta)
=x\left[
\frac{\rf}{(\af-\delta)^2}
+\widetilde\chi
 \frac{\rs}{(\as-\delta)^2}
\right]
\le4xd
\le\frac12.\]
Since \(F_x\) is a contraction mapping on \(I\), it admits a unique real fixed point \(\delta_x \in I\), with
\begin{equation}\label{delta_x-1st_est}
  0 \le \delta_x=F_x(\delta_x) \le2ax\le \frac{\as}{4},\qquad \lim_{x\to0}  \delta_x=0.  
\end{equation}

Combining it with $P(-\infty)<0,P(\rs)>0$, $P$ has at least 2 real roots. Thus $P$ has 3 real roots.

Next we consider finer estimate to the largest root. Expanding the two $(1-\delta)^{-1}$ terms in \eqref{eq:slow-root-fixed-point} via their geometric series (with remainder) in powers of $\delta$ gives 
\[
 a+d\delta+\frac{e}{2}\delta^2
 \leq
 c+\frac{\rf}{\af-\delta}
       +\widetilde\chi\frac{\rs}{\as-\delta}
 \leq
 a+d\delta+\frac{2e}{3}\delta^2
\]
by using \(\delta\leq2ax\leq\as/4\).  Hence
\begin{equation}\label{delta_x-1st_est-1}
    0\leq\delta-ax
 \leq2adx^2+\frac83a^2ex^3. 
\end{equation}

Further applying
\(dx\leq ax/\as\leq1/8\)  gives
\[
 0\leq\delta-ax-adx^2
 \leq\left(2ad^2+3a^2e\right)x^3.
\]
Combining them with \(\sum_{n\geq3}\delta^n/n\leq\delta^3/[3(1-\delta)]\) gives 
\begin{equation}
    \begin{aligned}
       |\log(1-\delta)+ax+a(d+a/2)x^2|\le &|\delta-ax-adx^2|+\frac{|\delta^2-a^2x^2|}{2}+\sum_{n\geq3}\frac{\delta^n}{n}
       \\\le&\left(2ad^2+3a^2e\right)x^3+ \left(3a^2d+\frac12a^2e\right)x^3+ 4a^3x^3.
    \end{aligned}
\end{equation}
Their sum is at most \(c_{\log}x^3\), proving
\eqref{eq:slow-root-log-bound}.

\item \underline{Bound the remaining two roots.} Define 
\[
T=
\begin{pmatrix}
1&0&0\\
-1&\alpha_{\text{fast}}/\lambda&0\\
-1&0&\alpha_{\text{slow}}/\lambda
\end{pmatrix}.
\]
Then $M$ is similar to
\begin{equation}\label{eq:balanced-modal-matrix}
 N_{q}(x):=TMT^{-1}=
 \begin{pmatrix}
 1-ax&-xb_{\text{fast}}&-xb_{\text{slow}}\\
 ax&\rf+xb_{\text{fast}}&xb_{\text{slow}}\\
 ax&xb_{\text{fast}}&\rs+xb_{\text{slow}}
 \end{pmatrix},
 \quad
 b_{\text{fast}}:=\frac{\rf}{\af},
 \quad b_{\text{slow}}:=\widetilde\chi\frac{\rs}{\as}.
\end{equation}
Thus
\(\|N_{q}(x)-\diag(1,\rf,\rs)\|_\infty
\leq2ax\leq\as/4\).
The Gershgorin disk associated with the first row lies in
\(\operatorname{Re}z\geq1-2ax\geq1-\as/4\).  Each of the other two
disks lies in \(|z|\leq\rs+2ax\leq1-3\as/4\).  The first disk is
therefore disjoint from the union of the other two.  By the
Gershgorin disk theorem (Theorem \ref{prop:gershgorin}), the first component
contains exactly one eigenvalue, while the complementary component contains
exactly two eigenvalues counted with algebraic multiplicity, proving
\eqref{eq:fast-block-localization}.
\end{itemize}
\end{proof}

\begin{lemma}[Properties of spectral projections]
\label{lem:modal-projectors}
We use the notation \(N_q(x)\) in \eqref{eq:balanced-modal-matrix}, where \(q\in\{0,1\}\) is the sharp subspace indicator.   For \(x\geq0\) satisfying
\(a_qx\leq\as/8\), let
\begin{equation}\label{eq:riesz-projectors}
\begin{aligned}
 \Gamma_0&:=\{z\in\mathbb C:|z-1|=\as/2\},
 &\Gamma_1&:=\{z\in\mathbb C:|z|=1-\as/2\},\\
 P_0(x)&:=\frac{1}{2\pi\mathrm i}
 \int_{\Gamma_0}(zI-N_q(x))^{-1}\dd z,
 &P_1(x)&:=\frac{1}{2\pi\mathrm i}
 \int_{\Gamma_1}(zI-N_q(x))^{-1}\dd z.
\end{aligned}
\end{equation}
Both contours are positively oriented.  The following properties hold.
\begin{enumerate}[(i)]
\item The operators in \eqref{eq:riesz-projectors} are complementary
 oblique subspace projections with $P_0(x)+P_1(x)=I$ and $P_0(x)P_1(x)=P_1(x)P_0(x)=0$. \(P_0(x)\) is associated with the largest eigenvalue and \(P_1(x)\) is associated with the complementary
two-dimensional subspace\footnote{In general, 
 $P_0, P_1$ are not orthogonal to each other.}.  If \(x,y\geq0\) and
\(a_q\max\{x,y\}\leq\as/8\), then
\begin{equation}\label{eq:projector-primitives}
 \|P_0(x)\|_\infty\leq2,
 \qquad
 \|P_0(x)-P_0(y)\|_\infty
 \leq\frac{16a_q}{\as}|x-y|,
\end{equation}
and
\begin{equation}\label{eq:projector-interface}
 \max\{\|P_1(y)P_0(x)\|_\infty,
          \|P_0(y)P_1(x)\|_\infty\}
 \leq\frac{32a_q}{\as}|x-y|.
\end{equation}

\item At \(x=0\), one has \(P_0(0)=\diag(1,0,0)\). Besides, the  projector  $P_0$ is
differentiable at $x=0$ with
\begin{equation}\label{eq:projector-first-derivative}
 P_0'(0)=
 \begin{pmatrix}
  0&-\rf/\af^2&-\chi q\rs/\as^2\\
  a_q/\af&0&0\\
  a_q/\as&0&0
 \end{pmatrix},
\end{equation}
and its quadratic remainder satisfies 
\begin{equation}\label{eq:projector-second-order-remainder}
 \|P_0(x)-P_0(0)-xP_0'(0)\|_\infty
 \leq\frac{32a_q^2x^2}{\as^2}.
\end{equation}

\item Let \(\delta=1-z_{0,q}(x)\).  The range of \(P_0(x)\) is spanned
by the eigenvector
\begin{equation}\label{eq:normalized-slow-right-eigenvector}
 r_q(x)=\left(
  1,\frac{\delta}{\af-\delta},\frac{\delta}{\as-\delta}\right)^\top
 \,\text{with } \|r_q(x)\|_\infty=1.
\end{equation}
The complementary  projection $P_1$ satisfies for any $ v\in\operatorname{Range}P_1(x)$, 
\begin{equation}\label{eq:one-step-fast-contraction}
 \|N_q(x)v\|_\infty
 \leq\left(1-\frac{17\as}{20}\right)\|v\|_\infty.
\end{equation}

\item Let $a\leq b$,  $x_m\geq0$, and suppose
$a_qx_m\leq\as/8$ for
$a\leq m\leq b$.  Define
\[
 L_m:=\|P_0(x_m)-P_0(x_{m-1})\|_\infty.
\]
With the ordered-product notation 
$\overleftarrow{\prod_{m=a}^b}A_m=A_b\cdots A_a$, we have 
\begin{equation}\label{eq:variable-fast-product}
 \left\|
 \overleftarrow{\prod_{m=a}^{b}}
 \bigl[N_q(x_m)P_1(x_m)\bigr]v
 \right\|_\infty
 \leq e^{\sum_{m=a+1}^{b}L_m}
 \left(1-\frac{17\as}{20}\right)^{b-a+1}
 \|P_1(x_a)v\|_\infty.
\end{equation}
\end{enumerate}

\end{lemma}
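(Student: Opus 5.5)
The plan is to treat $N_q(x)$ throughout as a perturbation of $D:=\diag(1,\rf,\rs)$. Since $N_q(x)$ is affine in $x$, write $N_q(x)=D+xE_1$ with $\|xE_1\|_\infty\le 2a_qx\le\as/4$, as in the proof of Lemma~\ref{lem:modal-polynomial}. On both contours $\Gamma_0,\Gamma_1$ every point lies at distance at least $\as/2$ from each of $1,\rf,\rs$ (recall $\rf\le\rs=1-\as$). Hence $\|(zI-D)^{-1}\|_\infty\le 2/\as$ and $\|xE_1(zI-D)^{-1}\|_\infty\le 1/2$. A Neumann series then gives $\|(zI-N_q(x))^{-1}\|_\infty\le 4/\as$ uniformly on the contours. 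By Lemma~\ref{lem:modal-polynomial}, $z_0\in[1-\as/4,1]$ lies inside $\Gamma_0$ and outside $\Gamma_1$, while the two other eigenvalues lie inside $\Gamma_1$ and outside $\Gamma_0$. The standard Riesz calculus therefore gives (i): $P_0,P_1$ are complementary spectral projections, and $P_0+P_1=I$ because together they enclose the full spectrum exactly once.

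For the quantitative part of (i), I will combine the length $\pi\as$ of $\Gamma_0$ with the resolvent bound. This gives $\|P_0\|_\infty\le\frac{1}{2\pi}\cdot\pi\as\cdot\frac{4}{\as}=2$. For the Lipschitz bound I use the resolvent identity $R_x-R_y=R_x(N_q(x)-N_q(y))R_y$ with $\|N_q(x)-N_q(y)\|_\infty\le2a_q|x-y|$, which yields $\frac{\as}{2}\cdot\frac{16}{\as^2}\cdot 2a_q|x-y|=\frac{16a_q}{\as}|x-y|$. The interface bound then follows algebraically, because $P_1(y)P_0(x)=(P_0(x)-P_0(y))P_0(x)$ and symmetrically for the other product, each of norm at most $2\cdot\frac{16a_q}{\as}|x-y|$.

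For (ii), I expand $(zI-N_q(x))^{-1}=R_0+xR_0E_1R_0+x^2R_0(E_1R_0)^2(I-xE_1R_0)^{-1}$, where $R_0=(zI-D)^{-1}$ is diagonal. The zeroth term integrates to $\diag(1,0,0)$. In the first-order term, entry $(i,k)$ is $(E_1)_{ik}/((z-D_i)(z-D_k))$, whose residue at $1$ is nonzero only for the pairs $(1,k)$ and $(k,1)$ with $k\neq1$, where it equals $(E_1)_{ik}/(1-\rho_k)$. Reading off $E_1$ from \eqref{eq:balanced-modal-matrix} gives $-b_{\rm fast}/\af=-\rf/\af^2$, $a_q/\af$, and so on, which is \eqref{eq:projector-first-derivative}. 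For the remainder I bound the integrand by $(2/\as)^3(2a_qx)^2\cdot 2$ over a contour of length $\pi\as$, and this gives exactly $32a_q^2x^2/\as^2$.

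For (iii), I take the eigenvector $(1,\lambda/(\af-\delta),\lambda/(\as-\delta))$ of $M$, obtained from its second and third rows, and map it by $T$ to get $r_q(x)$. Since $\delta\le\as/4$, its last two entries are at most $1/3$, so $\|r_q(x)\|_\infty=1$. The contraction \eqref{eq:one-step-fast-contraction} is the step I expect to be the main obstacle, because the crude row-sum bound only gives $1-3\as/4$.

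To get the sharper constant, I plan to characterize $\operatorname{Range}P_1$ as the kernel of the left eigenvector $\ell=(1,\ell_2,\ell_3)$ for $z_0$. I will solve $\ell^\top N_q=z_0\ell^\top$ column by column to show $|\ell_2|+|\ell_3|\lesssim x(\rf/\af^2+\chi q\rs/\as^2)$, which is small. Then every $v\in\operatorname{Range}P_1$ has $|v_1|\le(|\ell_2|+|\ell_3|)\|v\|_\infty$. In rows $2,3$ of $N_qv$ the diagonal contributes at most $\rs$, the off-diagonal $x$-terms at most $x(b_{\rm fast}+b_{\rm slow})\le a_qx\le\as/8$, and the $a_qx|v_1|$ term is second order. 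Row $1$ is at most $|v_1|+a_qx\|v\|_\infty$. Fitting all of these under $1-17\as/20$ requires careful constant tracking, possibly using $\delta$-exact formulas for $\ell$ rather than crude ones. I will do that bookkeeping last.

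Finally, for (iv) I argue by induction on $b$. Each factor satisfies $N_q(x_m)P_1(x_m)=P_1(x_m)N_q(x_m)P_1(x_m)$, so it maps into $\operatorname{Range}P_1(x_m)$. For $w\in\operatorname{Range}P_1(x_{m-1})$, we have $P_1(x_m)w=w-(P_0(x_m)-P_0(x_{m-1}))w$, so $\|P_1(x_m)w\|_\infty\le(1+L_m)\|w\|_\infty\le e^{L_m}\|w\|_\infty$. Applying (iii) to $P_1(x_m)w\in\operatorname{Range}P_1(x_m)$ at each step, and starting from $P_1(x_a)v$, gives \eqref{eq:variable-fast-product}.
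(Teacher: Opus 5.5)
Your proposal is correct and follows the paper's proof essentially step for step: the uniform bound $\|(zI-N_q(x))^{-1}\|_\infty\le 4/\as$ on both contours, the Riesz-formula and second-resolvent-identity estimates in (i), the second-order resolvent expansion with the residue computation at $z=1$ in (ii), the characterization $\operatorname{Range}P_1(x)=\ker\ell_q(x)^\top$ through the left eigenvector in (iii), and the $(1+L_m)$ induction in (iv). For the bookkeeping you deferred in (iii), the paper obtains $|v_1|\le\frac15\max\{|v_2|,|v_3|\}$, which bounds rows $2,3$ of $N_q(x)v$ by exactly $(1-\as+\frac65a_qx)\|v\|_\infty\le(1-\frac{17\as}{20})\|v\|_\infty$; it never estimates row $1$ directly, because $N_q(x)v\in\operatorname{Range}P_1(x)$ (as $P_1(x)$ commutes with $N_q(x)$), so its sup-norm is attained on the last two coordinates---your direct row-$1$ bound $|v_1|+a_qx\|v\|_\infty$ with the crude $|v_1|\le\frac15\|v\|_\infty$ only closes for $\as\lesssim0.82$, though it does close for all $\as<1$ if you keep $\rf\le\rs=1-\as$ to get $|v_1|\le\frac{1-\as}{5}\|v\|_\infty$.
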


\begin{proof}
\smallskip
\noindent\emph{Proof of \textup{(i)}.} Lemma~\ref{lem:modal-polynomial}
shows that $\Gamma_0$ encloses exactly the largest root
$z_{0,q}(x)$ (i.e, the largest eigenvalue of $N_q(x)$).  The disk bounded by $\Gamma_1$ contains the other
two eigenvalues, counted with algebraic multiplicity, and excludes
$z_{0,q}(x)$.  Hence $P_0$ is the (rank 1) oblique projection to the eigenspace corresponding to $z_{0,q}(x)$, and $P_1=I-P_0$. On each contour,
the Neumann resolvent identity and
$\|N_q(x)-N_q(0)\|_\infty\leq2a_qx$ give
\[
 \|(zI-N_q(x))^{-1}\|_\infty\leq\frac4{\as}.
\]
The Riesz formula and the second resolvent identity now give
\eqref{eq:projector-primitives}.  Complementarity gives
\[
 P_1(y)P_0(x)=[P_0(x)-P_0(y)]P_0(x),
 \qquad
 P_0(y)P_1(x)=P_0(y)[P_0(y)-P_0(x)],
\]
so \eqref{eq:projector-interface} follows from
\eqref{eq:projector-primitives}.

\smallskip
\noindent\emph{Proof of \textup{(ii)}.} Write $
 N_q(x)=N_q(0)+xN_q'(0)$ (which is linear). 
Differentiating the Riesz formula at \(x=0\) gives
\begin{equation}
P_0'(0)=\frac{1}{2\pi\mathrm i}
 \int_{\Gamma_0}R_0(z)N_q'(0)R_0(z)\dd z=
 \begin{pmatrix}
  0&-\rf/\af^2&-\chi q\rs/\as^2\\
  a_q/\af&0&0\\
  a_q/\as&0&0
 \end{pmatrix},
\end{equation}
where 
\(R_x(z):=(zI-N_q(x))^{-1}\) and we use 
\(R_0(z)=\diag((z-1)^{-1},(z-\rf)^{-1},(z-\rs)^{-1})\).  Applying the second-order
resolvent identity
\[
 R_x=R_0+xR_0N_q'(0)R_0
 +x^2R_0N_q'(0)R_xN_q'(0)R_0,
\]
we have 
\begin{equation}
\begin{aligned}
  P_0(x)-P_0(0)-xP_0'(0)=&  \frac{1}{2\pi i}\int_{\Gamma_0}\left(R_x(z)-R_0(z)-xR_0(z)N_q'(0)R_0(z)x\right)\,dz
  \\=&\frac{x^2}{2\pi i}\int_{\Gamma_0}\left(R_0(z)N_q'(0)R_x(z)N_q'(0)R_0(z)\right)\,dz
\end{aligned}
\end{equation}

Using the bounds $ \|R_0\|_\infty\leq\frac2{\as},
 \|R_x\|_\infty\leq\frac4{\as},
 \|N_q'(0)\|_\infty\leq2a_q$, we get

\begin{equation}
\begin{aligned}
   \|P_0(x)-P_0(0)-xP_0'(0)\|_\infty\le &\frac{x^2}{2\pi}
|\Gamma_0|
 \|R_0\|_\infty^2
\|N_q'(0)\|_\infty^2
\|R_x\|_\infty
\\\le&\frac{x^2}{2\pi}
(\pi\as) \left(\frac2{\as}\right)^2
(2a_q)^2
\frac4{\as}\leq\frac{32a_q^2x^2}{\as^2}.  
\end{aligned}
\end{equation}

\smallskip
\noindent\emph{Proof of \textup{(iii)}.} 
Direct computation verifies that
\(N_q(x)r_q(x)=z_{0,q}(x)r_q(x)\).  Since
\(0\leq\delta\leq\as/4<\af\), the last two coordinates of \(r_q(x)\)
belong to \([0,1/3]\), proving 
\eqref{eq:normalized-slow-right-eigenvector}.

For the complementary subspace, set
$
 \theta_q(x):=
 x\left(
 \frac{\rf}{\af(\af-\delta)}
 +\frac{\chi q\rs}{\as(\as-\delta)}
 \right)$. Then by computation, 
an eigenvector for \(z_{0,q}(x)\) with respect to $N_q(x)^\top$
is
\[
 \ell_q(x)=\left(
 1,
 -\frac{x\rf}{\af(\af-\delta)(1-\theta_q(x))},
 -\frac{x\chi q\rs}{\as(\as-\delta)(1-\theta_q(x))}\right)^\top.
\]
The bounds \(\delta\leq\as/4\), \(d_q\leq a_q/\as\), and
\(a_qx\leq\as/8\) imply
\[
 0\leq\theta_q(x)\leq\frac43xd_q\leq\frac16.
\]
Because \(\operatorname{Range}P_1(x)=\ker\ell_q(x)^\top\), every
vector in the fast range satisfies
\[
 |v_1|\leq\frac{\theta_q(x)}{1-\theta_q(x)}
 \max\{|v_2|,|v_3|\}
 \leq\frac15\max\{|v_2|,|v_3|\}.
\]
The last two coordinates of \(N_q(x)v\) are therefore bounded by
\[
 \left(1-\as+\frac65a_qx\right)\|v\|_\infty
 \leq\left(1-\frac{17\as}{20}\right)\|v\|_\infty.
\]
 This proves
\eqref{eq:one-step-fast-contraction}.

\smallskip
\noindent\emph{Proof of \textup{(iv)}.} If \(v\in\operatorname{Range}P_1(x_{m-1})\), then
\(\|P_1(x_m)v\|_\infty\leq(1+L_m)\|v\|_\infty\).  Iteration with
\eqref{eq:one-step-fast-contraction} proves
\eqref{eq:variable-fast-product}.
\end{proof}

Before proceeding with the proof, we recall some previous notations and introduce a few new ones.
\begin{equation}\label{eq:adiabatic-kernels}
\begin{gathered}
 a_j:=\begin{cases}\atpg,&j\leq r,\\ \aperp,&j>r,\end{cases}
 \qquad
 A:=\atpg(I_d-Q)+\aperp Q,\\
 S_A(t):=\sum_{j=1}^d\lambda_j|w_j^\star|^2e^{-2a_j\lambda_jt},
 \qquad
 K_A(t):=\sum_{j=1}^da_j^2\lambda_j^2e^{-2a_j\lambda_jt}.
\end{gathered}
\end{equation}

\begin{lemma}[Uniform kernel  bounds]\label{lem:power-law-kernels}
Suppose that Assumption~\ref{ass:power-law} and 
\(d\geq2(r+1)\) hold.  Then:
\begin{enumerate}[(i)]
\item There are positive constants
\(c_{S,-},c_{S,+},c_{K,-},c_{K,+}\), given in the proof and depending only
on the power-law constants, \(s,\nu\), and \(C_{d}\), such that, for
\(0\leq\aperp t\leq C_{d}d^\nu\),
\begin{equation}\label{eq:closed-exact-comparison}
\begin{aligned}
 \min\{1,c_{S,-}\}\SMS(t)
 &\leq S_A(t)\leq\max\{1,c_{S,+}\}\SMS(t),\\
 \min\{1,c_{K,-}\}\KMS(t)
 &\leq K_A(t)\leq\max\{1,c_{K,+}\}\KMS(t).
\end{aligned}
\end{equation}

\item The kernel $K_A$ satisfies
\begin{equation}\label{eq:kernel-mass}
 \int_0^\infty K_A(t)\dd t=\frac12\tr(AH).
\end{equation}

\item If \((r+1)^\nu\leq\aperp t\leq C_{d}d^\nu\), then
\begin{equation}\label{eq:late-kernel-laws}
 \SMS(t)\asymp
 \sum_{j=1}^r\lambda_j|w_j^\star|^2
 e^{-2\atpg\lambda_jt}
 +(\aperp t)^{-s},
 \qquad
 \KMS(t)\asymp\aperp^{1/\nu}t^{-p}.
\end{equation}

\item There is a constant \(C_{h}\)  depending only on the
power-law constants, \(s\), and \(\nu\), such that, if \(r\geq4\) and  $\atpg t\geq C_{h}r^\nu(1+\log\kappa)$, then
\begin{equation}\label{eq:finite-top-catchup}
\sum_{j=1}^r\lambda_j|w_j^\star|^2e^{-2\atpg\lambda_jt}
 \lesssim(\aperp t)^{-s},
\end{equation}
where \(\kappa:=\aperp/\atpg\geq1\).
\end{enumerate}

\end{lemma}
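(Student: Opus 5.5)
Each of the four items of Lemma~\ref{lem:power-law-kernels} is a statement about explicit power-law sums, so the plan is to compare those sums with integrals, splitting the index range at $r$. For $j\le r$ the top parts of $S_A$ and $\SMS$ coincide, and likewise for $K_A$ and $\KMS$, since $a_j=\atpg$ there. This gives the constants $1$ inside the $\min/\max$ in \eqref{eq:closed-exact-comparison}. All the work is in the tail $j>r$, where $a_j=\aperp$.

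\textbf{Part (i).} By Assumption~\ref{ass:power-law}, $\lambda_j|w_j^\star|^2\asymp j^{-1-s\nu}$ and $\aperp\lambda_j t\asymp \aperp t\,j^{-\nu}$. The tail of $S_A$ is therefore sandwiched between sums of the form $\sum_{j=r+1}^d j^{-1-s\nu}e^{-c\aperp t j^{-\nu}}$ with $c\in\{2\ell_-,2\ell_+\}$. I would compare these with $\int_{r+1}^{d} y^{-1-s\nu}e^{-c\aperp t y^{-\nu}}\dd y$; monotonicity of the integrand on each side of its peak costs only a constant factor. The substitution $u=\aperp t\,y^{-\nu}$ turns the integral into $(\aperp t)^{-s}\nu^{-1}\int u^{s-1}e^{-cu}\dd u$ over $u\in[\aperp t d^{-\nu},\aperp t (r+1)^{-\nu}]$.

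Two regimes follow. If $\aperp t\le (r+1)^\nu$, the sum is $\asymp (r+1)^{-s\nu}$, because the first term dominates and the exponential is of order $1$. If $\aperp t\ge (r+1)^\nu$, the $u$-interval contains $[C_d,1]$ as long as $\aperp t\le C_d d^\nu$, which is exactly where the restriction on $t$ enters. Its lower endpoint is at most the constant $C_d$, and its upper endpoint is at least $1$. The incomplete gamma integral is then bounded above and below by constants depending on $C_d,s,\nu$, giving $(\aperp t)^{-s}$. In both regimes the tail is $\asymp((r+1)^\nu+\aperp t)^{-s}$. The assumption $d\ge 2(r+1)$ guarantees that the index range has length comparable to $d$, which is needed in the small-$t$ regime.

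The same argument handles $K_A$. Its tail has summand $\aperp^2 j^{-2\nu}e^{-c\aperp t j^{-\nu}}$, and the exponent of $u$ becomes $2-1/\nu-1=p-1$. This yields $\aperp^2((r+1)^\nu+\aperp t)^{-p}$ with $p=2-\nu^{-1}$. The main obstacle is tracking explicit constants uniformly across both regimes, including the case where the lower endpoint $\aperp t d^{-\nu}$ is not small. I would bound $\int_{C_d'}^{\cdot}$ from below by the fixed window $[\min(1,C_d),1]$ and from above by the full Gamma function.

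\textbf{Part (ii).} This is termwise integration: $\int_0^\infty a_j^2\lambda_j^2e^{-2a_j\lambda_jt}\dd t=a_j\lambda_j/2$, and summing over $j$ gives $\tfrac12\tr(AH)$.

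\textbf{Part (iii).} When $\aperp t\ge (r+1)^\nu$, the sum $(r+1)^\nu+\aperp t$ is within a factor $2$ of $\aperp t$. This directly gives the formula for $\SMS$, and it gives $\KMS$'s tail as $\aperp^2(\aperp t)^{-p}=\aperp^{2-p}t^{-p}=\aperp^{1/\nu}t^{-p}$. The top part of $\KMS$ must still be absorbed. Since $\lambda_j\asymp j^{-\nu}$ and $\atpg t\lambda_j\gtrsim \atpg t r^{-\nu}$, I would use $x^2e^{-2x}\lesssim x^{-m}$ to bound the top part by $\atpg^2\sum_{j\le r}\lambda_j^2e^{-2\atpg\lambda_j t}\lesssim t^{-2}\sum_{j\le r}(\cdots)$. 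I am less sure of this step. If the bound fails, I would weaken (iii) to a "$\gtrsim$" for the top part, or use $\atpg\le\aperp$ to dominate the top part by the tail expression.

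\textbf{Part (iv).} Each top term is $\lesssim j^{-1-s\nu}e^{-2\ell_-\atpg t j^{-\nu}}\le e^{-2\ell_- \atpg t r^{-\nu}}$. With $\atpg t\ge C_h r^\nu(1+\log\kappa)$, the total is at most $r\,e^{-2\ell_-C_h(1+\log\kappa)}\le r\,e^{-2\ell_-C_h}\kappa^{-2\ell_- C_h}$. The target is $(\aperp t)^{-s}=\kappa^{-s}(\atpg t)^{-s}$. Choosing $C_h$ large makes $\kappa^{-2\ell_-C_h}\le\kappa^{-s}$, and the factor $e^{-2\ell_-C_h(\atpg t/(C_hr^\nu))}$ can be split so that exponential decay in $\atpg t/r^\nu$ beats $r\,(\atpg t)^{s}$, using $r\ge4$ to control the $\log r$ terms.
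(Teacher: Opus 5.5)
Two steps of your proposal fail: the argument for (iv), and the upper bound on the top part of $\KMS$ in (iii).

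\textbf{Part (iv) does not go through.} You bound $j^{-1-s\nu}$ by $1$ and $(r/j)^\nu$ by $1$. This gives the top sum $\lesssim r\,e^{-2\ell_- y}$ with $y:=\atpg t/r^\nu$, while the target is $(\aperp t)^{-s}=\kappa^{-s}r^{-s\nu}y^{-s}$. The hypothesis only gives $y\ge C_h(1+\log\kappa)$, with no growth in $r$. For $\kappa=1$ the value $y=C_h$ is admissible for every $r$, and then your bound exceeds the target by the factor $r^{1+s\nu}C_h^{s}e^{-2\ell_-C_h}$, which is unbounded in $r$. $C_h$ and the constant in $\lesssim$ may depend only on the power-law constants, $s$ and $\nu$. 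So neither a larger $C_h$ nor $r\ge4$ can supply the missing $\log r$, and exponential decay in $y$ cannot beat $r(\atpg t)^s$ here.

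The crude estimate discards exactly the factor $r^{-1-s\nu}$ carried by the dominant terms $j\approx r$. Keep both $j^{-\beta}$ (with $\beta=1+s\nu$) and the full exponent $2\ell_-y(r/j)^\nu$:
\begin{itemize}
\item On $r/2\le j\le r$, Bernoulli's inequality $(r/j)^\nu\ge1+\nu(r-j)/r$ turns the block into a geometric sum of size $\lesssim r^{-\beta}(1+r/y)e^{-2\ell_-y}$.
\item On the dyadic shells $r2^{-m-1}<j\le r2^{-m}$, the factor $e^{-2\ell_-y2^{m\nu}}$ overwhelms the growth $2^{m(\beta-1)}$ of $j^{-\beta}$ times the shell size.
\end{itemize}
For $y\ge1$ this gives a top sum $\lesssim r^{-s\nu}e^{-2\ell_-y}$. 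Its ratio to $(\aperp t)^{-s}$ is $\lesssim\kappa^{s}y^{s}e^{-2\ell_-y}$, which is bounded uniformly in $r$ and $\kappa$ once $C_h\ge\max\{1,s/\ell_-\}$. That choice ensures $y\ge1$ and $\kappa^{s}e^{-\ell_-y}\le1$.

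\textbf{Part (iii), $\KMS$ half.} You leave the bound on the top part open, and your suggested routes do not close it. In (iii) only $\aperp t\ge(r+1)^\nu$ is assumed, so for large $\kappa$ the quantities $x_j=\atpg\lambda_jt$ ($j\le r$) can be arbitrarily small. A bound $x^2e^{-2x}\lesssim x^{-m}$ with $m>0$ is then useless. The inequality $\atpg\le\aperp$ gives no domination either, since $a\mapsto a^2\lambda^2e^{-2a\lambda t}$ is not monotone. Weakening to a one-sided bound does not prove the statement. What works is the case $m=0$: $\atpg^2\lambda_j^2e^{-2\atpg\lambda_jt}=t^{-2}x_j^2e^{-2x_j}\le e^{-2}t^{-2}$. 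So the top part is at most $e^{-2}rt^{-2}$, and $r<r+1\le(\aperp t)^{1/\nu}$ turns this into $e^{-2}\aperp^{1/\nu}t^{-p}$.

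\textbf{Parts (i), (ii) and the $\SMS$ half of (iii).} These are fine. Your incomplete-gamma comparison in (i) is essentially the paper's route, but it needs two repairs.
\begin{itemize}
\item The window $[\min(1,C_d),1]$ degenerates when $C_d>1$ and $L:=\aperp t\,d^{-\nu}\in(1,C_d]$. Use $[L,2^\nu L]$ instead, which lies in the $u$-range because $d\ge2(r+1)$; this is the role of the paper's window $d/2\le j\le d$.
\item For lower bounds, unimodality only gives ``sum $\ge$ integral $-\sup$''. That is an additive loss, not a constant factor, so direct windows are the safer tool there.
\end{itemize}
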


\begin{proof}
Throughout the proof, we abbreviate
$\beta_S:=1+s\nu,\, \beta_K:=2\nu$, $R:=r+1$.  

\smallskip
\noindent\emph{Proof of \textup{(i)}.}
For \(\beta>1\), define
\[
 L_1(\beta):=
 \min\left\{4^{-\beta}e^{-2\ell_+},
 \frac12\,4^{1-\beta}
 e^{-2^{\nu+1}\ell_+C_{d}}\right\},
\]
and set
\begin{align}
 c_{S,-}:={}&
 \vartheta_-\ell_-^sL_1(\beta_S),
 \label{eq:cSminus}\\
 c_{S,+}:={}&
 \vartheta_+\ell_+^s2^s
 \max\left\{1+\frac1{s\nu},
 \frac{\Gamma(s)}{\nu(2\ell_-)^s}
 +2\left(\frac{s+\nu^{-1}}{2e\ell_-}\right)^{s+\nu^{-1}}\right\},
 \label{eq:cSplus}\\
 c_{K,-}:={}&
 \ell_-^2L_1(\beta_K),
 \label{eq:cKminus}\\
 c_{K,+}:={}&
 \ell_+^22^p
 \max\left\{1+\frac1{2\nu-1},
 \frac{\Gamma(p)}{\nu(2\ell_-)^p}
 +\frac{8e^{-2}}{(2\ell_-)^2}\right\},
 \label{eq:cKplus}
\end{align}
where 
\(\Gamma(a):=\int_0^\infty y^{a-1}e^{-y}\,dy\) denotes the Gamma function for $a>0$. The following proof uses that  
\(c_-A\leq D\leq c_+A\) implies 
 $\min\{1,c_-\}(B+A)\leq B+D
 \leq\max\{1,c_+\}(B+A)$ for positive $A,B,D$.

\begin{itemize}
    \item \noindent\emph{Lower bounds.}
Set  \(u=\aperp t\), and for any 
\(\beta>1\),
\[
 F_{\beta,d}(u):=\sum_{j=R}^dj^{-\beta}
 e^{-2\ell_+u j^{-\nu}},
 \qquad \delta_\beta:=\frac{\beta-1}{\nu}.
\]
We prove that the tail estimate
\begin{equation}\label{eq:finite-tail-block-lower}
 F_{\beta,d}(u)\geq
 L_1(\beta)(R^\nu+u)^{-\delta_\beta}
\end{equation}
holds for any $0\leq u\leq C_{d}d^\nu$.
\begin{itemize}
    \item \textit{Case 1: \(u\leq R^\nu\)}. We have 
\[
 F_{\beta,d}(u)\ge\sum_{j=R}^{2R-1}j^{-\beta}
 e^{-2\ell_+u j^{-\nu}}\ge2^{-\beta}e^{-2\ell_+}R^{1-\beta}
 \geq4^{-\beta}e^{-2\ell_+}
 (R^\nu+u)^{-\delta_\beta}.
\]

\item \textit{Case 2: \(u> R^\nu\)}. Define \(J:=u^{1/\nu}>R\). 

 If \(J\leq d/4\),
\[
 F_{\beta,d}(u)\ge\sum_{J\le j \le 4J}j^{-\beta}
 e^{-2\ell_+u j^{-\nu}}\ge
 4^{-\beta}e^{-2\ell_+}J^{1-\beta}.
\]
If \(J>d/4\), \(u\le C_{d}d^\nu\) gives \(u j^{-\nu}
\le u(d/2)^{-\nu}
\le C_{d}d^\nu(d/2)^{-\nu}
=2^\nu C_{d}\).  It follows that 
\[
 F_{\beta,d}(u)\ge\sum_{\frac{d}{2}\le j\le d}j^{-\beta}
 e^{-2\ell_+u j^{-\nu}}\ge \frac d2\,d^{-\beta}
e^{-2^{\nu+1}\ell_+C_{d}}
 \geq\frac12\,4^{1-\beta}
 e^{-2^{\nu+1}\ell_+C_{d}}J^{1-\beta},
\]
where the last inequality uses $d<4J$.

In both cases,
\((R^\nu+u)^{-\delta_\beta}\leq
u^{-\delta_\beta}=J^{1-\beta}\), proving
\eqref{eq:finite-tail-block-lower}.

\end{itemize}

For the signal lower bound, since 
$\lambda_j|w_j^\star|^2e^{-2\aperp\lambda_jt}
\ge
\vartheta_-\ell_-^s
j^{-\beta_S}e^{-2\ell_+u j^{-\nu}}$, 
\[
\begin{aligned}
\sum_{j=R}^d
\lambda_j|w_j^\star|^2e^{-2\aperp\lambda_jt}
\ge
\vartheta_-\ell_-^s F_{\beta_S,d}(u)\ge
\vartheta_-\ell_-^s
L_1(\beta_S)
(R^\nu+u)^{-(\beta_S-1)/\nu}\ge
c_{S,-}(R^\nu+u)^{-s}.
\end{aligned}
\]

For the kernel lower bound, we use 
$
\aperp^2\lambda_j^2e^{-2\aperp\lambda_jt}
\ge
\aperp^2\ell_-^2j^{-2\nu}
e^{-2\ell_+u j^{-\nu}}$  and get
\[
\begin{aligned}
\sum_{j=R}^d
\aperp^2\lambda_j^2e^{-2\aperp\lambda_jt}
\ge
\aperp^2\ell_-^2F_{\beta_K,d}(u)
\ge
\aperp^2\ell_-^2L_1(\beta_K)
(R^\nu+u)^{-(\beta_K-1)/\nu}\ge
c_{K,-}\aperp^2(R^\nu+u)^{-p}.
\end{aligned}
\]

\item 
\noindent\emph{Upper bounds.}
For the upper bounds, it suffices to control the  sum

\[
G_\beta(u):=
\sum_{j=R}^\infty j^{-\beta}
e^{-2\ell_-u j^{-\nu}},\qquad \beta>1.
\]

\begin{itemize}
    \item \textit{Case 1: $u\leq R^\nu$.} We have
\[
G_\beta(u)\leq\sum_{j=R}^\infty j^{-\beta}
\le R^{1-\beta}\left(1+\frac1{\beta-1}\right).
\]
\item \textit{Case 2: $u> R^\nu$.} Define 
\[
f_{\beta,u}(x)
:=x^{-\beta}e^{-2\ell_-u x^{-\nu}}.
\]
 We use the elementary inequality
\[
\sum_{j\geq1}f(j)
\leq\int_0^\infty f(x)\,dx+2\sup_{x>0}f(x).
\]
For the integral term, set $\delta_\beta=\frac{\beta-1}{\nu}$ and make the change of variables $y=2\ell_-u x^{-\nu}$. This yields 
\[
\begin{aligned}
\int_0^\infty f(x)\,dx=\int_0^\infty
x^{-\beta}e^{-2\ell_-u x^{-\nu}}\,dx
=
\frac{(2\ell_-u)^{-\delta_\beta}}{\nu}
\int_0^\infty y^{\delta_\beta-1}e^{-y}\,dy
=
\frac{\Gamma(\delta_\beta)}
{\nu(2\ell_-u)^{\delta_\beta}}.
\end{aligned}
\]
To bound the supremum term, we compute the   derivative $\frac{d}{dx}\log f_{\beta,u}(x)
=-\frac{\beta}{x}
+2\ell_-u\nu x^{-\nu-1}$. Hence the maximum is attained at the point satisfying $2\ell_-u\nu x^{-\nu}=\beta$. Thus
$\sup_{x>0}f_{\beta,u}(x)
=\left(
\frac{\beta/\nu}{2e\ell_-u}
\right)^{\beta/\nu}$. Since \(u>R^\nu\geq1\) and
\(\beta/\nu=\delta_\beta+\nu^{-1}>\delta_\beta\), it follows that
\[
 G_\beta(u)\leq
 \left[
  \frac{\Gamma(\delta_\beta)}
       {\nu(2\ell_-)^{\delta_\beta}}
  +2\left(\frac{\beta/\nu}{2e\ell_-}\right)^{\beta/\nu}
 \right]u^{-\delta_\beta}.
\]

\end{itemize}

In either case,
\(R^\nu+u\leq2\max\{R^\nu,u\}\).  Consequently,
\begin{equation}\label{eq:generic-tail-upper}
\begin{aligned}
G_\beta(u)
\leq2^{\delta_\beta}
\max\left\{
 1+\frac1{\beta-1},
 \frac{\Gamma(\delta_\beta)}
      {\nu(2\ell_-)^{\delta_\beta}}
 +2\left(\frac{\beta/\nu}{2e\ell_-}\right)^{\beta/\nu}
\right\}(R^\nu+u)^{-\delta_\beta}.
\end{aligned}
\end{equation}

Substituting the value of $\beta$ into it   yields the upper bounds in 
\eqref{eq:closed-exact-comparison}.
\end{itemize}

\smallskip
\noindent\emph{Proof of \textup{(ii)}.}
Direct computation gives
\[
 \int_0^\infty K_A(t)\dd t
 =\sum_{j=1}^d a_j^2\lambda_j^2\frac1{2a_j\lambda_j}
 =\frac12\sum_{j=1}^d a_j\lambda_j
 =\frac12\tr(AH).
\]

\smallskip
\noindent\emph{Proof of \textup{(iii)}.}
Put \(u=\aperp t\).  Since \(y^2e^{-2y}\leq e^{-2}\) for \(y\geq0\), and  \(r\leq u^{1/\nu}\), 
we have 
\[
\sum_{j=1}^r\atpg^2\lambda_j^2e^{-2\atpg\lambda_jt}
 =\aperp^2u^{-2}\sum_{j=1}^r
 (\kappa^{-1}\lambda_ju)^2e^{-2b\lambda_ju}
 \leq e^{-2}\aperp^2u^{-2}r\le e^{-2}\aperp^2u^{-p}.
\]
This proves the forgetting kernel part in
\eqref{eq:late-kernel-laws}. The signal rate part is obtained by an analogous argument, where the sharp subspace summation is retained.

\smallskip
\noindent\emph{Proof of \textup{(iv)}.}
Define the constants $ c=2\ell_-, x=\atpg t, y=\frac{x}{r^\nu}$ for notational brevity. We suppose $y\ge1$ and will verify this condition later. For \(\beta>1\), set the constant 
\begin{equation}\label{eq:finite-top-constant}
\begin{aligned}
 C_\beta^{top}
 =2^\beta\left[
 \max\left\{1,\frac1{c\nu}\right\}
 +\frac{2}{e c(2^\nu-1)}\sum_{m\geq1}2^{m(\beta-1)}
 e^{-\frac c2(2^{m\nu}-1)}
 \right].
\end{aligned}
\end{equation}

We first consider the sum on \(r/2\leq j\leq r\).  With
\(n=r-j\), Bernoulli's inequality gives
$\left(\frac rj\right)^\nu
 =\left(1-\frac nr\right)^{-\nu}
 \geq1+\frac{\nu n}{r}$. 
Since \(j^{-\beta}\leq2^\beta r^{-\beta}\), we have
\begin{equation}\label{eq:terminal-top-block}
    \begin{aligned}
        \sum_{r/2\leq j\leq r}j^{-\beta}
 e^{-cy(r/j)^\nu}
\leq2^\beta r^{-\beta}e^{-cy}
 \sum_{n\geq0}e^{-c\nu yn/r}\leq2^\beta\max\left\{1,\frac1{c\nu}\right\}
 r^{1-\beta}(y^{-1}+r^{-1})e^{-cy},
    \end{aligned}
\end{equation}
where the last inequality uses
\((1-e^{-z})^{-1}\leq1+z^{-1}\) for \(z>0\).

For the remaining summation, we define
\[
 B_m:=\left\{j:\frac{r}{2^{m+1}}<j\leq\frac{r}{2^m}\right\},
 \qquad m\geq1.
\]  
On \(B_m\),
\[
 j^{-\beta}\leq\left(\frac{2^{m+1}}r\right)^\beta,
 \qquad
 \left(\frac rj\right)^\nu\geq2^{m\nu}.
\]
Consequently,
\begin{equation}\label{eq:dyadic-top-shells}
 \sum_{j\in B_m}j^{-\beta}e^{-cy(r/j)^\nu}
 \leq2^\beta r^{1-\beta}
 2^{m(\beta-1)}e^{-cy2^{m\nu}}=2^\beta r^{1-\beta}y^{-1}e^{-cy}
2^{m(\beta-1)}
\left[
y e^{-c(2^{m\nu}-1)y}
\right].
\end{equation}
For \(a_m:=c(2^{m\nu}-1)\), one has
\[
 y e^{-a_my}
 =\bigl(ye^{-a_my/2}\bigr)e^{-a_my/2}
 \leq\frac{2}{ea_m}e^{-a_m/2}
 \leq \frac{2}{e c(2^\nu-1)}e^{-a_m/2}.
\]
Thus 
\[
 \sum_{j\le r/2}j^{-\beta}e^{-cy(r/j)^\nu}
 \leq2^\beta \frac{2}{e c(2^\nu-1)}r^{1-\beta}y^{-1}e^{-cy}
 \sum_{m\geq1}2^{m(\beta-1)}
 e^{-\frac c2(2^{m\nu}-1)}.
\]
Combining it with 
\eqref{eq:terminal-top-block} gives 
\begin{equation}\label{eq:generic-finite-top-envelope}
 \sum_{j=1}^rj^{-\beta}e^{-cxj^{-\nu}}
 \leq C_\beta^{top}
 r^{1-\beta}(y^{-1}+r^{-1})e^{-cy}.
\end{equation}
Applying
\eqref{eq:generic-finite-top-envelope} with
\(\beta=\beta_S=1+s\nu\) gives
\begin{align}
\sum_{j=1}^r\lambda_j|w_j^\star|^2e^{-2\atpg\lambda_jt}
 \leq\vartheta_+\ell_+^s
 \sum_{j=1}^rj^{-\beta_S}e^{-cxj^{-\nu}}
 \leq C_S^{top}r^{-s\nu}
 (y^{-1}+r^{-1})e^{-2\ell_-y},
\end{align}
where the constant $C_S^{top}:=\vartheta_+\ell_+^sC_{\beta_S}^{top}$
depends only on the power-law constants, \(s\), and \(\nu\).

Finally, set
\[
 C_{h}:=\max\left\{1,\frac{s}{\ell_-}\right\}.
\]
Then 
\(y\geq C_{h}(1+\log\kappa)\ge1\).  Since
\(\kappa x=\aperp t\), division by
\((\kappa x)^{-s}=\kappa^{-s}r^{-s\nu}y^{-s}\) gives
\begin{align*}
\frac{\sum_{j=1}^r\lambda_j|w_j^\star|^2e^{-2\atpg\lambda_jt}}
 {(\aperp t)^{-s}}
 \leq C_S^{top}
 \bigl(y^{s-1}+r^{-1}y^s\bigr)e^{-\ell_-y}\leq C_S^{top}
 \sup_{z\geq1}(z^{s-1}+z^s)e^{-\ell_-z}<\infty,
\end{align*}
where the right hand side is a finite constant. 
This proves \eqref{eq:finite-top-catchup}.

\end{proof}

We now consider iterating the parameter‑momentum update as specified in Lemma \ref{lem:exact-three-state-recursion}. 
For the $j$-th coordinate and integers \(k>i\), define 
\[
 \Phi_{j;k,i}:=M_{j,k-1}M_{j,k-2}\cdots M_{j,i},
 \qquad \Phi_{j;i,i}:=I_3,
\]
and 
\begin{equation}
 R_{j;k,i}:=e_1^\top\Phi_{j;k,i+1}d_{j,i},
\end{equation}
where $e_1=(1,0,0)^\top$. 
Let
\begin{equation}\label{eq:aug-signal-kernel}
 e_k^{\rm signal}:=\frac12\sum_{j=1}^d\lambda_j
 \left|e_1^\top\Phi_{j;k,0}X_{j,0}\right|^2,
 \qquad
 K_{k,i}^{\rm noise}:=\sum_{j=1}^d\lambda_j^2R_{j;k,i}^2.
\end{equation}

\begin{lemma}[Two-sided  convolution-type  bounds on $\cE$]\label{lem:modal-volterra}
For the iterates $w_k$ generated by \eqref{eq:algorithm}, 
\begin{equation}\label{eq:discrete-volterra-sandwich}
\begin{aligned}
 e_k^{\rm signal}
 +\sum_{i<k}K_{k,i}^{\rm noise}
     \left(\E\cE(w_i)+\frac{\sigma^2}{2}\right)
\leq \E\cE(w_k)
 \leq e_k^{\rm signal}
 +\sum_{i<k}K_{k,i}^{\rm noise}
     \left(2\E\cE(w_i)+\frac{\sigma^2}{2}\right).
\end{aligned}
\end{equation}

\end{lemma}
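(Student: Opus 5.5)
The plan is to unroll the linear recursion of Lemma~\ref{lem:exact-three-state-recursion} coordinate by coordinate, then take second moments and use the noise sandwich of Lemma~\ref{lem:gaussian-noise}. Since $H$ is diagonal, the $j$-th eigen-direction evolves as $X_{j,k+1}=M_{j,k}X_{j,k}+d_{j,k}\xi_{j,k}$, where $\xi_{j,k}=e_j^\top\xi_k$ is the $j$-th coordinate of the gradient noise. Iterating from $i=0$ to $k-1$ gives
\[
u_{j,k}=e_1^\top\Phi_{j;k,0}X_{j,0}+\sum_{i<k}R_{j;k,i}\,\xi_{j,i}.
\]
The first term is deterministic, because $w_0$ and the momenta are initialized deterministically. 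Each $\xi_{j,i}$ is a martingale difference with respect to the natural filtration $\mathcal F_i$ generated by $x_0,\dots,x_{i-1}$. The coefficients $R_{j;k,i}$ are deterministic, since they depend only on the learning-rate schedule. These two facts drive the computation.

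First, I square $u_{j,k}$ and take expectations. The cross terms between the deterministic part and any $\xi_{j,i}$ vanish because $\E\xi_{j,i}=0$. The cross terms $\E[\xi_{j,i}\xi_{j',i'}]$ with $i<i'$ vanish by the tower property: conditioning on $\mathcal F_{i'}$ leaves $\xi_{j,i}$ fixed and kills $\E_{i'}\xi_{j',i'}$. Summing over $j$ with weights $\lambda_j/2$, only the diagonal-in-time terms survive:
\[
\E\cE(w_k)=e_k^{\rm signal}+\frac12\sum_{i<k}\E\Bigl[\sum_{j,j'}\lambda_j^{1/2}\lambda_{j'}^{1/2}\cdot\lambda_j^{1/2}\lambda_{j'}^{1/2}R_{j;k,i}R_{j';k,i}\,\Sigma(u_i)_{jj'}\Bigr]\Big/\!\ldots
\]
The precise form is simpler than this display suggests. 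Setting $D_{k,i}=\diag(\lambda_j^{1/2}R_{j;k,i})$, the noise contribution at time $i$ equals $\frac12\E\tr\bigl(D_{k,i}\Sigma(u_i)D_{k,i}\bigr)$.

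Second, I insert the sandwich $(2\cE(u)+\sigma^2)H\preceq\Sigma(u)\preceq(4\cE(u)+\sigma^2)H$ from \eqref{eq:noise-sandwich}. The map $\Sigma\mapsto\tr(D\Sigma D)$ is monotone in the Loewner order, and $\tr(D_{k,i}HD_{k,i})=\sum_j\lambda_j^2R_{j;k,i}^2=K^{\rm noise}_{k,i}$. Conditioning on $\mathcal F_i$ lets me bound the contribution at time $i$ by $\frac12K^{\rm noise}_{k,i}\,\E(2\cE(w_i)+\sigma^2)$ from below and by $\frac12K^{\rm noise}_{k,i}\,\E(4\cE(w_i)+\sigma^2)$ from above. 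These reduce to $K^{\rm noise}_{k,i}(\E\cE(w_i)+\sigma^2/2)$ and $K^{\rm noise}_{k,i}(2\E\cE(w_i)+\sigma^2/2)$, which are exactly the two sides of \eqref{eq:discrete-volterra-sandwich}.

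The step I expect to need the most care is the exact bookkeeping of the noise injection in \eqref{eq:exact-modal-state}. One must verify that the same scalar $\xi_{j,k}$ enters all three components through $d_{j,k}$, so that $R_{j;k,i}$ really captures the full response of $u$ to the noise at step $i$. The bookkeeping also needs $R_{j;k,i}$ to be deterministic; this holds because $M_{j,k}$ depends only on $\eta_k$ and not on the data. Since $\Sigma(u_i)$ is not diagonal, the off-diagonal correlations between coordinates at the same time $i$ must be kept rather than discarded. The trace formulation above handles this, and no independence across coordinates is required.
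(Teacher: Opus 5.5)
Your proposal is correct and follows the paper's proof: you unroll the modal recursion to $u_{j,k}=e_1^\top\Phi_{j;k,0}X_{j,0}+\sum_{i<k}R_{j;k,i}\xi_{j,i}$, kill the cross-time terms, and apply the Loewner sandwich of Lemma~\ref{lem:gaussian-noise}; your martingale-difference/tower-property justification is more accurate than the paper's loose appeal to independence, since $\Sigma(u_i)$ depends on the past. One correction to your commentary: because $\cE(w_k)=\frac12\sum_j\lambda_ju_{j,k}^2$ has no $j\neq j'$ products, $\tr(D_{k,i}\Sigma(u_i)D_{k,i})=\sum_j\lambda_jR_{j;k,i}^2\,e_j^\top\Sigma(u_i)e_j$ uses only the diagonal of $\Sigma(u_i)$, so the off-diagonal correlations never enter (exactly as in the paper's $\E[e_j^\top\Sigma(u_i)e_j]$), and the garbled $\sum_{j,j'}$ display should simply be dropped.
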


\begin{proof}
Iterating \eqref{eq:exact-modal-state} yields
\[
 u_{j,k}=e_1^\top\Phi_{j;k,0}X_{j,0}
 +\sum_{i<k}R_{j;k,i}\xi_{j,i}.
\]

Since the noise is independent and has zero mean, it follows that 
\[
 \E\cE(w_k)=\frac12\sum_{j=1}^d\lambda_ju_{j,k}^2=e_k^{\rm signal}
 +\frac12\sum_{i<k}\sum_{j=1}^d
 \lambda_jR_{j;k,i}^2\,
 \E[e_j^\top\Sigma(u_i)e_j].
\]
Applying Lemma \ref{lem:gaussian-noise}  proves
\eqref{eq:discrete-volterra-sandwich}.
\end{proof}

Next we replace $e_k^{\rm signal},K_{k,i}^{\rm noise}$ with quantities related to \(S_A,K_A\), which have closed forms defined in 
\eqref{eq:adiabatic-kernels}.  The following lemma characterizes the approximation error of this replacement.  We first introduce some definitions for notational brevity. Let
\[
 V_k(\eta):=\sum_{i=0}^{k-2}|\eta_{i+1}-\eta_i|,
 \quad
 \delta_{\rm step}(k):=
 \max_{0\leq i\leq k-2}
 \frac{|\eta_{i+1}-\eta_i|}{\as\eta_i}, 
 \quad \delta_{\rm step}(1):=0.
\]
Define 
\begin{align}
 \varepsilon_{proj}(k)
 &:=
 \frac{\aperp\lambda_1}{\as^2}
 \bigl(\eta_{\max,k}+V_k(\eta)\bigr)
 +\frac{\delta_{\rm step}(k)}{\as},\label{eq:Eproj}\\
 \varepsilon_{exp}(k)
 &:=
 \aperp\left(d_1+\frac{\aperp}{2}\right)
 \lambda_1^2\sum_{i<k}\eta_i^2
 +c_{\log,1}\lambda_1^3\sum_{i<k}\eta_i^3,\label{eq:Eexp}\\
 \varepsilon_{ep}(k)
 &:=64\varepsilon_{proj}(k)+8\varepsilon_{exp}(k),
 \label{eq:epsilon-dyn}
\end{align}
and  for \(q\in\{0,1\}\), 
\begin{equation}\label{eq:Gbl}
\begin{aligned}
 G_q:=\left(
 \frac{\rf^2}{\af^2(1-\rf^2)}
 +\frac{2\chi q\rf\rs}{\af\as(1-\rf\rs)}
 +\frac{\chi^2q^2\rs^2}{\as^2(1-\rs^2)}\right)^{\frac{1}{2}}.
\end{aligned}
\end{equation}
For any nonnegative sequence
\(\{s_i\}_{i}\), define 
\begin{equation}\label{eq:terminal-abel-coefficients}
\begin{gathered}
 L_{j,k}:=\eta_{k-1}
 +\sum_{t=0}^{k-2}\prod_{m=t+2}^{k-1}(1-a_{q_j}\lambda_j\eta_m)
   \left|\eta_{t+1}-\eta_t+a_{q_j}\lambda_j\eta_t\eta_{t+1}\right|,\\
 \widehat\delta(k):=
 4\sum_{j=1}^d\lambda_j^2G_{q_j}^2L_{j,k}^2,\qquad
 (\widehat{\mathcal B} s)_k:=
 \widehat\delta(k)\max_{0\leq i<k}s_i.
\end{gathered}
\end{equation}
\begin{lemma}[Approximation errors]\label{lem:adiabatic-reduction}
For any   nonnegative sequence \(\{s_i\}\), define the operator $\mathcal T$ that maps $s$ to a sequence \(\{\mathcal T s\}_i\) as 
\begin{equation}\label{eq:def-cT}
    (\mathcal Ts)_k:=
 \sum_{i<k}\eta_i^2
 K_A(T_k-T_i)s_i,\qquad (\mathcal Ts)_0=0.
\end{equation} Suppose
\begin{equation}\label{eq:adiabatic-hypotheses}
 \aperp\lambda_1\eta_{\max,k}\leq\frac{\as}{8},
 \qquad
 \varepsilon_{ep}(k)\leq\frac1{16}.
\end{equation}
Then
\begin{equation}\label{eq:aggregate-adiabatic-signal}
 \left|e_k^{\rm signal}-\frac12S_A(T_k)\right|
 \leq4\varepsilon_{ep}(k)S_A(T_k),
\end{equation}
 and 
\begin{equation}\label{eq:aggregate-adiabatic-kernel}
\begin{aligned}
 \left|\sum_{i=0}^{k-1}K_{k,i}^{\rm noise}s_i
       -(\mathcal T s)_k\right|
 \leq
 \bigl(2\varepsilon_{ep}(k)
       +\varepsilon_{ep}(k)^2\bigr)(\mathcal T s)_k+2\bigl(1+\varepsilon_{ep}(k)\bigr)
   \sqrt{(\mathcal T s)_k
          (\widehat{\mathcal B} s)_k}
 +(\widehat{\mathcal B} s)_k.
\end{aligned}
\end{equation}
Further using Cauchy--Schwarz Inequality to \eqref{eq:aggregate-adiabatic-kernel} gives 
\begin{equation}\label{eq:terminal-abel-kernel-sandwich}
 \frac{225}{512}(\mathcal T s)_k
   -(\widehat{\mathcal B} s)_k
 \leq\sum_{i=0}^{k-1}K_{k,i}^{\rm noise}s_i
 \leq\frac{289}{128}(\mathcal T s)_k
   +2(\widehat{\mathcal B} s)_k.
\end{equation}

\end{lemma}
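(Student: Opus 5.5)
The plan is to work coordinate by coordinate in the balanced basis of Lemma~\ref{lem:modal-polynomial}. For each coordinate $j$ we write $x_m:=\eta_m\lambda_j$ and $\Phi_{j;k,i}=T^{-1}\overleftarrow{\prod}_{m=i}^{k-1}N_{q_j}(x_m)\,T$. The hypothesis $\aperp\lambda_1\eta_{\max,k}\le\as/8$ gives $a_{q_j}x_m\le\as/8$ for every $j,m$, so Lemmas~\ref{lem:modal-polynomial} and \ref{lem:modal-projectors} apply at every step. We insert $I=P_0(x_m)+P_1(x_m)$ between consecutive factors and split the ordered product into two pieces: a \emph{slow} piece, $\prod_m z_{0,q_j}(x_m)$ times the chain $P_0(x_{k-1})\cdots P_0(x_i)$, and a \emph{fast remainder}, which contains at least one factor $N P_1$ or one interface term $P_1(x_{m})P_0(x_{m-1})$. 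For the slow scalar, \eqref{eq:slow-root-log-bound} gives $\log\prod_m z_0(x_m)=-a_{q_j}\lambda_j(T_k-T_i)+O(\cdot)$, with an error controlled by $\varepsilon_{exp}(k)$ from \eqref{eq:Eexp}. For the projector chain, \eqref{eq:projector-primitives}–\eqref{eq:projector-interface} bound each telescoping change $P_0(x_m)-P_0(x_{m-1})$ by $16a_q|x_m-x_{m-1}|/\as$. Summing these increments and using the step-ratio quantity $\delta_{\rm step}$ produces the $\varepsilon_{proj}(k)$ term of \eqref{eq:Eproj}. Because $e^{y}-1\le 2y$ when $y\le 1/16$, all of this yields a multiplicative factor $1+O(\varepsilon_{ep}(k))$ on the slow piece.

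\textbf{Signal part \eqref{eq:aggregate-adiabatic-signal}.} Since $w_0=m^{\#}_{-1}=0$, we have $X_{j,0}=(-w_j^\star,0,0)^\top$, which $T$ leaves unchanged. Part (ii) of Lemma~\ref{lem:modal-projectors} gives $P_0(x_0)e_1=e_1+O(x_0)$, so $P_1(x_0)e_1=O(a x_0/\as)$. By \eqref{eq:variable-fast-product}, the fast contribution is this $O(a x_0/\as)$ quantity damped geometrically, hence bounded by $\varepsilon_{proj}$ times the slow term. This gives $e_1^\top\Phi_{j;k,0}X_{j,0}=-w_j^\star e^{-a_{q_j}\lambda_jT_k}(1+\theta_j)$ with $|\theta_j|\le\varepsilon_{ep}(k)$. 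Squaring, multiplying by $\lambda_j/2$ and summing over $j$ gives $|e_k^{\rm signal}-\tfrac12S_A(T_k)|\le\tfrac12(2\varepsilon_{ep}+\varepsilon_{ep}^2)S_A(T_k)\le 4\varepsilon_{ep}S_A(T_k)$.

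\textbf{Noise kernel.} We compute $Td_{j,i}=(-\eta_ic_j,\ \eta_ic_j+\af/\lambda_j,\ \eta_ic_j+\as/\lambda_j)^\top$. Its slow component is read off from \eqref{eq:projector-first-derivative}: the first row of $P_0(x_i)$ applied to $Td_{j,i}$ equals $-\eta_i c_j-x_i\bigl(\rf/\af+\chi q_j\rs/\as\bigr)/\lambda_j+O(\cdot)=-\eta_ia_{q_j}\bigl(1+O(\varepsilon_{ep})\bigr)$, where the error comes from \eqref{eq:projector-second-order-remainder}. The slow part of $R_{j;k,i}$ is then $R^{\rm sl}_{j;k,i}=-\eta_ia_{q_j}e^{-a_{q_j}\lambda_j(T_k-T_i)}(1+\theta)$. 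Summing $\lambda_j^2(R^{\rm sl})^2s_i$ over $i$ and $j$ reproduces $(\mathcal Ts)_k$ up to a factor $(1+\varepsilon_{ep})^2$; this accounts for the first term of \eqref{eq:aggregate-adiabatic-kernel}. The fast part $R^{\rm fa}_{j;k,i}$ is an impulse response of size $O(1/\lambda_j)$ that decays like $(1-\tfrac{17}{20}\as)^{k-i}$. The $\ell^2$ norm of its undamped stationary version is exactly $G_{q_j}$ of \eqref{eq:Gbl}, obtained from the geometric series in $\rf,\rs$ exactly as in the variance computation of Appendix~\ref{app:var-mts}.

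\textbf{Main obstacle: bounding the fast residual by $\widehat{\mathcal B}s$.} A naive bound $\sum_i\lambda_j^2(R^{\rm fa})^2s_i$ is too large, because $R^{\rm fa}$ is $O(1/\lambda_j)$ rather than $O(\eta)$. The way around this is that the fast responses from consecutive injections nearly cancel, up to the drift of the slow mode: the large $\af/\lambda_j,\as/\lambda_j$ components are what build the momentum. To exploit this, we apply summation by parts in $i$ to $\sum_i R_{j;k,i}\xi_{j,i}$. The boundary term gives $\eta_{k-1}$. The differences give the factors $|\eta_{t+1}-\eta_t+a\lambda_j\eta_t\eta_{t+1}|$ weighted by the slow propagator $\prod(1-a\lambda_j\eta_m)$, which is exactly $L_{j,k}$ in \eqref{eq:terminal-abel-coefficients}. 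Each summand carries the fast impulse norm $G_{q_j}$. We take $\max_i s_i$ out of the sum, and the factor $4$ absorbs $(1+\varepsilon_{ep})^2$ and the $\|P_0\|_\infty\le 2$ constants; this yields $\sum_{i,j}\lambda_j^2(R^{\rm fa})^2s_i\le(\widehat{\mathcal B}s)_k$. The cross term $2\sum\lambda_j^2R^{\rm sl}R^{\rm fa}s_i$ is bounded by Cauchy--Schwarz by $2(1+\varepsilon_{ep})\sqrt{(\mathcal Ts)_k(\widehat{\mathcal B}s)_k}$, which completes \eqref{eq:aggregate-adiabatic-kernel}. Finally, for \eqref{eq:terminal-abel-kernel-sandwich} we set $\varepsilon_{ep}\le1/16$ and use $2(1+\varepsilon)\sqrt{\mathcal T\widehat{\mathcal B}}\le\theta\mathcal T+(1+\varepsilon)^2\theta^{-1}\widehat{\mathcal B}$ with $\theta$ chosen so that the $\widehat{\mathcal B}$ coefficients come out as $1$ and $2$. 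The resulting coefficients of $\mathcal T$ are the stated $225/512$ and $289/128$ after arithmetic with $(1\pm\tfrac1{16})^2$.
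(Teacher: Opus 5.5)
Your bookkeeping is sound. Define $\|v\|_w^2:=\sum_{i,j}\lambda_j^2v_{j,i}^2s_i$ and suppose $R=R^{\rm sl}+R^{\rm fa}$ with $(1-\varepsilon_{ep})\sqrt{(\mathcal Ts)_k}\le\|R^{\rm sl}\|_w\le(1+\varepsilon_{ep})\sqrt{(\mathcal Ts)_k}$ and $\|R^{\rm fa}\|_w^2\le(\widehat{\mathcal B}s)_k$. Expanding the square then gives \eqref{eq:aggregate-adiabatic-kernel}.

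\textbf{Main gap: the residual bound $\|R^{\rm fa}\|_w^2\le(\widehat{\mathcal B}s)_k$.} This is the real content of the lemma, and you assert it rather than derive it. Your spectral tools are too loose for it:
\begin{itemize}
\item Lemma~\ref{lem:modal-projectors}(iii)--(iv) contracts the fast range only at the uniform rate $1-\frac{17}{20}\as$.
\item Combined with the readout bound $\|e_1^\top P_1(x)\|_1\le16a_qx/\as$, this gives $\sum_i(R^{\rm fa}_{j;k,i})^2\lesssim a_q^2\af^2\eta_{\max,k}^2/\as^3$.
\item That exceeds the budget $4G_q^2L_{j,k}^2$ (of order $G_q^2\eta^2$ for a constant schedule) by a factor of order $(\af/\as)^2$ or more.
\item The sharp constant $G_q=(\sum_{n\ge1}D_n(q)^2)^{1/2}$ appears only once the transient is identified exactly as $\eta D_n(q)$.
\end{itemize}
The same readout bound shows your diagnosis is off: the $u$-component of the fast response is already $O(\eta)$, not $O(1/\lambda_j)$. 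Also, ``summation by parts in $i$ on $\sum_iR_{j;k,i}\xi_{j,i}$'' is not an argument. The $\xi_{j,i}$ are independent and the target is a weighted sum of squares, so there is nothing to telescope. Naming $\eta_{k-1}$, $\prod(1-a_q\lambda_j\eta_m)$ and $G_{q_j}$ only restates the definition of $\widehat\delta$.

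\textbf{What the paper does instead.} It drops the spectral splitting for the noise entirely.
\begin{itemize}
\item It tracks the response to all injection times at once, $\mathbf u_t=\sum_{i<t}R_{j;t,i}\mathbf e_i^{(k)}\in\mathbb R^k$.
\item The causal Abel identity with $D_n(q)=\sum_{\ell\ge n}C_\ell(q)$ separates the exact gain: $\mathbf u_{t+1}-\mathbf u_t=-a_q\eta_t\mathbf g_t+\eta_t[(D_q*\mathbf g)_t-(D_q*\mathbf g)_{t-1}]$.
\item Subtracting the Euler response $R^E_{j;k,i}=-a_q\eta_i\prod_{\ell=i+1}^{k-1}(1-a_q\lambda_j\eta_\ell)$ gives $\mathbf v=\mathsf B_q\mathbf g^E+\lambda_j\mathsf B_q\mathbf v$.
\item Summation by parts in the \emph{evolution} time $t$ inside $\mathsf B_q$ produces exactly $L_{j,k}$.
\item The bound $\|(D_q*\mathbf g^E)_t\|\le(1+1/\sqrt2)G_q$ holds in $\ell^2$ over injection times.
\item The feedback term is a Neumann series with ratio at most $1/512$.
\end{itemize}
Hence $\sum_i(R_{j;k,i}-R^E_{j;k,i})^2\le4G_q^2L_{j,k}^2$, and the multiplicative $\varepsilon_{ep}$ error enters only through $R^E$ versus $R^0$.

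\textbf{Further error in the signal part.} $T$ does not fix $X_{j,0}$; only $e_1^\top T=e_1^\top$ holds. In fact $TX_{j,0}=-w_j^\star(1,-1,-1)^\top$, so $P_1(x_0)TX_{j,0}\approx w_j^\star(0,1,1)^\top$ is of order one, not $O(a_qx_0/\as)$. The fast piece is negligible only because its first coordinate is read out through $\|e_1^\top P_1(x_{k-1})\|_1\le16a_qx_{k-1}/\as$. This requires normalized bookkeeping of the slow, fast and leakage parts across the varying projectors (the paper's $Q_m,A_m,F_m$).

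\textbf{Further error in the final step.} AM--GM on \eqref{eq:aggregate-adiabatic-kernel} handles the upper bound: $\theta=(1+\varepsilon_{ep})^2$ gives $289/128$ and $2$. For the lower bound, however, the $\widehat{\mathcal B}$-coefficient becomes $1+(1+\varepsilon_{ep})^2/\theta>1$ for every $\theta$. In fact \eqref{eq:aggregate-adiabatic-kernel} alone allows $\sum_iK^{\rm noise}_{k,i}s_i=0$ when $(\mathcal Ts)_k=1$ and $(\widehat{\mathcal B}s)_k=1/4$, so it cannot imply the lower sandwich. Argue with norms instead: $\|R\|_w\ge(1-\varepsilon_{ep})\sqrt{(\mathcal Ts)_k}-\sqrt{(\widehat{\mathcal B}s)_k}$, together with $(a-b)_+^2\ge a^2/2-b^2$, gives the coefficient $\tfrac12(15/16)^2=225/512$.
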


\begin{proof}
Fix a coordinate $j$ with eigenvalue \(\lambda:=\lambda_j\) and sharp subspace indicator \(q:=q_j\in\{0,1\}\). Set
\(x_m:=\eta_m\lambda\), and consider the projectors from
Lemma~\ref{lem:modal-projectors}.  The first condition in
\eqref{eq:adiabatic-hypotheses} ensures \(a_qx_m\leq\as/8\) at every
step.  Suppress \(q\) from the projector notation and define
\[
 L_m:=\|P_0(x_m)-P_0(x_{m-1})\|_\infty,\qquad m\geq1.
\]
The   estimate in \eqref{eq:projector-primitives} gives
\begin{equation}\label{eq:cumulative-projector-variation}
 \sum_{m=1}^{k-1}L_m
 \leq\frac{16a_q\lambda}{\as}V_k(\eta)
 \leq16\varepsilon_{proj}(k)\leq\frac1{64}.
\end{equation}

We considner  the matrix \(N_q(x)\) in Lemma \ref{lem:modal-projectors}, where \(q\in\{0,1\}\) is the sharp subspace indicator. To be specific, let  
\[
 T_\lambda:=
 \begin{pmatrix}
  1&0&0\\
  -1&\af/\lambda&0\\
  -1&0&\as/\lambda
 \end{pmatrix}.
\]
Then \(T_{\lambda_j}M_{j,\ell}T_{\lambda_j}^{-1}=N_q(x_{\ell,j})\) and \(T_\lambda e_1 =e_1\), where
\(x_{\ell,j}:=\eta_\ell\lambda_j\). When no ambiguity arises, we omit the subscript $j$ for brevity. We define the ordered 
matrix product $\overleftarrow{\prod_{\ell=1}^{k}}A_i
 :=A_kA_{k-1}...A_1$.


For the deterministic part $X$, since  $X_{j,0}=(-w_j^\star,0,0)^\top$, the $T$ transformed coordinates is $T_\lambda X_{j,0}=-w_j^\star h$ for $h=(1,-1,-1)^\top$. Define 
\[
 D^{s}_{j,k}:=
 -w_j^\star e_1^\top
 \overleftarrow{\prod_{m=0}^{k-1}}
 \bigl[N_q(x_m)P_0(x_m)\bigr]h,
 \qquad
 \overline U_{j,k}:=-w_j^\star
 \prod_{m=0}^{k-1}z_{0,q}(x_m),\qquad U^0_{j,k}:=-w_j^\star e^{-a_j\lambda_jT_k}.
\]

Next we bound each
 of relevant approximation errors to prove the corresponding conclusions.

\smallskip
\noindent\emph{Part 1: approximation errors related to  $D^{s}_{j,k}-U^0_{j,k}$.}

Recall that \(c_q=\bd+1+\chi q\), $d_q
=
\frac{1-\af}{\af^{\,2}}
+\chi q\frac{1-\as}{\as^{\,2}}$. Set
\[
 p_0:=P_0(x_0)h,\qquad
 p_m:=P_0(x_m)p_{m-1},\quad 1\leq m<k.
\]
Note that 
\(e_1^\top P_0'(0)h=d_q\) by \eqref{eq:projector-first-derivative} and $P_0(0)h=e_1$.  Hence
using \eqref{eq:projector-second-order-remainder},
\(d_q\leq a_q/\as\) and \(a_qx_0\leq\as/8\), we obtain
\begin{equation}\label{eq:deterministic-slow-input}
\begin{aligned}
 |e_1^\top p_0-1|=&|x_0e_1^\top P_0'(0)h
 +e_1^\top\bigl[P_0(x_0)-P_0(0)-x_0P_0'(0)\bigr]h|
 \\&\leq d_qx_0+\frac{32a_q^2x_0^2}{\as^2}\leq5\varepsilon_{proj}(k).
\end{aligned}
\end{equation}
Since  \(p_m\in\operatorname{Range}P_0(x_m)\),  
\eqref{eq:normalized-slow-right-eigenvector} gives
\(\|p_m\|_\infty=|e_1^\top p_m|\).  Moreover,
\begin{equation}\label{eq:deterministic-p-iter}
    p_m-p_{m-1}
 =[P_0(x_m)-P_0(x_{m-1})]p_{m-1},\qquad \|p_m\|_\infty \le  (1 + L_m)\|p_{m-1}\|_\infty. 
\end{equation}

Thus  
\begin{equation}\label{eq:max_pm-pk-1-p0-bound}
    \begin{aligned}
 \max_{m<k}\|p_m\|_\infty
 &\overset{\eqref{eq:deterministic-p-iter}}{\leq} \prod_{m=1}^{k-1}
(1 +L_m)\|p_0\|_\infty \overset{\eqref{eq:deterministic-slow-input}}{\leq}  e^{\sum_{m=1}^{k-1}L_m}(1+5\varepsilon_{proj}(k))\overset{\eqref{eq:cumulative-projector-variation}}{\leq}\frac{17}{16},
 \\\|p_{k-1}-p_0\|_\infty
 &\overset{\eqref{eq:deterministic-p-iter}}{\leq}\frac{17}{16}\sum_{m=1}^{k-1}L_m\overset{\eqref{eq:cumulative-projector-variation}}{\leq}17\varepsilon_{proj}(k).
    \end{aligned}
\end{equation}

Since \(N_q(x)P_0(x)=z_{0,q}(x)P_0(x)\), we have 
$
 D^{s}_{j,k}=-w_j^\star\prod_{m=0}^{k-1}z_{0,q}(x_m)e_1^\top p_{k-1}=e_1^\top p_{k-1} \overline U_{j,k}$. Then  \eqref{eq:deterministic-slow-input} and
\eqref{eq:max_pm-pk-1-p0-bound}  imply
\begin{equation}\label{eq:deterministic-all-slow}
\begin{aligned}
 |D^{s}_{j,k}-\overline U_{j,k}|
 =|\overline U_{j,k}|\,|e_1^\top p_{k-1}-1|
 \leq|\overline U_{j,k}|
 \left(\|p_{k-1}-p_0\|_\infty
       +|e_1^\top p_0-1|\right)
 \leq22\varepsilon_{proj}(k)|\overline U_{j,k}|.
\end{aligned}
\end{equation}

By \eqref{delta_x-1st_est-1},  \(z_{0,q}(x)\leq1-a_qx\leq e^{-a_qx}\). Combining it with \eqref{eq:slow-root-log-bound} gives
\[
 \left|\sum_{m=0}^{k-1}\log z_{0,q}(x_m)
 +a_q\lambda T_k\right| \leq 
 a_q\left(d_q+\frac{a_q}{2}\right)
 \lambda^2\sum_{m=0}^{k-1}\eta_m^2
 +c_{\log,q}\lambda^3\sum_{m=0}^{k-1}\eta_m^3
\leq \varepsilon_{exp}(k),
\]
where we also use  \(a_q\leq\aperp\), \(d_q\leq d_1\), and
\(c_{\log,q}\leq c_{\log,1}\).  It follows that 
\begin{equation}
   |\overline U_{j,k}-U^0_{j,k}|
 \leq \varepsilon_{exp}(k)|U^0_{j,k}| 
\end{equation}
Combining this estimate with \eqref{eq:deterministic-all-slow}  and $|\overline U_{j,k}|\leq|U^0_{j,k}|$ yields
\begin{equation}\label{eq:deterministic-slow-error}
\begin{aligned}
 |D^{s}_{j,k}-U^0_{j,k}|\leq
 \bigl(22\varepsilon_{proj}(k)+\varepsilon_{exp}(k)\bigr)|U^0_{j,k}|\leq\varepsilon_{ep}(k)|U^0_{j,k}|.
\end{aligned}
\end{equation}

\smallskip
\noindent\emph{Part 2: approximation errors related to   $e_1^\top\Phi_{j;k,0}X_{j,0}-D^{s}_{j,k}$.}

Set
\[
 y_{-1}:=h,\qquad y_m:=N_q(x_m)y_{m-1},\quad 0\leq m<k,
\]
and for $m\geq1$ consider the decomposition
\[
\begin{aligned}
 s_m &:=N_q(x_m)P_0(x_m)y_{m-1},& \,
 f_m &:=N_q(x_m)P_1(x_m)y_{m-1},\\
 s_m^0 &:=N_q(x_m)P_0(x_m)s_{m-1}^0,& \, a_m&:=s_m-s_m^0
\end{aligned}
\]
Note that \(y_m=s_m+f_m\), and \(\{s_m^0\}_m\) generates
\(D^{s}_{j,k}\).
For \(m\geq1\), we have  
\[
\begin{aligned}
 a_m&=N_q(x_m)P_0(x_m)(a_{m-1}+f_{m-1}),\\
 f_m&=N_q(x_m)P_1(x_m)
       (s_{m-1}^0+a_{m-1}+f_{m-1}).
\end{aligned}
\]

For any \(u\in\operatorname{Range}P_{0}(x_{m-1})\), $v\in \operatorname{Range}P_{1}(x_{m-1})$, we have \(P_{0}(x_{m-1})v=0\) and 
\begin{equation}
\begin{aligned}
    P_{0}(x_m)u
&=u+(P_{0}(x_m)-P_{0}(x_{m-1}))u,&\, P_{1}(x_{m})u
&=(P_{0}(x_{m-1})-P_{0}(x_m))u,
\\ P_{0}(x_{m})v&=(P_{0}(x_{m})-P_{0}(x_{m-1}))v,&\,P_{1}(x_{m})v&=(I-P_{0}(x_{m}))v. 
\end{aligned}  
\end{equation}  
Thus 
\begin{equation}\label{eq:p0mu-p1mu}
\begin{aligned}
      \|P_{0}(x_{m})u\|_\infty
&\leq(1+L_m)\|u\|_\infty,&\, \|P_{1}(x_{m})u\|_\infty
&\leq L_m\|u\|_\infty,\\
\|P_{0}(x_{m})v\|_\infty&\leq L_m\|v\|_\infty, &\, \|P_{1}(x_{m})v\|_\infty
&\leq(1+L_m)\|v\|_\infty.
\end{aligned}
\end{equation}

Set 
\[
 Q_m:=\frac{\|s_m^0\|_\infty}{\exp(-a_q\sum_{\ell=0}^m x_\ell)},\qquad
 A_m:=\frac{\|a_m\|_\infty}{\exp(-a_q\sum_{\ell=0}^m x_\ell)},\qquad
 F_m:=\frac{\|f_m\|_\infty}{\exp(-a_q\sum_{\ell=0}^m x_\ell)}.
\]
The inequality
\(a_qx_m\leq\as/8<17\as/20\) implies
\(1-\frac{17}{20}\as\leq e^{-a_qx_m}\).  Hence combining \eqref{eq:p0mu-p1mu}  and \eqref{eq:one-step-fast-contraction} yields
\begin{equation}\label{eq:deterministic-component-recurrence}
\begin{aligned}
 Q_m&\leq(1+L_m)Q_{m-1},\\
 A_m&\leq(1+L_m)A_{m-1}+L_mF_{m-1},\\
 F_m&\leq L_m(Q_{m-1}+A_{m-1})+(1+L_m)F_{m-1}.
\end{aligned}
\end{equation}

At \(m=0\), the identities
$
 s_0^0=z_{0,q}(x_0)P_0(x_0)h,\,
 f_0=N_q(x_0)P_1(x_0)h
$
together with \(z_{0,q}(x_0)\leq e^{-a_qx_0}\),
\eqref{eq:one-step-fast-contraction}, and \(1-\frac{17}{20}\as\leq e^{-a_qx_0}\) give
\[
 Q_0\leq\|P_0(x_0)h\|_\infty,
 \qquad
 F_0\leq\|P_1(x_0)h\|_\infty.
\]

Using $\|h\|_\infty=1$, \eqref{eq:projector-first-derivative} and \eqref{eq:projector-second-order-remainder}, we get 
\begin{equation}
\begin{aligned}
   \|P_0(x_0)h\|_\infty&=|e_1^\top P_0(x_0)h|
   \\&\le  |x_0e_1^\top P_0(0)h|+|x_0e_1^\top P_0'(0)h|
 +|e_1^\top(P_0(x_0)-P_0(0)-x_0P'_0(0))h|
 \\&\le 1+d_qx_0+\frac{32a_q^2x_0^2}{\as^2}
   \leq1+5\varepsilon_{proj}(k)
 \leq\frac{65}{64},      
\end{aligned}
\end{equation}
where the third inequality also uses \(d_q\leq a_q/\as\) and \(a_qx_0\leq\frac{\as}{8}\). 
On the other hand, \(P_1(0)h=(0,-1,-1)^\top\) implies 
\[
 \|P_1(x_0)h\|_\infty
 \leq1+\|P_0(x_0)-P_0(0)\|_\infty
 \leq1+\frac{16a_qx_0}{\as}
 \leq\frac{65}{64}.
\]
Consequently \(\max\{Q_0,F_0\}\leq65/64\), and hence
\(Q_m\leq17/16\).  Define \(C_m:=A_m+F_m\). Then
\eqref{eq:deterministic-component-recurrence} yields
\[
 C_m\leq(1+2L_m)C_{m-1}+L_mQ_{m-1}.
\]
Using \(S_{0,k}\leq1/64\) and iterating gives
\[
 \max_{m<k}C_m
 \leq e^{2S_{0,k}}
 \left(\frac{65}{64}+\frac{17}{16}S_{0,k}\right)
 \leq\frac76.
\]
For \(k\geq2\), iteration of the second line of
\eqref{eq:deterministic-component-recurrence} gives
\begin{equation}\label{eq:Ak-bound}
   A_{k-1}\leq e^{S_{0,k}}\max_{m<k}C_m\leq e^{S_{0,k}}\frac76S_{0,k}
 \leq19\varepsilon_{proj}(k). 
\end{equation}
 
It also holds for $k=1$ as $A_0=0$. 
Finally, \(e_1^\top P_1(0)=0\), so
\begin{equation}\label{eq:1-norm-ep1k}
     \|e_1^\top P_1(x_{k-1})\|_1
 \leq\|P_0(x_{k-1})-P_0(0)\|_\infty
 \leq\frac{16a_qx_{k-1}}{\as}.
\end{equation}

Note that 
\[
e_1^\top\Phi_{j;k,0}X_{j,0}-D^{s}_{j,k}
=-w_j^\star e_1^\top( y_{k-1}-  s_{k-1}^0)=-w_j^\star e_1^\top(a_{k-1}+f_{k-1}),
\]
and  
\begin{equation}
    \begin{aligned}
      |e_1^\top a_{k-1}|
&\leq\|a_{k-1}\|_\infty,
\\|e_1^\top f_{k-1}|
&=|e_1^\top P_1(x_{k-1})f_{k-1}|
\leq
\|e_1^\top P_1(x_{k-1})\|_1
\|f_{k-1}\|_\infty\overset{\eqref{eq:1-norm-ep1k}}{\leq}
\frac{16a_qx_{k-1}}{\as}
\|f_{k-1}\|_\infty.
    \end{aligned}
\end{equation} 
 
Thus 
\[
\begin{aligned}
 |e_1^\top\Phi_{j;k,0}X_{j,0}-D^{s}_{j,k}|
 &\leq |w_j^\star|e^{\sum_{i=0}^{k-1} x_i}
 \left(A_{k-1}+\frac{16a_qx_{k-1}}{\as}F_{k-1}\right)\\
 &\overset{\eqref{eq:Ak-bound}}{\leq}38\varepsilon_{proj}(k)|U^0_{j,k}|
 \leq\varepsilon_{ep}(k)|U^0_{j,k}|,
\end{aligned}
\]
where the second inequality also uses \(a_qx_{k-1}/\as^2\leq \varepsilon_{proj}(k)\). 
Combining this estimate with \eqref{eq:deterministic-slow-error} proves
\begin{equation}\label{eq:deterministic-total-error}
 |e_1^\top\Phi_{j;k,0}X_{j,0}-U^0_{j,k}|
 \leq2\varepsilon_{ep}(k)|U^0_{j,k}|.
\end{equation}
Thus $
 \left|
|e_1^\top\Phi_{j;k,0}X_{j,0}|^2-|U^0_{j,k}|^2
 \right|
 \leq4\varepsilon_{ep}(k)
 \bigl(1+\varepsilon_{ep}(k)\bigr)|U^0_{j,k}|^2$. 
Since $\sum_{j=1}^d\lambda_j|U^0_{j,k}|^2=S_A(T_k)$, 
  and  
\(\varepsilon_{ep}(k)\leq1/16\), we have  
\[
 \left|e_k^{\rm signal}-\frac12S_A(T_k)\right|
 \leq2\varepsilon_{ep}(k)
 \bigl(1+\varepsilon_{ep}(k)\bigr)S_A(T_k)
 \leq4\varepsilon_{ep}(k)S_A(T_k).
\]
This proves \eqref{eq:aggregate-adiabatic-signal}.

\smallskip
\noindent\emph{Part 3: approximation errors related to the noise decay.}
Here we focus on a fixed coordinate (i.e., fix $\lambda,q$), and omit the subscript for notational brevity.  
For \(0\leq i\leq k-1\), define
\begin{equation}
\begin{aligned}
    &\mathbf 0_k:=(0,\ldots,0)^\top\in \mathbb R^k,\qquad  \mathbf e_i^{(k)}
 :
 =\bigl(
\underbrace{0,\ldots,0}_{i\text{ entries}},
   1,
   \underbrace{0,\ldots,0}_{k-i-1\text{ entries}}
  \bigr)^\top\in \mathbb R^k, 
  \\&\mathbf u_0=\mathbf 0_k,
 \qquad
 \mathbf m_{-1}^{\mathrm f}
 =\mathbf m_{-1}^{\mathrm s}=\mathbf 0_k,
 \qquad
 \mathbf g_i=\lambda\mathbf u_i+\mathbf e_i^{(k)}
  \end{aligned}
\end{equation}  
If \([\mathbf x]_i\) denotes coordinate \(i\) of
\(\mathbf x\in \mathbb R^k\), then the preceding definition implies 
$
 [\mathbf g_t]_i
 =\lambda[\mathbf u_t]_i+\mathbb I_{\{i=t\}}$ for  $0\leq i,t\leq k-1$. We further let 
\begin{equation}\label{eq:scalar-modal-coordinate-recurrences}
\begin{aligned}
 \mathbf m_t^{\mathrm f}
 &=\rf\mathbf m_{t-1}^{\mathrm f}+\mathbf g_t,\\
 \mathbf m_t^{\mathrm s}
 &=\rs\mathbf m_{t-1}^{\mathrm s}+\mathbf g_t,\\
 \mathbf u_{t+1}-\mathbf u_t
 &=-\eta_t\bigl(\bd\mathbf g_t+\mathbf m_t^{\mathrm f}
                  +\chi q\mathbf m_t^{\mathrm s}\bigr),
 \qquad 0\leq t\leq k-1.
\end{aligned}
\end{equation}
Then 
\begin{equation}\label{eq:exact-response-row-vector}
 \mathbf u_t
 =\bigl(
   R_{j;t,0},\ldots,R_{j;t,t-1},
   \underbrace{0,\ldots,0}_{k-t\text{ entries}}
  \bigr)^\top
 =\sum_{i=0}^{t-1}R_{j;t,i}\mathbf e_i^{(k)},
 \qquad 0\leq t\leq k,
\end{equation}
and 
\[
 \mathbf g_t
 =\bigl(
   \lambda R_{j;t,0},\ldots,\lambda R_{j;t,t-1},1,
   \underbrace{0,\ldots,0}_{k-t-1\text{ entries}}
  \bigr)^\top,\qquad 0\leq t\leq k-1.
\]
We also define \(\mathbf g_r=\mathbf0_k\) for \(r<0\).  Since  $
 \mathbf m_t^{\mathrm f}
 =\sum_{\ell=0}^{t}\rf^\ell\mathbf g_{t-\ell},\, 
 \mathbf m_t^{\mathrm s}
  =\sum_{\ell=0}^{t}\rs^\ell\mathbf g_{t-\ell}$, we get that for $0\leq t\leq k-1$,
\begin{equation}\label{eq:scalar-modal-convolution}
\begin{aligned}
 \mathbf u_{t+1}-\mathbf u_t
 =-\eta_t\left(
   \bd\mathbf g_t+\sum_{\ell=0}^{t}\rf^\ell\mathbf g_{t-\ell}
   +\chi q\sum_{\ell=0}^{t}\rs^\ell\mathbf g_{t-\ell}
   \right)
 =-\eta_t\sum_{\ell=0}^{t}C_\ell(q)\mathbf g_{t-\ell},
\end{aligned}
\end{equation}
where $
 C_\ell(q):=\bd\mathbb I_{\{\ell=0\}}
       +\rf^\ell+\chi q\rs^\ell$ for every integer \(\ell\geq0\).

where the series is absolutely convergent because
\(0\leq\rf,\rs<1\).  For a  \(\mathbb R_k\)-valued sequence
\(\mathbf b=(\mathbf b_t)_{t\geq0}\), define the convolution with
\(D_n(q):=\sum_{\ell=n}^{\infty}C_\ell(q)
 =\frac{\rf^n}{\af}+\chi q\frac{\rs^n}{\as}\) ($n\ge1$) by
\begin{equation}\label{eq:Dq-causal-convolution}
 (D_q*\mathbf b)_t:=
 \begin{cases}
  \displaystyle\sum_{n=1}^{t+1}D_n(q)\mathbf b_{t-n+1},&t\geq0,\\[2mm]
  0,&t<0.
 \end{cases}
\end{equation}
It follows that for every \(t\geq0\),
\begin{equation}\label{eq:finite-causal-abel-identity}
\begin{aligned}
 (D_q*\mathbf b)_t-(D_q*\mathbf b)_{t-1}
 =D_1(q)\mathbf b_t
   +\sum_{\ell=1}^{t}
     \bigl[D_{\ell+1}(q)-D_\ell(q)\bigr]\mathbf b_{t-\ell}
 =a_q\mathbf b_t
   -\sum_{\ell=0}^{t}C_\ell(q)\mathbf b_{t-\ell}.
\end{aligned}
\end{equation}
Applying \eqref{eq:finite-causal-abel-identity} with
\(\mathbf b=\mathbf g\) turns
\eqref{eq:scalar-modal-convolution} into
\begin{equation}\label{eq:scalar-abel-recurrence}
 \mathbf u_{t+1}-\mathbf u_t
 =-a_q\eta_t\mathbf g_t
 +\eta_t\bigl[(D_q*\mathbf g)_t-(D_q*\mathbf g)_{t-1}\bigr],
 \qquad 0\leq t\leq k-1.
\end{equation}
Thus the effective gain \(a_q\) is separated exactly from a discrete
boundary difference; no spectral approximation has been made.

Define the sequence $\mathbf u_0^{E}=0$,  
\begin{equation}\label{eq:euler-modal-response}
 \mathbf u_{t+1}^{E}
 =(1-a_q\lambda\eta_t)\mathbf u_t^{E}
 -a_q\eta_t\mathbf e_t^{(k)},
 \qquad 0\leq t\leq k-1.
\end{equation}
It satisfies 
\[
 \mathbf u_t^{E}
 =-a_q\sum_{i=0}^{t-1}\eta_i
   \prod_{\ell=i+1}^{t-1}(1-a_q\lambda\eta_\ell)
   \mathbf e_i^{(k)},
 \qquad 0\leq t\leq k.
\]
Define
\[
\begin{aligned}
 \mathbf g_t^{E}&:=
 \lambda\mathbf u_t^{E}+\mathbf e_t^{(k)},
 &&0\leq t\leq k-1,\\
 \mathbf v_t&:=\mathbf u_t-\mathbf u_t^{E},
 &&0\leq t\leq k,
\end{aligned}
\]
and set \(\mathbf g_k^{E}:=0\).  Let
\(\mathsf X_k:=(\mathbb R_k)^{\{0,\ldots,k\}}\).  Define the map 
\(\mathsf B_q:\mathsf X_k\to\mathsf X_k\) as $(\mathsf B_q\mathbf b)_0:=0$, 
\begin{equation}\label{eq:boundary-solution-operator}
\begin{aligned}
 (\mathsf B_q\mathbf b)_{t+1}
 :=(1-a_q\lambda\eta_t)(\mathsf B_q\mathbf b)_t +\eta_t\bigl[(D_q*\mathbf b)_t
                    -(D_q*\mathbf b)_{t-1}\bigr],
 \qquad 0\leq t\leq k-1.
\end{aligned}
\end{equation}
Since
\(\mathbf g_t-\mathbf g_t^{E}=\lambda\mathbf v_t\) for
\(0\leq t\leq k-1\), subtracting
\eqref{eq:euler-modal-response} from
\eqref{eq:scalar-abel-recurrence} gives that for \(0\leq t\leq k-1\),
\[
 \mathbf v_{t+1}
 =(1-a_q\lambda\eta_t)\mathbf v_t
 +\eta_t\bigl[(D_q*\mathbf g^{E})_t
                    -(D_q*\mathbf g^{E})_{t-1}\bigr]+\lambda\eta_t\bigl[(D_q*\mathbf v)_t
                    -(D_q*\mathbf v)_{t-1}\bigr].
\]
Thus $ \mathbf v\in \mathsf X_k$ satisfies 
\begin{equation}\label{eq:boundary-feedback-equation}
 \mathbf v=\mathsf B_q\mathbf g^{E}
 +\lambda\mathsf B_q\mathbf v.
\end{equation}
Thus we consider the decomposition $\mathbf v_t=\sum_{r=1}^{t}\mathbf v_t^{(r)}
 $, where $\mathbf v^{(r+1)}:=\lambda\mathsf B_q\mathbf v^{(r)}$ ($1\leq r<k$) and $\mathbf v^{(1)}:=\mathsf B_q\mathbf g^{E}$. We set $\mathbf v_0=0$.


We first bound \(\mathsf B_q\) for a general input.  Set
$
 \gamma_{m,t}:=\eta_t
 \prod_{\ell=t+1}^{m-1}(1-a_q\lambda\eta_\ell)$ for $ 0\leq t<m\leq k$.
Then \eqref{eq:boundary-solution-operator} gives
\[
 (\mathsf B_q\mathbf b)_m
 =\sum_{t=0}^{m-1}\gamma_{m,t}
 \bigl[(D_q*\mathbf b)_t-(D_q*\mathbf b)_{t-1}\bigr].
\]
Since \((D_q*\mathbf b)_{-1}=0\), summation by parts yields
\begin{equation}\label{eq:boundary-summation-by-parts}
 (\mathsf B_q\mathbf b)_m
 =\eta_{m-1}(D_q*\mathbf b)_{m-1}
 +\sum_{t=0}^{m-2}
   (\gamma_{m,t}-\gamma_{m,t+1})(D_q*\mathbf b)_t.
\end{equation}
Then by $\sum_{n=1}^{\infty}D_n(q)=d_q$,  we  get
\begin{equation}\label{eq:generic-boundary-operator-bound-1}
\begin{aligned}
 \|(\mathsf B_q\mathbf b)_m\|\le (\eta_{m-1}
 +\sum_{t=0}^{m-2}|\gamma_{m,t}-\gamma_{m,t+1}|)d_q\sup_{0\leq t\leq m-1}
       \|\mathbf b_t\|.
       \end{aligned}
\end{equation}
Substituting
\begin{equation}\label{eq:sum-m-delta-gamma-bound}
    \begin{aligned}
 \sum_{t=0}^{m-2}|\gamma_{m,t+1}-\gamma_{m,t}|
 =&\sum_{t=0}^{m-2}\left|\left(\prod_{\ell=t+2}^{m-1}
       (1-a_q\lambda\eta_\ell)\right)
  \left(\eta_{t+1}-\eta_t\right)+a_q\lambda\eta_t\eta_{t+1}\prod_{\ell=t+2}^{m-1}
       (1-a_q\lambda\eta_\ell)\right|
  \\\le & V_k(\eta)+\eta_{\max,k}
 \sum_{t=0}^{m-2}(\prod_{\ell=t+2}^{m-1}(1-a_q\lambda\eta_\ell)-\prod_{\ell=t+1}^{m-1}(1-a_q\lambda\eta_\ell))
 \\\le & V_k(\eta)+\eta_{\max,k}.
\end{aligned}
\end{equation}
into \eqref{eq:generic-boundary-operator-bound-1} yields
\begin{equation}\label{eq:generic-boundary-operator-bound}
\begin{aligned}
\sup_{0\leq m\leq k}
 \|(\mathsf B_q\mathbf b)_m\|
\leq2\bigl(\eta_{\max,k}+V_k(\eta)\bigr)d_q
       \sup_{0\leq t\leq k-1}
       \|\mathbf b_t\|.
       \end{aligned}
\end{equation}

 Set $y_i:=a_q\lambda\eta_i$ and note that 
\[
G_q^2=\sum_{n=1}^{\infty}D_n(q)^2=
 \frac{\rf^2}{\af^2(1-\rf^2)}
 +\frac{2\chi q\rf\rs}{\af\as(1-\rf\rs)}
 +\frac{\chi^2q^2\rs^2}{\as^2(1-\rs^2)}
 \leq G_1^2.
\]
Direct computation 
gives
\[
 \sum_{n=1}^{\infty}\frac{\rf^n}{\af}
 \leq\sqrt{\frac2{\as}}
 \left(\sum_{n=1}^{\infty}\frac{\rf^{2n}}{\af^2}\right)^{1/2},
 \qquad
 \sum_{n=1}^{\infty}\frac{\chi q\rs^n}{\as}
 \leq\sqrt{\frac2{\as}}
 \left(\sum_{n=1}^{\infty}
       \frac{\chi^2q^2\rs^{2n}}{\as^2}\right)^{1/2},
\]
implying that $d_q=\sum_{n=1}^{\infty}D_n(q)\leq\frac{2G_q}{\sqrt{\as}}$. By \eqref{eq:euler-modal-response}, we have that for
\(1\leq t\leq k\),
\[
 \lambda\mathbf u_t^{E}
 =-\sum_{i=0}^{t-1}y_i
   \prod_{\ell=i+1}^{t-1}(1-y_\ell)\mathbf e_i^{(k)}.
\]
Then   \(0\leq y_i\leq\as/8\) gives that for $1\le t\le k$
\begin{equation}\label{eq:euler-residual-energy}
\begin{aligned}
 \|\lambda\mathbf u_t^{E}\|
 &=\sum_{i=0}^{t-1}y_i^2
   \prod_{\ell=i+1}^{t-1}(1-y_\ell)^2\leq
 \left(\max_{0\leq i\leq t-1}y_i\right)
 \sum_{i=0}^{t-1}\bigl[1-(1-y_i)^2\bigr]
   \prod_{\ell=i+1}^{t-1}(1-y_\ell)^2\\
 &=\left(\max_{0\leq i\leq t-1}y_i\right)
   \left[1-\prod_{\ell=0}^{t-1}(1-y_\ell)^2\right]\leq\frac{\as}{8}.
\end{aligned}
\end{equation}
For \(0\leq t\leq k-1\),
\[
 (D_q*\mathbf g^{E})_t
 =\sum_{n=1}^{t+1}D_n(q)\mathbf e_{t-n+1}^{(k)}
  +\sum_{n=1}^{t+1}D_n(q)
       \lambda\mathbf u_{t-n+1}^{E}.
\]
 Hence for any $0\le t\le k-1$,
\begin{equation}\label{eq:filtered-euler-energy}
\begin{aligned}
 \|(D_q*\mathbf g^{E})_t\| 
 &\leq
 \left\|\sum_{n=1}^{t+1}D_n(q)
              \mathbf e_{t-n+1}^{(k)}\right\| 
 +\sum_{n=1}^{t+1}D_n(q)
   \|\lambda\mathbf u_{t-n+1}^{E}\| \\
 &=\left(\sum_{n=1}^{t+1}D_n(q)^2\right)^{1/2}
 +\sum_{n=1}^{t+1}D_n(q)
   \|\lambda\mathbf u_{t-n+1}^{E}\| \\
 &\leq G_q+d_q
   \sup_{0\leq s\leq t}
   \|\lambda\mathbf u_s^{E}\|  \\ 
 &\overset{\eqref{eq:euler-residual-energy}}{\leq}\left(1+\frac1{\sqrt2}\right)G_q.
\end{aligned}
\end{equation}
Since \((\mathbf v^{(1)})_m=(\mathsf B_q\mathbf g^{E})_m=\eta_{m-1}(D_q*\mathbf g^{E})_{m-1}
+\sum_{t=0}^{m-2}
(\gamma_{m,t}-\gamma_{m,t+1})(D_q*\mathbf g^{E})_t,\)  combining 
\eqref{eq:boundary-summation-by-parts},
\eqref{eq:sum-m-delta-gamma-bound}, and
\eqref{eq:filtered-euler-energy} yields 
\begin{equation}\label{eq:first-boundary-layer}
 \|\mathbf v_m^{(1)}\| \le  
\left(
\eta_{m-1}
+\sum_{t=0}^{m-2}
|\gamma_{m,t}-\gamma_{m,t+1}|
\right)
\sup_{0\leq t<m}\|(D_q*\mathbf g^{E})_t\| 
 \leq(2+\sqrt2)
 \bigl(\eta_{\max,k}+V_k(\eta)\bigr)G_q.
\end{equation}

Set $\theta_k:=2\lambda
\bigl(\eta_{\max,k}+V_k(\eta)\bigr)d_q$. Then $\theta_k\leq2\as \varepsilon_{proj}(k)
 \leq2\varepsilon_{proj}(k)\leq\frac1{512}$ by \(d_q\leq a_q/\as\) and  
\(a_q\lambda\leq\aperp\lambda_1\).  \eqref{eq:generic-boundary-operator-bound} gives that 
for \(1\leq r<k\),
\[
 \sup_{0\leq m\leq k}
 \|\mathbf v_m^{(r+1)}\| 
 \leq\theta_k\sup_{0\leq m\leq k}
 \|\mathbf v_m^{(r)}\|
 .
\]

Combining this with \eqref{eq:first-boundary-layer} yields 
\[
\begin{aligned}
\sup_{0\leq m\leq k} \|\mathbf v_m\| 
 &\leq\sum_{r=1}^{k}\sup_{0\leq m\leq k}\|\mathbf v_m^{(r)}\|  \leq(2+\sqrt2)
 \bigl(\eta_{\max,k}+V_k(\eta)\bigr)G_q
 \sum_{r=1}^{k}\theta_k^{r-1}\\
 &\leq\frac{2+\sqrt2}{1-\theta_k}
 \bigl(\eta_{\max,k}+V_k(\eta)\bigr)G_q.
\end{aligned}
\]
Substituting it  and \eqref{eq:generic-boundary-operator-bound-1} into \eqref{eq:boundary-feedback-equation} yields a finer estimate:
\begin{align*}
 \|\mathbf v_k\|
 &\leq \left((1+\frac{1}{\sqrt{2}})G_{q} 
 +\lambda d_{q} 
   \sup_{0\leq m\leq k}\|\mathbf v_m\|\right)(\eta_{k-1}
 +\sum_{t=0}^{k-2}|\gamma_{k,t}-\gamma_{k,t+1}|)\\
 &\leq 
 2G_{q}(\eta_{k-1}
 +\sum_{t=0}^{k-2}|\gamma_{k,t}-\gamma_{k,t+1}|).
\end{align*}

Define $R^0_{j;k,i}:=-\eta_i a_q
 e^{-a_q\lambda(T_k-T_i)}$ and $
 R^{E}_{j;k,i}
 :=-a_q\eta_i
   \prod_{\ell=i+1}^{k-1}(1-a_q\lambda\eta_\ell)$ 
for $ 0\leq i\leq k-1$. Note that \eqref{eq:exact-response-row-vector}  gives
\[
 \mathbf v_k
 =\sum_{i=0}^{k-1}
   \bigl(R_{j;k,i}-R^{E}_{j;k,i}\bigr)
   \mathbf e_i^{(k)}.
\]
 Then 
\begin{equation}\label{eq:R-RE-bound}
    \sum_{i=0}^{k-1}(R_{j;k,i}-R^{E}_{j;k,i})^2=\|\mathbf v_k\|^2
 \le 4(\eta_{k-1}
 +\sum_{t=0}^{k-2}|\gamma_{k,t}-\gamma_{k,t+1}|)^2G_1^2. 
\end{equation}

Next we bound $|R^{E}_{j;k,i}-R^0_{j;k,i}|$.  Note that 
\[
 y_i\leq a_q\lambda\eta_{\max,k}
 \leq\as^2\varepsilon_{proj}(k)\leq \varepsilon_{proj}(k),
\]
and $\sum_{\ell=i+1}^{k-1}y_\ell^2\leq2\varepsilon_{exp}(k)$ as 
\(a_q^2\lambda^2\leq\aperp^2\lambda_1^2\leq2\aperp(d_1+\aperp/2)\lambda_1^2\). Combining them with $0\leq-\log(1-y)-y 
 \leq\frac47y^2$ for \(0\leq y\leq1/8\), 
we get 
\[
 \log\frac{|R^{E}_{j;k,i}|}{|R^0_{j;k,i}|}
 =y_i+\sum_{\ell=i+1}^{k-1}
       \bigl[\log(1-y_\ell)+y_\ell\bigr]
 \leq \varepsilon_{proj}(k)+\frac87\varepsilon_{exp}(k)
 \leq\frac17\varepsilon_{ep}(k).
\]
Using \(|e^x-1|\leq2|x|\) for \(|x|\leq1/2\), we obtain
\begin{equation}\label{eq:euler-exponential-response}
 |R^{E}_{j;k,i}-R^0_{j;k,i}|
 \leq\varepsilon_{ep}(k)|R^0_{j;k,i}|.
\end{equation}

Taking summation of \eqref{eq:R-RE-bound}, \eqref{eq:euler-exponential-response} over the coordinate $j$  gives 
\begin{equation}\label{eq:response-error-E-norms}
 \sum_{i=0}^{k-1}\sum_{j=1}^d\lambda_j^2
 (R_{j;k,i}-R^{E}_{j;k,i})^2 s_i
\leq\widehat\delta(k)
       \max_{0\leq i<k}s_i
 =(\widehat{\mathcal B} s)_k,   
\end{equation}
and 
\begin{equation}\label{eq:response-error-norms}
\begin{aligned}
 \sum_{i=0}^{k-1}\sum_{j=1}^d\lambda_j^2
 (R^{E}_{j;k,i}-R^0_{j;k,i})^2s_i
 &\leq\varepsilon_{ep}(k)^2(\mathcal Ts)_k,
\end{aligned}
\end{equation}
respectively.  
Note that
\begin{equation}\label{eq:weighted-r0-kaugs}
     \sum_{i=0}^{k-1}\sum_{j=1}^d\lambda_j^2
 (R^0_{j;k,i})^2s_i=(\mathcal Ts)_k,\qquad \sum_{i=0}^{k-1}K_{k,i}^{\rm noise}s_i=\sum_{i=0}^{k-1}\sum_{j=1}^d\lambda_j^2
 (R_{j;k,i})^2s_i.
\end{equation}

Thus
\begin{equation}
    \begin{aligned}
 \left|(\mathcal Ts)_k-       \sum_{i=0}^{k-1}K_{k,i}^{\rm noise}s_i\right|^2\le& \sum_{i=0}^{k-1}\sum_{j=1}^d\lambda_j^2
 (R_{j;k,i}+R^0_{j;k,i})|R_{j;k,i}-R^0_{j;k,i}|s_i
 \\\le&\sqrt{\sum_{i=0}^{k-1}\sum_{j=1}^d\lambda_j^2
(R_{j;k,i}+R^0_{j;k,i})^2s_i}\sqrt{\sum_{i=0}^{k-1}\sum_{j=1}^d\lambda_j^2
(R_{j;k,i}-R^0_{j;k,i})^2s_i}
\\
 \leq&\bigl((2+\varepsilon_{ep}(k))\sqrt {(\mathcal Ts)_k}+\sqrt{(\widehat{\mathcal B} s)_k}\bigr)
          \bigl(\varepsilon_{ep}(k)\sqrt {(\mathcal Ts)_k}+\sqrt{  (\widehat{\mathcal B} s)_k}\bigr)\\
 =&\bigl(2\varepsilon_{ep}(k)+\varepsilon_{ep}(k)^2\bigr) (\mathcal Ts)_k
   +2(1+\varepsilon_{ep}(k))\sqrt{(\mathcal Ts)_k(\widehat{\mathcal B} s)_k}+(\widehat{\mathcal B} s)_k,
    \end{aligned}
\end{equation}
where the second inequality uses Cauchy--Schwarz Inequality, the third inequality uses 
\begin{equation}\label{eq:sum-r-r0-weighted}
\begin{aligned}
 \sqrt{\sum_{i=0}^{k-1}\sum_{j=1}^d\lambda_j^2
(R_{j;k,i}-R^0_{j;k,i})^2s_i}
 \leq&\sqrt{\sum_{i=0}^{k-1}\sum_{j=1}^d\lambda_j^2
(R_{j;k,i}-R^{\mathrm{E}}_{j;k,i})^2s_i}+\sqrt{\sum_{i=0}^{k-1}\sum_{j=1}^d\lambda_j^2
(R^{\mathrm{E}}_{j;k,i}-R^0_{j;k,i})^2s_i}
 \\\overset{\eqref{eq:response-error-E-norms},\eqref{eq:response-error-norms}}{\leq}&\sqrt{(\widehat{\mathcal B} s)_k}+\varepsilon_{ep}(k)\sqrt{(\mathcal Ts)_k},
\end{aligned}
\end{equation}
and
\begin{equation}
\begin{aligned}
 \sqrt{\sum_{i=0}^{k-1}\sum_{j=1}^d\lambda_j^2
(R_{j;k,i}+R^0_{j;k,i})^2s_i}
 \leq&\sqrt{\sum_{i=0}^{k-1}\sum_{j=1}^d\lambda_j^2
(R_{j;k,i}-R^{0}_{j;k,i})^2s_i}+2\sqrt{\sum_{i=0}^{k-1}\sum_{j=1}^d\lambda_j^2
(R^0_{j;k,i})^2s_i}
 \\\overset{\eqref{eq:sum-r-r0-weighted},\eqref{eq:weighted-r0-kaugs}}{\leq}&\sqrt{(\widehat{\mathcal B} s)_k}+(2+\varepsilon_{ep}(k))\sqrt{(\mathcal Ts)_k}.
\end{aligned}
\end{equation}

This proves \eqref{eq:aggregate-adiabatic-kernel} as \(\varepsilon_{ep}(k)\leq1/16\).

\end{proof}

We define  the following notations for proving Theorem~\ref{thm:discrete-fsl}: 
\begin{equation}\label{eq:epsilon-fast}
\begin{aligned}
&\FMS(k):=\SMS(k)+\sigma^2\sum_{i=0}^{k-1}\eta_i^2\KMS(T_k-T_i),
\\&c_{sk,-}:=\min\{1,c_{S,-},c_{K,-}\},
 \qquad
 c_{sk,+}:=\max\{1,c_{S,+},c_{K,+}\},\\
 &\widehat\delta^{\rm max}(k)
 :=\max_{1\leq n<k}\widehat\delta(n),
 \qquad
 K_{\eta}^{\rm max}(k)
 :=\max_{1\leq n<k}\sum_{i=0}^{n-1}
 \eta_i^2K_A(T_n-T_i), 
 \qquad
\widehat\varepsilon(k):=
\frac{\sigma^2\widehat\delta(k)}{\FMS(k)},
\end{aligned}
\end{equation}
where we assume $\E\cE(w_0)\le C_0\sigma^2$ with $C_0$ treated as a constant. At \(k=1\), both maxima over \(1\leq n<k\) in
\eqref{eq:epsilon-fast} are defined to be zero, and \(K_{\eta}^{\rm max}(1):=0\).

\begin{assumption}[Admissible hyper-parameters]
    Suppose that the horizon \(k\) and the learning rate  schedule $\{\eta_i\}$
satisfy
\begin{equation}\label{eq:discrete-regime}
 \varepsilon_{ep}(k)\leq\frac1{16},
 \qquad
  K_{\eta}^{\rm max}(k)\le\frac{1}{8},
 \qquad
 \widehat\delta^{\rm max}(k)\leq\frac{1}{25},
 \qquad
 \widehat\varepsilon(k)
 \leq\frac{c_{sk,-}}{16(6C_0+1)},\qquad \aperp\lambda_1\eta_{\max,k}\leq\frac{\as}{8}.
\end{equation}

\end{assumption}

\begin{remark}[Simplified sufficient conditions for admissible hyperparameters]
  Here we provide a cleaner version of the hyperparameter constraints in \eqref{eq:discrete-regime}. The constraints on $\varepsilon_{ep}(k)$ and $\widehat\delta^{\rm max}(k)$ reduce to upper bounds on $\eta_{\max,k}$, $\sum_{i=1}^{k-2}|\eta_{i+1}-\eta_i|$, and \(\max_{i<k-1}|\eta_{i+1}/\eta_i-1|\). By \eqref{eq:fixed-prefix-row-bound}, a sufficient condition for
  \(K_{\eta}^{\rm max}(k)\le \frac{1}{8}\) is $
    \eta_{\max,k}\le \frac{1}{4}\left(a_0\sum_{j=1}^r j^{-\nu}+a_1\sum_{j=r+1}^d j^{-\nu}\right)^{-1}$, which  also becomes an upper bound constraint to $\eta_{\max,k}$.

  We next turn to the upper bound constraint on $\widehat\varepsilon(k)$. Deriving a simplified sufficient condition for general learning rate schedules is challenging. However, for some common choices such as constant and power-law decay, it can be satisfied. Excluding the very early stage, we consider the regime $T_k\gtrsim \frac{r^\nu}{\atpg}$ so that
  $\KMS(T_k-s)\lesssim\KMS(T_k-T_i)$ for $s\in[T_i,T_{i+1}]$. Hence,
  \begin{align}
    \int_0^{T_k}\KMS(z)\,dz
    =\sum_{i=0}^{k-1}\int_{T_i}^{T_{i+1}}
      \KMS(T_k-s)\,ds 
    \lesssim\sum_{i=0}^{k-1}\eta_i\KMS(T_k-T_i)
    \leq\frac{1}{\eta_{k-1}}
    \sum_{i=0}^{k-1}\eta_i^2\KMS(T_k-T_i).
  \end{align}
  This yields $
    \sum_{i=0}^{k-1}\eta_i^2\KMS(T_k-T_i)
    \gtrsim\eta_{k-1}$. 
  Now consider \(\eta_i=\eta_0(1+i/\tau_0)^{-\gamma}\), where \(\tau_0\in\mathbb{R}_+\cup\{+\infty\}\) and \(1/2<\gamma\le 1\). The inequality
    $\frac{\eta_t-\eta_{t+1}}{\eta_{t+1}}
    =\left(1+\frac{1}{\tau_0+t}\right)^\gamma-1
    \leq\frac{\gamma}{\tau_0+t}$
  gives
  \begin{align}
    \left[
      \sum_{t=0}^{k-2}(\eta_t-\eta_{t+1})
      \sqrt{\KMS(T_k-T_{t+2})}
    \right]^2 \lesssim
    \sum_{t=0}^{k-2}
    \left(\frac{\eta_t-\eta_{t+1}}{\eta_{t+1}}\right)^2
    \sum_{t=0}^{k-2}\eta_{t+1}^2
    \KMS(T_k-T_{t+1}) \leq
    \frac{2\gamma^2}{\tau_0}
    \sum_{i=0}^{k-1}\eta_i^2\KMS(T_k-T_i).
  \end{align}
  Combining this with
  \[
    \frac{\eta_{k-1}^2\KMS(0)}
    {\sum_{i=0}^{k-1}\eta_i^2\KMS(T_k-T_i)}
    \lesssim\eta_{k-1},
  \]
  we obtain
  \[
    \frac{\left[
      \eta_{k-1}\sqrt{\KMS(0)}
      +\sum_{t=0}^{k-2}(\eta_t-\eta_{t+1})
        \sqrt{\KMS(T_k-T_{t+2})}
    \right]^2}
    {\displaystyle\sum_{i=0}^{k-1}
      \eta_i^2\KMS(T_k-T_i)}
    \lesssim \eta_{k-1}+\frac{\gamma^2}{\tau_0}.
  \]
  Note that $
    V_k(\eta)\leq\eta_0,\,
    \delta_{\rm step}(k)\leq\frac{\gamma}{\alpha_{\text{slow}}\tau_0}$. 
  By Lemma~\ref{lem:admissibility-decreasing-schedule}, for sufficiently large \(d,\tau_0\) and sufficiently small \(\eta_0\), the admissibility condition \eqref{eq:discrete-regime} holds.
\end{remark}

\begin{proof}[Proof of Theorem~\ref{thm:discrete-fsl}]

Suppose the conditions in \eqref{eq:discrete-regime} hold. We structure the proof as follows.
 
\smallskip
\noindent\emph{Step 1: the uniform upper bound for $\cE$.} 

For \(0\leq i\leq k-1\), we define the sequence 
$\cE_i:=\E\cE(w_i)$, $
 s_i^-:=\cE_i+\frac{\sigma^2}{2},\,
 s_i^+:=2\cE_i+\frac{\sigma^2}{2}$, and we denote by \(\mathbf 1\)  the sequence consisting entirely of ones. 
Since \(K_A\) is decreasing, 
\begin{equation}\label{eq:fixed-T-infinity-bound}
\begin{aligned}
 (\mathcal T\mathbf1)_n
 &=\sum_{i=0}^{n-1}\eta_i^2K_A(T_n-T_i)\leq\eta_{\max,k}\sum_{i=0}^{n-1}
       \int_{T_i}^{T_{i+1}}K_A(T_n-s)\dd s\\
&=\eta_{\max,k}\int_0^{T_n}K_A(u)\dd u
 \overset{\eqref{eq:kernel-mass}}{\le}\frac{\eta_{\max,k}}2\tr(AH).
\end{aligned}
\end{equation}
Taking the maximum over \(0\leq n<k\) gives
\begin{equation}\label{eq:fixed-prefix-row-bound}
\max_{0\leq n<k} (\mathcal T\mathbf1)_n\le K_{\eta}^{\rm max}(k)
 \leq\frac{\eta_{\max,k}}2\tr(AH).
\end{equation}

The  upper bounds in
\eqref{eq:discrete-volterra-sandwich} and  
\eqref{eq:terminal-abel-kernel-sandwich} give that for $1\leq n\leq k$, 
\begin{equation}\label{eq:fixed-prefix-kernel-chain}
\begin{aligned}
 \cE_n
  \leq& e_n^{\rm signal}
   +\sum_{i=0}^{n-1}K_{n,i}^{\rm noise}s_i^{+} \leq e_n^{\rm signal}
   +\frac{289}{128}(\mathcal T s^{+})_n +2(\widehat{\mathcal B} s^+)_n
   \\\le &\frac32C_0\sigma^2+\frac{289}{64}(\mathcal T\cE)_n
 +\frac{289}{256}\sigma^2(\mathcal T\mathbf 1)_n +4\widehat\delta(n)\max_{0\leq i\leq n-1}\cE_i
 +\widehat\delta(n)\sigma^2.
\end{aligned}
\end{equation}
where  the last inequality uses $ (\mathcal T s^{+})_n 
 =2(\mathcal T\cE)_n
   +\frac{\sigma^2}{2}(\mathcal T\mathbf 1)_n,\, 
 (\widehat{\mathcal B} s^+)_n
 =\widehat\delta(n)
   \left(2\max_{0\leq i\leq n-1}\cE_i
         +\frac{\sigma^2}{2}\right)$ and 
\begin{align}
         e_n^{\rm signal}
  \overset{\eqref{eq:aggregate-adiabatic-signal}}{\leq}\left(\frac12+4\varepsilon_{ep}(n)\right)S_A(T_n)
 \leq\frac34S_A(0)
 \leq\frac32C_0\sigma^2.
\end{align}
 Then using $\max_{0\leq n<k} (\mathcal T\mathcal E )_n\le K_{\eta}^{\rm max}(k)\max_{0\leq i\leq k-1}\cE_i$ and  \eqref{eq:fixed-prefix-row-bound} to \eqref{eq:fixed-prefix-kernel-chain}, we get
 \begin{equation}\label{eq:fixed-prefix-scalar-closure}
 \max_{0\leq i\leq k-1}\cE_i\leq\frac32C_0\sigma^2
 +\left[
   \frac{289}{64}K_{\eta}^{\rm max}(k)+4\widehat\delta^{\rm max}(k)
  \right]
  \left(\max_{0\leq i\leq k-1}\cE_i+\frac{\sigma^2}{4}\right).
\end{equation}
With the condition $
   \frac{289}{64}K_{\eta}^{\rm max}(k)+4\widehat\delta^{\rm max}(k)\le \frac{3}{4}$ in \eqref{eq:discrete-regime}, we get 
\begin{equation}\label{eq:fixed-prefix-bound}
 \max_{0\leq i<k}\cE_i
\leq\left(6C_0+\frac34\right)\sigma^2.
\end{equation}

\smallskip
\noindent\emph{Step 2: the asymptotic bounds for $\cE_k$.}
  
The estimate in  \eqref{eq:fixed-prefix-bound} gives
$
 \max_{0\leq i<k}s_i^-
 \leq\left(6C_0+\frac54\right)\sigma^2,\,
 \max_{0\leq i<k}s_i^+
 \leq2\left(6C_0+1\right)\sigma^2
$.

Thus 
\begin{equation}\label{eq:fixed-final-TBS-bound}
    \begin{aligned}
        (\mathcal T s^-)_k
 &=\sum_{i=0}^{k-1}\eta_i^2K_A(T_k-T_i)
   \left(\cE_i+\frac{\sigma^2}{2}\right)
 \geq\frac{\sigma^2}{2}(\mathcal T \mathbf 1)_k,\\
 (\mathcal T s^+)_k
 &=\sum_{i=0}^{k-1}\eta_i^2K_A(T_k-T_i)
   \left(2\cE_i+\frac{\sigma^2}{2}\right)
 \leq 2\left(6C_0+1\right)\sigma^2(\mathcal T \mathbf 1)_k,
 \\
 (\widehat{\mathcal B} s^-)_k
 &=\widehat\delta(k)\max_{0\leq i<k}s_i^-
 \leq \left(6C_0+\frac54\right)\sigma^2\widehat\delta(k),
 \\
 (\widehat{\mathcal B} s^+)_k
 &=\widehat\delta(k)\max_{0\leq i<k}s_i^+
 \leq 2\left(6C_0+1\right)\sigma^2\widehat\delta(k)
    \end{aligned}
\end{equation}
Besides, \eqref{eq:aggregate-adiabatic-signal} gives
\begin{equation}\label{eq:fixed-final-signal}
 \frac14S_A(T_k)\le\left(\frac12-4\varepsilon_{ep}(k)\right)S_A(T_k)
 \leq e_k^{\rm signal}
 \leq\left(\frac12+4\varepsilon_{ep}(k)\right)S_A(T_k)\le\frac34S_A(T_k).
\end{equation}

 Define $G_k:=S_A(T_k)+\sigma^2(\mathcal T \mathbf 1)_k=S_A(T_k)+\sigma^2\sum_{i=0}^{k-1}\eta_i^2K_A(T_k-T_i)$. 
The estimates in
\eqref{eq:closed-exact-comparison}  give 
\begin{equation}\label{eq:Gk-up-low-bound}
    \begin{aligned}
      G_k&\geq\min\{1,c_{S,-}\}\SMS(T_k)
 +\sigma^2\min\{1,c_{K,-}\}
   \sum_{i=0}^{k-1}\eta_i^2\KMS(T_k-T_i)\geq c_{sk,-}\FMS(k), 
   \\ G_k&\leq\max\{1,c_{S,+}\}\SMS(T_k)
 +\sigma^2\max\{1,c_{K,+}\}
   \sum_{i=0}^{k-1}\eta_i^2\KMS(T_k-T_i)\leq c_{sk,+}\FMS(k).
    \end{aligned}
\end{equation}
we obtain
\begin{equation}\label{eq:fixed-final-lower}
\begin{aligned} 
 \cE_k
 &\overset{\eqref{eq:discrete-volterra-sandwich}}{\geq} e_k^{\rm signal}
   +\sum_{i=0}^{k-1}K_{k,i}^{\rm noise}s_i^- \overset{\eqref{eq:terminal-abel-kernel-sandwich}}{\geq}\frac14S_A(T_k)
   +\frac{225}{512}(\mathcal Ts^-)_k-(\widehat{\mathcal B} s^-)_k\\
&\overset{\eqref{eq:fixed-final-TBS-bound}}{\geq}\frac14S_A(T_k)+\frac{225}{1024}\sigma^2(\mathcal T \mathbf 1)_k
   -\left(6C_0+\frac54\right)\sigma^2\widehat\delta(k)
   \\
&\overset{\eqref{eq:discrete-regime}}{\geq}\left[
       \frac{225}{1024}
       -\frac{24C_0+5}{64(6C_0+1)}\right]
       c_{sk,-}G_k
 =\frac{966C_0+145}{1024(6C_0+1)}
   c_{sk,-}G_k.
\end{aligned}
\end{equation}
and 
\begin{equation}\label{eq:fixed-final-upper}
\begin{aligned}
 \cE_k
 &\overset{\eqref{eq:discrete-volterra-sandwich}}{\leq} e_k^{\rm signal}
   +\sum_{i=0}^{k-1}K_{k,i}^{\rm noise}s_i^+ \overset{\eqref{eq:terminal-abel-kernel-sandwich}}{\leq}\frac34S_A(T_k)+\frac{289}{128}(\mathcal Ts^+)_k+2 (\widehat{\mathcal B} s^+)_k \\
&\overset{\eqref{eq:fixed-final-TBS-bound}}{\leq}\frac34S_A(T_k)+\frac{289}{64}(6C_0+1)\sigma^2(\mathcal T \mathbf 1)_k
   +4(6C_0+1)\sigma^2\widehat\delta(k)
\overset{\eqref{eq:discrete-regime}}{\leq}\left(
   \frac{289}{64}(6C_0+1)c_{sk,+}
   +\frac{c_{sk,-}}4\right)G_k.
\end{aligned}
\end{equation}

Combining \eqref{eq:fixed-final-lower}, \eqref{eq:fixed-final-upper} and \eqref{eq:Gk-up-low-bound}, we get
\[
\cE_k\asymp\FMS(k).
\]
This proves the conclusion.
\end{proof}

\begin{proof}[Proof of Corollary~\ref{cor:constant-schedule}]
Set $t_k:=k\eta,\,R:=r+1$.  

\smallskip
\noindent\emph{Part 1: estimate the noise accumulation term.}
Enlarge  the constant $C_{h}$ supplied by
Lemma~\ref{lem:power-law-kernels} \textup{(iv)} so that
\begin{equation}\label{eq:constant-catchup-normalization}
 2\ell_-C_{h}\geq\log 2,
 \qquad
 C_{h}\left(\frac45\right)^\nu
 \geq\max\left\{1,2^{1/(1-\frac1\nu)}-1\right\}.
\end{equation}

 \(T_i=i\eta\)  gives $\sum_{i=0}^{k-1}\eta^2\KMS(T_k-T_i)=\eta^2\sum_{m=1}^{k}\KMS(m\eta)$. 
Decompose \(\KMS(z)=\sum_{j=1}^rK_j^{top}(z)+K^{tail}(z)\), where for $j\le r$, 
\[
 K_j^{top}(z):=\atpg^2\lambda_j^2
 e^{-2\atpg\lambda_jz},\qquad K^{tail}(z):=\aperp^2(R^\nu+\aperp z)^{-p}.
\]
Let 
\begin{equation}
    J_{\mathsf K}(t):=\int_0^t\KMS(z)\dd z=\frac{\atpg}{2}\sum_{j=1}^r\lambda_j
 \left(1-e^{-2\atpg\lambda_jt}\right)+\frac{\aperp}{1-\nu^{-1}}
 \left[R^{1-\nu}-(R^\nu+\aperp t)^{-(1-\nu^{-1})}\right].
\end{equation} For  \(1\leq m\leq k\),
\begin{equation}\label{eq:constant-top-tail-mesh-ratio}
    \begin{aligned}
       \frac{K_j^{top}(m\eta)}
      {K_j^{top}((m-1)\eta)}
 &=e^{-2\atpg\lambda_j\eta}
 \geq e^{-2\atpg\lambda_1\eta},
 \\
 \frac{K^{tail}(m\eta)}
      {K^{tail}((m-1)\eta)}
 &=\left(1+\frac{\aperp\eta}
 {R^\nu+\aperp(m-1)\eta}\right)^{-p}
 \geq\left(1+\frac{\aperp\eta}{R^\nu}\right)^{-p}. 
    \end{aligned}
\end{equation}
Set the constant $
 c_m:=\min\left\{
 e^{-2\atpg\lambda_1\eta},
 \left(1+\frac{\aperp\eta}{R^\nu}\right)^{-p}
 \right\}$. 
Then \eqref{eq:constant-top-tail-mesh-ratio} implies that for
\(z\in[(m-1)\eta,m\eta]\),
$
 \KMS(m\eta)
 \leq\KMS(z)
 \leq\KMS((m-1)\eta)
 \leq c_m^{-1}\KMS(m\eta)$. 
Thus 
\[
 c_m\int_{(m-1)\eta}^{m\eta}\KMS(z)\dd z
 \leq\eta\KMS(m\eta)
 \leq\int_{(m-1)\eta}^{m\eta}\KMS(z)\dd z.
\]
It follows that 
\begin{equation}\label{eq:constant-grid-integral-comparison}
 c_m\eta J_{\mathsf K}(t_k)
 \leq\eta^2\sum_{m=1}^{k}\KMS(m\eta)
 \leq\eta J_{\mathsf K}(t_k).
\end{equation}

By the hyper-parameter constraints in Theorem~\ref{thm:discrete-fsl}, we have
\begin{align}
 2\atpg\lambda_1\eta
 \leq2\aperp\lambda_1\eta
 \leq\frac{\as}{4}
 \leq\frac14, \qquad 
 \frac{\aperp\eta}{R^\nu}
 =\frac{\aperp\lambda_1\eta}{\lambda_1R^\nu}
 \leq\frac{\as}{8\lambda_1R^\nu}
 \leq\frac1{8\ell_-R^\nu}
 \leq\frac1{8\ell_-}.
\end{align}
Consequently, $c_m\geq c_{m,0}:=
 \min\left\{e^{-1/4},
 \left(1+\frac1{8\ell_-}\right)^{-p}\right\}\gtrsim 1$.

Since  \(d\geq2R\), we have
\begin{equation}\label{eq:constant-tail-spectral-mass}
\begin{aligned}
 \sum_{j=R}^d\lambda_j
 \geq\sum_{j=R}^{2R-1}\ell_-j^{-\nu}
 \geq \ell_-2^{-\nu}R^{1-\nu},\qquad
 \sum_{j=R}^d\lambda_j
 \leq \ell_+\sum_{j=R}^{\infty}j^{-\nu}
 \leq\frac{\ell_+}qR^{1-\nu}.
\end{aligned}
\end{equation}
Then defining $ c_{tr}=\min\left\{\frac12,\frac1{\ell_+}\right\},\,
 C_{tr}=\max\left\{\frac12,
 \frac{2^\nu}{(1-\nu^{-1}) \ell_-}\right\}$ and using $\tr(AH)=\atpg\sum_{j=1}^r\lambda_j
 +\aperp\sum_{j=R}^d\lambda_j$, we get
\begin{equation}\label{eq:constant-proxy-trace-comparison}
 c_{tr}\tr(AH)
 \leq =\underbrace{\frac{\atpg}{2}\sum_{j=1}^r\lambda_j
 +\frac{\aperp}{1-\nu^{-1}}R^{1-\nu}}_{=J_{\mathsf K}(\infty)}
 \leq C_{tr}\tr(AH).
\end{equation}

The constraint 
\eqref{eq:constant-schedule-window}  implies that $
 \atpg\lambda_jt_k
 \geq \ell_-C_{h}
 \left(\frac rj\right)^\nu(1+\log\kappa)
 \geq \ell_-C_{h}$. Combining this with \eqref{eq:constant-catchup-normalization} implies that for every \(j\leq r\),
\begin{equation}\label{eq:1-ealamtk}
  1-e^{-2\atpg\lambda_jt_k}
 \geq1-e^{-2\ell_-C_{h}}
 \geq\frac12.  
\end{equation}

Moreover, \(\kappa\geq1\)  and
\(r/R=r/(r+1)\geq4/5\) give 
\begin{equation}\label{eq:alamtk-lower-ccat}
    \frac{\aperp t_k}{R^\nu}
 \geq\kappa C_{h}
 \left(\frac rR\right)^\nu(1+\log\kappa)
 \geq C_{h}\left(\frac45\right)^\nu.
\end{equation}
 This implies that  
\begin{equation}\label{eq:R-R-t_R}
    \frac{R^{1-\nu}-(R^\nu+\aperp t_k)^{-(1-\nu^{-1})}}
 {R^{1-\nu}}=1-\left(1+\frac{\aperp t_k}{R^\nu}\right)^{-(1-\nu^{-1})}
 \geq 1-\left[1+C_{h}\left(\frac45\right)^\nu\right]^{-(1-\nu^{-1})}
 \geq\frac12.
\end{equation}
Applying \eqref{eq:1-ealamtk} and \eqref{eq:R-R-t_R} to the closed form of $J_{\mathsf K}(t_k)$ yields
\begin{equation}\label{eq:constant-kernel-saturation}
 \frac12J_{\mathsf K}(\infty)
 \leq J_{\mathsf K}(t_k)
 \leq J_{\mathsf K}(\infty).
\end{equation}
Combining \eqref{eq:constant-grid-integral-comparison}, 
\eqref{eq:constant-proxy-trace-comparison}, and
\eqref{eq:constant-kernel-saturation} gives  
\begin{equation}\label{eq:constant-noise-kernel-comparison}
 \frac{c_{m,0}c_{tr}}2\eta\tr(AH)
 \leq \eta^2\sum_{m=1}^{k}\KMS(m\eta)
 \leq C_{tr}\eta\tr(AH).
\end{equation}

\smallskip
\noindent\emph{Part 2: estimate  the signal term.}
By \eqref{eq:finite-top-catchup}, there exists a fixed
\(C_{top}>0\) such that the lower inequality in
\eqref{eq:constant-schedule-window} implies
\begin{equation}\label{eq:constant-top-signal-bound}
 \sum_{j=1}^r\lambda_j|w_j^\star|^2
 e^{-2\atpg\lambda_jt_k}
 \leq C_{top}(\aperp t_k)^{-s}.
\end{equation}
 \eqref{eq:constant-catchup-normalization} and
\eqref{eq:alamtk-lower-ccat} imply
\(\aperp t_k\geq R^\nu\), and therefore
\begin{equation}\label{eq:constant-tail-signal-bound}
 2^{-s}(\aperp t_k)^{-s}
 \leq(R^\nu+\aperp t_k)^{-s}
 \leq(\aperp t_k)^{-s}.
\end{equation}
Using 
\eqref{eq:constant-top-signal-bound} and
\eqref{eq:constant-tail-signal-bound} to   $\SMS(t)
 =\sum_{j=1}^{r}\lambda_j|w_j^\star|^2
       e^{-2\atpg\lambda_jt}
   +\bigl((r+1)^\nu+\aperp t\bigr)^{-s}$, we have  
\begin{equation}\label{eq:constant-signal-comparison}
 2^{-s}(\aperp\eta k)^{-s}
 \leq\SMS(\eta k)
 \leq(C_{top}+1)(\aperp\eta k)^{-s}.
\end{equation}

Finally, combining
 \eqref{eq:constant-noise-kernel-comparison} and
\eqref{eq:constant-signal-comparison}   gives
  \eqref{eq:constant-schedule-final}.
\end{proof}

\begin{lemma}[Admissibility for a decreasing schedule]
\label{lem:admissibility-decreasing-schedule}
Suppose Assumption~\ref{ass:power-law} and \(d\geq2(r+1)\) hold.
Let
\(k\geq2\), and suppose
\[
 \eta_0\geq\eta_1\geq\cdots\geq\eta_{k-1}>0,
 \qquad
 \aperp T_k\leq C_{d}d^\nu,
 \qquad
 \aperp\lambda_1\eta_0\leq\frac{\as}{8}.
\]
Then 
\begin{equation}\label{eq:decreasing-prefix-bounds}
 \widehat\delta^{\rm max}(k)
 \leq16\eta_0^2\sum_{j=1}^d\lambda_j^2G_{q_j}^2,
 \qquad
 K_{\eta}^{\rm max}(k)
 \leq\frac{\eta_0}{2}\tr(AH).
\end{equation}
If we further assume
\begin{equation}\label{eq:closed-terminal-relative-sufficient-condition}
\begin{aligned}
&\max\left\{\frac{G_0^2}{\atpg^2},
             \frac{G_1^2}{\aperp^2}\right\}
 \left[
  \eta_{k-1}\sqrt{\KMS(0)}
  +\sum_{t=0}^{k-2}(\eta_t-\eta_{t+1})
    \sqrt{\KMS(T_k-T_{t+2})}
 \right]^2\\
&\qquad\leq
 \frac{c_{sk,-}}
 {256(6C_0+1)\max\{1,c_{K,+}\}}
 \sum_{i=0}^{k-1}\eta_i^2\KMS(T_k-T_i),
\end{aligned}
\end{equation}
it holds that 
\[
 \widehat\varepsilon(k)
 \leq
 \frac{c_{sk,-}}{16(6C_0+1)}.
\]
\end{lemma}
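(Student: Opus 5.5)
For a decreasing schedule we have $V_k(\eta)=\sum_{i\le k-2}(\eta_i-\eta_{i+1})\le\eta_0$ and $\eta_{\max,k}=\eta_0$; this drives both parts of the proof.

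\textbf{The bound on $\widehat\delta^{\rm max}(k)$.} I would bound $L_{j,n}$ directly for each $n<k$. In the definition of $L_{j,n}$, the triangle inequality splits the summand $|\eta_{t+1}-\eta_t+a_{q_j}\lambda_j\eta_t\eta_{t+1}|$ into $(\eta_t-\eta_{t+1})+a_{q_j}\lambda_j\eta_t\eta_{t+1}$, since the schedule is monotone. The products $\prod(1-a\lambda\eta_m)$ are at most $1$, so the first part contributes at most $V_n(\eta)\le\eta_0$. The second part equals $\eta_0\sum_t(\Pi_{t+2}-\Pi_{t+1})\le\eta_0$ by telescoping, exactly as in \eqref{eq:sum-m-delta-gamma-bound}. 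Adding the leading $\eta_{n-1}\le\eta_0$ gives $L_{j,n}\le 2\eta_0$, hence $\widehat\delta(n)\le 4\sum_j\lambda_j^2G_{q_j}^2(2\eta_0)^2=16\eta_0^2\sum_j\lambda_j^2G_{q_j}^2$.

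\textbf{The bound on $K_\eta^{\rm max}(k)$.} This is \eqref{eq:fixed-prefix-row-bound} with $\eta_{\max,k}=\eta_0$. The argument compares the sum with $\int K_A$ using the monotonicity of $K_A$ and then applies \eqref{eq:kernel-mass}.

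\textbf{The bound on $\widehat\varepsilon(k)$.} This is the main step, because a relative bound is needed rather than an absolute one. The plan is as follows.
\begin{enumerate}
\item I would return to the finer representation of $L_{j,k}$ before the crude bound. For decreasing steps, $\eta_t-\eta_{t+1}\ge0$ is weighted by $\prod_{m=t+2}^{k-1}(1-a\lambda\eta_m)\le e^{-a_j\lambda_j(T_k-T_{t+2})}$, and the leading term $\eta_{k-1}$ carries weight $e^{0}$.
\item The term $a\lambda\eta_t\eta_{t+1}\Pi_{t+2}$ needs a separate argument. It is dominated by $\eta_{t}\cdot\eta_{t+1}a\lambda e^{-a\lambda(T_k-T_{t+2})}$, and summing in $t$ gives a Riemann sum of $a\lambda e^{-a\lambda s}$. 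I expect this to absorb into the same envelope with weights $\sqrt{K}$, after Abel-summing $\eta_t$ against the decreasing differences. This is the step I would check most carefully.
\item Putting these together gives $\lambda_j L_{j,k}\lesssim a_j^{-1}\big[\eta_{k-1}\,a_j\lambda_j+\sum_t(\eta_t-\eta_{t+1})\,a_j\lambda_je^{-a_j\lambda_j(T_k-T_{t+2})}\big]$.
\item Next I square, sum over $j$ with $G_{q_j}^2\le\max\{G_0^2/\atpg^2,G_1^2/\aperp^2\}\,a_j^2$, and apply Minkowski's inequality in $\ell^2(j)$ to move the sum over $t$ outside. This gives $\widehat\delta(k)\lesssim\max\{\cdot\}\big[\eta_{k-1}\sqrt{K_A(0)}+\sum_t(\eta_t-\eta_{t+1})\sqrt{K_A(T_k-T_{t+2})}\big]^2$, using $\sum_j a_j^2\lambda_j^2e^{-2a_j\lambda_j s}=K_A(s)$.
\item Lemma~\ref{lem:power-law-kernels}(i) replaces $K_A$ by $\max\{1,c_{K,+}\}\KMS$.
\item Finally, $\FMS(k)\ge\sigma^2\sum_i\eta_i^2\KMS(T_k-T_i)$, so dividing by $\FMS(k)$ and applying \eqref{eq:closed-terminal-relative-sufficient-condition} gives $\widehat\varepsilon(k)\le c_{sk,-}/(16(6C_0+1))$. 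The constant factors from the second and first parts ($4$, and $2$ from the triangle split) are what the $256$ in the hypothesis must cover.
\end{enumerate}
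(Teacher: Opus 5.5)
Your outline follows the paper's proof. You split the summand of $L_{j,k}$ by the triangle inequality and handle the product term by summation by parts. You then bound the products by exponentials and use $G_{q_j}^2\le\max\{G_0^2/\atpg^2,G_1^2/\aperp^2\}\,a_j^2$ together with Minkowski in $\ell^2(j)$. Finally you pass to $\KMS$ via Lemma~\ref{lem:power-law-kernels}(i), which is legitimate because every argument lies in $[0,T_k]$ and $\aperp T_k\le C_dd^\nu$, and divide by $\FMS(k)\ge\sigma^2\sum_i\eta_i^2\KMS(T_k-T_i)$.

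\textbf{The gap is step~2.} It is the only nontrivial step of the relative bound, and you leave it as an expectation. The route you sketch for it (replace the products by exponentials, compare with a Riemann sum, then Abel-sum) also does not produce the constant that the $256$ is calibrated for. Because the conclusion is an explicit numerical inequality, you need $L_{j,k}\le 2[\cdots]$ with factor exactly $2$. That gives $\widehat\delta(k)\le 4\cdot 2^2\max\{\cdots\}\max\{1,c_{K,+}\}[\cdots]^2$, and then $16/256=1/16$. Once you have passed to exponentials, with $x=a\lambda\eta_{t+1}>0$ you have
\[
a\lambda\eta_{t+1}e^{-a\lambda(T_k-T_{t+2})}=\frac{x}{1-e^{-x}}\bigl(e^{-a\lambda(T_k-T_{t+2})}-e^{-a\lambda(T_k-T_{t+1})}\bigr),
\]
and $x/(1-e^{-x})>1$. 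So the product term yields strictly more than one copy of the envelope, and the resulting constant overshoots $16$.

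\textbf{The fix} is to Abel-sum on the exact products $\Pi_{j;k,\ell}:=\prod_{m=\ell}^{k-1}(1-a_{q_j}\lambda_j\eta_m)$, as the paper does. You already use the needed identity $\Pi_{j;k,t+2}-\Pi_{j;k,t+1}=a_{q_j}\lambda_j\eta_{t+1}\Pi_{j;k,t+2}$ in your crude bound. With $\Pi_{j;k,k}=1$ it gives
\[
\sum_{t=0}^{k-2}a_{q_j}\lambda_j\eta_t\eta_{t+1}\Pi_{j;k,t+2}
=\eta_{k-1}+\sum_{t=0}^{k-2}\Pi_{j;k,t+2}(\eta_t-\eta_{t+1})-\eta_0\Pi_{j;k,1}.
\]
So the product term contributes exactly one more copy of the envelope, minus a nonnegative term. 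Hence $L_{j,k}\le2\bigl[\eta_{k-1}+\sum_{t=0}^{k-2}(\eta_t-\eta_{t+1})\Pi_{j;k,t+2}\bigr]$. Only after this do you bound $\Pi_{j;k,t+2}\le e^{-a_{q_j}\lambda_j(T_k-T_{t+2})}$. With the $\lesssim$ of your step~3 replaced by this explicit factor $2$, your steps 4--6 go through unchanged.

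Setting the exponentials to $1$ in this bound also gives the crude estimate $L_{j,n}\le2\eta_0$ directly. In your own derivation of that estimate, you need the exact telescoping $\eta_{n-1}+V_n(\eta)=\eta_0$. Bounding $V_n(\eta)\le\eta_0$ and $\eta_{n-1}\le\eta_0$ separately, as written, only gives $3\eta_0$, hence $36$ instead of $16$.
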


\begin{proof}
For every \(1\leq j\leq d\),
\(0\leq t\leq k-1\), and \(1\leq\ell\leq k\), we have
\begin{equation}\label{eq:terminal-product-positivity}
 0\leq a_{q_j}\lambda_j\eta_t
 \leq\aperp\lambda_1\eta_0
 \leq\frac{\as}{8}<1,
 \qquad
 0\leq\Pi_{j;k,\ell}\leq1.
\end{equation}
Using 
$\Pi_{j;k,t+2}-\Pi_{j;k,t+1}=a_{q_j}\lambda_j\eta_{t+1}\Pi_{j;k,t+2}$, we get
\begin{equation}\label{eq:terminal-monotone-telescoping}
\begin{aligned}
 \sum_{t=0}^{k-2}a_{q_j}\lambda_j
   \eta_t\eta_{t+1}\Pi_{j;k,t+2}
 =
 \eta_{k-1}
 +\sum_{t=0}^{k-2}\Pi_{j;k,t+2}(\eta_t-\eta_{t+1})
 -\eta_0\Pi_{j;k,1}.
\end{aligned}
\end{equation}
Combining this with
$
 \Pi_{j;k,t+2}
 \leq e^{-a_{q_j}\lambda_j(T_k-T_{t+2})}$  yields
\begin{equation}\label{eq:terminal-L-full-variation}
\begin{aligned}
 L_{j,k}
 &\leq
 2\left[
 \eta_{k-1}
 +\sum_{t=0}^{k-2}\Pi_{j;k,t+2}(\eta_t-\eta_{t+1})
 -\frac{\eta_0}{2}\Pi_{j;k,1}
 \right]\\
 &\leq
 2\left[
 \eta_{k-1}
 +\sum_{t=0}^{k-2}(\eta_t-\eta_{t+1})
 e^{-a_{q_j}\lambda_j(T_k-T_{t+2})}
 \right].
\end{aligned}
\end{equation}
For \(0\leq u\leq T_k\),  
\eqref{eq:closed-exact-comparison} implies 
\[
 \sum_{j=1}^d\lambda_j^2G_{q_j}^2
 e^{-2a_{q_j}\lambda_ju}
 \leq
 \max\left\{\frac{G_0^2}{\atpg^2},
             \frac{G_1^2}{\aperp^2}\right\}
 \max\{1,c_{K,+}\}\KMS(u).
\]
Combing it with \eqref{eq:terminal-L-full-variation}   gives
\begin{equation}\label{eq:terminal-delta-full-variation}
\begin{aligned}
\widehat\delta(k)
\leq 
16\max\left\{\frac{G_0^2}{\atpg^2},
              \frac{G_1^2}{\aperp^2}\right\}
\max\{1,c_{K,+}\} 
\left[
 \eta_{k-1}\sqrt{\KMS(0)}
 +\sum_{t=0}^{k-2}(\eta_t-\eta_{t+1})
 \sqrt{\KMS(T_k-T_{t+2})}
\right]^2 .
\end{aligned}
\end{equation}
Note that  \(L_{j,n}\leq2\eta_0\) by \eqref{eq:terminal-L-full-variation} for $n\le k$.  Therefore
 $\widehat\delta^{\rm max}(k)\leq16\eta_0^2\sum_{j=1}^d\lambda_j^2G_{q_j}^2$,
and then we prove 
\eqref{eq:decreasing-prefix-bounds} by using \eqref{eq:fixed-prefix-row-bound}. Finally, using 
$ \FMS(k)\geq
 \sigma^2\sum_{i=0}^{k-1}\eta_i^2\KMS(T_k-T_i)$, \eqref{eq:terminal-delta-full-variation} and  
\eqref{eq:closed-terminal-relative-sufficient-condition} proves the conclusion.
\end{proof}

\subsection{Useful Lemmas}

\begin{theorem}[Gershgorin disk theorem]
\label{prop:gershgorin}
For \(M=(m_{ij})\in\mathbb C^{n\times n}\), define the row disks
\[
 D_i:=\left\{z\in\mathbb C:
 |z-m_{ii}|\leq\sum_{j\ne i}|m_{ij}|\right\}.
\]
Every eigenvalue of \(M\) belongs to \(\bigcup_{i=1}^nD_i\).  More
precisely, if a connected component of this union consists of exactly
\(k\) disks and is disjoint from all remaining disks, then it contains
exactly \(k\) eigenvalues of \(M\), counted with algebraic multiplicity.
\end{theorem}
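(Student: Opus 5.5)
The proof has two parts: an eigenvector argument for the inclusion claim, and a homotopy argument for the counting claim.

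\emph{Inclusion.} Let $\lambda$ be an eigenvalue of $M$ with eigenvector $v\neq0$, and choose an index $i$ with $|v_i|=\max_j|v_j|>0$. Reading off the $i$-th row of $(\lambda I-M)v=0$ gives $(\lambda-m_{ii})v_i=\sum_{j\ne i}m_{ij}v_j$. Dividing by $|v_i|$ and using $|v_j|/|v_i|\le1$ yields $|\lambda-m_{ii}|\le\sum_{j\ne i}|m_{ij}|$. Hence $\lambda\in D_i$, which is the first claim.

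\emph{Counting.} Write $M=D+E$ with $D=\diag(m_{11},\dots,m_{nn})$ and $E$ the off-diagonal part, and set $M(t):=D+tE$ for $t\in[0,1]$. The disks of $M(t)$ are $D_i(t)=\{z:|z-m_{ii}|\le t\rho_i\}$, where $\rho_i:=\sum_{j\ne i}|m_{ij}|$; they have the same centres as the $D_i$ and are contained in them. Let $\mathcal C$ be the given connected component, made of $k$ disks and disjoint from the union $\mathcal C'$ of the remaining disks. Both sets are compact, so there is a closed contour $\Gamma$ (for instance the boundary of a small open neighbourhood of $\mathcal C$) that encloses $\mathcal C$ and stays at positive distance from $\mathcal C\cup\mathcal C'$. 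Since $D_i(t)\subseteq D_i$, the inclusion part shows that for every $t$ all eigenvalues of $M(t)$ lie in $\mathcal C\cup\mathcal C'$, and so none lies on $\Gamma$.

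Define $N(t):=\frac{1}{2\pi\mathrm i}\oint_\Gamma \frac{p_t'(z)}{p_t(z)}\dd z$, where $p_t(z)=\det(zI-M(t))$. This is the number of eigenvalues of $M(t)$ inside $\Gamma$, counted with algebraic multiplicity. The coefficients of $p_t$ are polynomials in $t$, and $p_t$ does not vanish on $\Gamma$ for any $t\in[0,1]$. By compactness of $[0,1]\times\Gamma$, $|p_t|$ is bounded below there, so the integrand is jointly continuous and $N(t)$ is a continuous integer-valued function on $[0,1]$, hence constant. At $t=0$ the eigenvalues of $M(0)$ are the centres $m_{ii}$, with multiplicity. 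The centres enclosed by $\Gamma$ are exactly those of the $k$ disks forming $\mathcal C$, because the centres of the other disks lie in $\mathcal C'$, outside $\Gamma$. Therefore $N(0)=k$, and so $N(1)=k$, which is the claim. Equivalently, one can argue that the eigenvalues vary continuously in $t$ and cannot jump between the separated sets $\mathcal C$ and $\mathcal C'$.

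\emph{Main obstacle.} The only delicate step is justifying that $N(t)$ is continuous. This rests on $\Gamma$ staying uniformly away from the spectrum of every $M(t)$. That uniformity is exactly what the nesting $D_i(t)\subseteq D_i$ provides, together with the disjointness of $\mathcal C$ from $\mathcal C'$.
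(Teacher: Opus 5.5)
The paper does not prove this statement. It is quoted in the ``Useful Lemmas'' subsection as a classical tool, used in Lemma~\ref{lem:modal-polynomial} to isolate the eigenvalue near $1$, so there is no argument of the paper's to compare with. Your proposal is the standard textbook proof, and it is correct in substance:
\begin{itemize}
\item the maximal-coordinate eigenvector bound gives the inclusion;
\item the homotopy $M(t)=D+tE$ with nested disks $D_i(t)\subseteq D_i$ gives an argument-principle count that is continuous and integer-valued in $t$.
\end{itemize}
You never use connectedness of $\mathcal C$, so you actually prove that any union of $k$ disks disjoint from the remaining ones contains exactly $k$ eigenvalues. That is also the form the paper uses, since the union of the second and third disks in Lemma~\ref{lem:modal-polynomial} need not be connected.

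One step needs to be stated more carefully. You take $\Gamma$ to be a closed contour that ``encloses'' $\mathcal C$, and you assert that the centres of the remaining disks lie ``outside $\Gamma$''. For a single Jordan curve this can fail, because $\mathcal C$ may have holes that contain disks of $\mathcal C'$. Centres and radii can be prescribed freely, so take six disks of radius $1.1$ centred at $2e^{\pi\mathrm i j/3}$, $j=0,\dots,5$, and one disk of radius $1/2$ centred at $0$. Any Jordan curve whose interior contains the ring $\mathcal C$ also contains the inner disk, and your count then gives $N(0)=k+1$.

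What the argument needs is a cycle with winding number $1$ about each point of $\mathcal C$ and $0$ about each point of $\mathcal C'$. Your parenthetical supplies one if $\Gamma$ is taken to be the \emph{whole} oriented boundary of $U_\epsilon=\{z:\operatorname{dist}(z,\mathcal C)<\epsilon\}$ with $0<\epsilon<\operatorname{dist}(\mathcal C,\mathcal C')$. Orient the outer curve positively and the boundaries of holes negatively; then $N(t)$ counts the zeros of $p_t$ in $U_\epsilon$. Alternatively, your closing remark on the continuous dependence of the eigenvalues on $t$ avoids a global contour altogether. With either fix, the rest of your argument goes through unchanged: joint continuity of $p_t'/p_t$ on $[0,1]\times\Gamma$, integrality, and $N(0)=k$.
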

\begin{theorem}[Neumann series and matrix resolvent identities]
\label{prop:neumann-resolvent}
Let \(E\in\mathbb C^{n\times n}\), and the norm $\|\cdot\|\in\{\|\cdot\|_2,\|\cdot\|_\infty\}$.  If \(\|E\|<1\), then
\begin{equation}
 (I-E)^{-1}=\sum_{n=0}^\infty E^n,
 \qquad
 \|(I-E)^{-1}\|\leq\frac1{1-\|E\|}.
 \label{eq:neumann-series}
\end{equation}
For \(A\in\mathbb C^{n\times n}\), write
\(R_A(z):=(zI-A)^{-1}\) whenever \(zI-A\) is invertible.  If
\(\|R_A(z)E\|<1\), then
\begin{equation}
 R_{A+E}(z)
 =(I-R_A(z)E)^{-1}R_A(z),
 \qquad
 \|R_{A+E}(z)\|
 \leq\frac{\|R_A(z)\|}{1-\|R_A(z)E\|}.
 \label{eq:neumann-resolvent-bound}
\end{equation}
Whenever \(zI-A\) and \(wI-A\) are invertible,
\begin{equation}
 R_A(z)-R_A(w)=(w-z)R_A(z)R_A(w).
 \label{eq:first-resolvent-identity}
\end{equation}
For \(A,B\in\mathbb C^{n\times n}\), whenever both \(zI-A\) and \(zI-B\)
are invertible,
\begin{equation}
 R_A(z)-R_B(z)=R_A(z)(A-B)R_B(z).
 \label{eq:second-resolvent-identity}
\end{equation}
\end{theorem}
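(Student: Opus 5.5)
The plan is to prove the four statements in order, using only that $\|\cdot\|_2$ and $\|\cdot\|_\infty$ are induced operator norms on $\mathbb C^{n\times n}$. Hence they are submultiplicative, $\|I\|=1$, and $\mathbb C^{n\times n}$ is complete under either norm.

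\emph{Neumann series.} By submultiplicativity, $\|E^m\|\le\|E\|^m$. Since $\|E\|<1$, the series $\sum_{m\ge0}E^m$ converges absolutely and therefore converges, say to $S$. For the partial sums $S_N=\sum_{m=0}^N E^m$ one has the telescoping identities $(I-E)S_N=S_N(I-E)=I-E^{N+1}$. Because $\|E^{N+1}\|\le\|E\|^{N+1}\to0$, letting $N\to\infty$ gives $(I-E)S=S(I-E)=I$. Thus $I-E$ is invertible with $(I-E)^{-1}=S$. The triangle inequality then gives
\[
\|S\|\le\sum_{m\ge0}\|E\|^m=\frac{1}{1-\|E\|}.
\]

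\emph{Perturbed resolvent.} We factor
\[
zI-(A+E)=(zI-A)-E=(zI-A)\bigl(I-R_A(z)E\bigr).
\]
The hypothesis $\|R_A(z)E\|<1$ lets us apply the first part to $R_A(z)E$, so $I-R_A(z)E$ is invertible. Hence $zI-(A+E)$ is a product of invertible matrices, and inverting the product in reverse order gives
\[
R_{A+E}(z)=\bigl(I-R_A(z)E\bigr)^{-1}R_A(z).
\]
The norm bound follows from submultiplicativity together with the bound $\|(I-R_A(z)E)^{-1}\|\le(1-\|R_A(z)E\|)^{-1}$ from the first part.

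\emph{Resolvent identities.} Both identities follow by inserting the difference of two shifted matrices between two resolvents. For the first identity,
\[
R_A(z)\bigl[(wI-A)-(zI-A)\bigr]R_A(w)=R_A(z)-R_A(w),
\]
where the middle bracket equals $(w-z)I$. Since $(w-z)I$ is a scalar multiple of the identity, it commutes out, giving $(w-z)R_A(z)R_A(w)$. For the second identity,
\[
R_A(z)\bigl[(zI-B)-(zI-A)\bigr]R_B(z)=R_A(z)-R_B(z),
\]
and here the bracket equals $A-B$, which gives $R_A(z)(A-B)R_B(z)$.

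No step is a genuine obstacle. The only points needing care are that both norms are induced norms, so submultiplicativity holds, and that the factorization in the perturbed-resolvent step is taken in the correct order, namely $(zI-A)$ on the left. This ordering is what makes $R_A(z)$ appear on the right of the inverse.
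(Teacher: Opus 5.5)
Your proof is correct. The paper gives no proof of this result and lists it among its ``Useful Lemmas'' as a standard fact; your argument is the standard one the paper implicitly relies on. That argument uses absolute convergence plus telescoping for the Neumann series, the factorization $zI-(A+E)=(zI-A)\bigl(I-R_A(z)E\bigr)$ for the perturbed resolvent, and the difference of shifted matrices sandwiched between two resolvents for both resolvent identities.

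You read $\|\cdot\|_\infty$ as the induced max-row-sum norm. That matches how the paper uses it, for example when bounding $\|(zI-N_q(x))^{-1}\|_\infty$ and $\|e_1^\top P_1(x)\|_1$. This reading is genuinely needed, because submultiplicativity fails for the entrywise max norm.
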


\begin{theorem}[Riesz spectral projection formula]
\label{prop:riesz}
Let \(A\in\mathbb C^{n\times n}\), and \(\Gamma\) be a positively
oriented,  smooth, simple closed contour with no eigenvalue of \(A\)
on the contour.  The matrix
\begin{equation}
 P_\Gamma(A):=\frac1{2\pi\mathrm i}
 \int_\Gamma(zI-A)^{-1}\,\dd z
 \label{eq:riesz-projection}
\end{equation}
is the projection (i.e., $P_\Gamma(A)^2=P_\Gamma(A)$) onto the generalized eigenspaces corresponding to the
eigenvalues inside \(\Gamma\).  It commutes with \(A\), and for every
integer \(n\geq0\),
\begin{equation}
 A^nP_\Gamma(A)=\frac1{2\pi\mathrm i}
 \int_\Gamma z^n(zI-A)^{-1}\,\dd z.
 \label{eq:riesz-power-formula}
\end{equation}

If no eigenvalue of \(B\in\mathbb C^{n\times n}\) lies on \(\Gamma\), then
\begin{align}
 P_\Gamma(A)-P_\Gamma(B)
  =\frac1{2\pi\mathrm i}\int_\Gamma
 R_A(z)(A-B)R_B(z)\dd z.
\end{align}
\end{theorem}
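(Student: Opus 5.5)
The plan is to reduce everything to Jordan form and Cauchy's integral formula. Since $\Gamma$ contains no eigenvalue of $A$, the resolvent $R_A(z)=(zI-A)^{-1}$ is continuous on $\Gamma$, so all the contour integrals are well defined. Write $A=SJS^{-1}$ with $J=\bigoplus_b(\mu_bI+N_b)$, where each $N_b$ is a nilpotent shift. Then $R_A(z)=S\,R_J(z)\,S^{-1}$. On a single block, the Neumann series of Theorem~\ref{prop:neumann-resolvent}, which terminates because $N_b$ is nilpotent, gives
\[
 (zI-\mu_bI-N_b)^{-1}=\sum_{m\geq0}\frac{N_b^m}{(z-\mu_b)^{m+1}}.
\]
By Cauchy's formula, $\frac1{2\pi\mathrm i}\int_\Gamma (z-\mu_b)^{-(m+1)}\dd z$ equals $1$ if $m=0$ and $\mu_b$ lies inside $\Gamma$, and $0$ otherwise. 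Hence
\[
 P_\Gamma(A)=S\Bigl(\bigoplus_b \epsilon_b I\Bigr)S^{-1},
\]
where $\epsilon_b=1$ if $\mu_b$ lies inside $\Gamma$ and $\epsilon_b=0$ otherwise. This matrix is visibly idempotent. It commutes with $A$, since block-diagonal scalar-on-each-block matrices commute with $J$; commutation also follows directly from the fact that $A$ commutes with $R_A(z)$. Its range is the direct sum of the generalized eigenspaces of the enclosed eigenvalues.

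For the power formula I will avoid residue bookkeeping and use the polynomial identity
\[
 z^nI-A^n=(zI-A)\sum_{i=0}^{n-1}z^{n-1-i}A^i .
\]
Multiplying by $R_A(z)$ gives
\[
 z^nR_A(z)=A^nR_A(z)+\sum_{i=0}^{n-1}z^{n-1-i}A^i .
\]
The second term is a polynomial in $z$, so its contour integral vanishes. Integrating the remaining term over $\Gamma$ yields $A^nP_\Gamma(A)$.

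For the difference formula, apply the second resolvent identity \eqref{eq:second-resolvent-identity} pointwise on $\Gamma$; this is valid because neither $A$ nor $B$ has an eigenvalue on $\Gamma$. Integrating and using linearity of the contour integral then gives the claim.

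The only step requiring care is the Jordan-block computation: checking that the nilpotent terms integrate to zero and that the $m=0$ term picks out exactly the enclosed eigenvalues. This is routine once the terminating Neumann expansion is written down.
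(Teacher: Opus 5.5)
Your proof is correct and complete. The paper itself gives no proof to compare against: it lists this result as a standard tool at the end of the appendix and applies it only to the $3\times3$ matrices $N_q(x)$ in Lemma~\ref{lem:modal-projectors}.

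On the substance:
\begin{itemize}
\item The terminating expansion on each Jordan block, together with Cauchy's formula, gives $P_\Gamma(A)=S\bigl(\bigoplus_b\epsilon_bI\bigr)S^{-1}$. From this you can read off idempotence, commutation with $A$, and the range. You can also read off the kernel, namely the sum of the generalized eigenspaces of the eigenvalues outside $\Gamma$. The kernel is worth stating explicitly, since it is what singles out \emph{the} spectral projection among all projections with that range.
\item The telescoping identity gives the power formula without residue bookkeeping.
\item The difference formula is just the second resolvent identity integrated over $\Gamma$.
\end{itemize}

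Compared with the usual textbook argument, your route is genuinely different. The textbook proof obtains $P_\Gamma(A)^2=P_\Gamma(A)$ by integrating the first resolvent identity \eqref{eq:first-resolvent-identity} over two nested contours, then identifies the range by restricting $A$ to the resulting invariant subspaces. That argument is basis-free and extends to bounded operators on Banach spaces. Your Jordan-form argument works only in finite dimensions, but it delivers range and kernel explicitly in a single computation, which is all the paper needs.

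One small correction: do not cite Theorem~\ref{prop:neumann-resolvent} for the block expansion. Its hypothesis $\|N_b\|<|z-\mu_b|$ can fail at points of $\Gamma$ close to $\mu_b$. The expansion is instead the exact algebraic identity $(I-E)\sum_{m<k}E^m=I-E^k=I$ for nilpotent $E$ with $E^k=0$, which holds for every $z\neq\mu_b$ with no norm condition.
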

\end{document}